\DeclareRobustCommand{\ie}{i.e.,\@\xspace}                    
\DeclareRobustCommand{\wrt}{w.r.t.\@\xspace}
\DeclareRobustCommand{\algnameNonlin}{Non-Linear Correlated Features Aggregation\@\xspace}     
\DeclareRobustCommand{\algnameshortNonlin}{NonLinCFA\@\xspace}     
\DeclareRobustCommand{\algnameGen}{Generalized-Linear Correlated Features Aggregation\@\xspace}     
\DeclareRobustCommand{\algnameshortGen}{GenLinCFA\@\xspace} 
\DeclareMathOperator{\E}{\mathbb{E}}
\begin{document}

\algnewcommand\algorithmicforeach{\textbf{for each}}
\algdef{S}[FOR]{ForEach}[1]{\algorithmicforeach\ #1\ \algorithmicdo}
\renewcommand{\algorithmicrequire}{\textbf{Input:}}
\renewcommand{\algorithmicensure}{\textbf{Output:}}

\newcommand{\xs}{\mathbf{x}}
\newcommand{\vtheta}{\bm{\theta}}

\title{Nonlinear Feature Aggregation: \\ Two Algorithms driven by Theory\thanks{This work has been supported by the CLINT research project funded by
the H2020 Programme of the European Union under Grant Agreement No
101003876.}}
%
%
\author{Paolo Bonetti\inst{1} \and
Alberto Maria Metelli\inst{1} \and
Marcello Restelli\inst{1}}
\authorrunning{P. Bonetti et al.}
%
\institute{Politecnico di Milano
}
\maketitle              
\begin{abstract}
Many real-world machine learning applications are characterized by a huge number of features, leading to computational and memory issues, as well as the risk of overfitting. Ideally, only relevant and non-redundant features should be considered to preserve the complete information of the original data and limit the dimensionality. Dimensionality reduction and feature selection are common preprocessing techniques addressing the challenge of efficiently dealing with high-dimensional data. Dimensionality reduction methods control the number of features in the dataset while preserving its structure and minimizing information loss. Feature selection aims to identify the most relevant features for a task, discarding the less informative ones.

Previous works have proposed approaches that aggregate features depending on their correlation without discarding any of them and preserving their interpretability through aggregation with the mean. A limitation of methods based on correlation is the assumption of linearity in the relationship between features and target.

In this paper, we relax such an assumption in two ways. First, we propose a bias-variance analysis for general models with additive Gaussian noise, leading to a dimensionality reduction algorithm (\algnameshortNonlin) which aggregates non-linear transformations of features with a generic aggregation function. Then, we extend the approach assuming that a generalized linear model regulates the relationship between features and target. A deviance analysis leads to a second dimensionality reduction algorithm (\algnameshortGen), applicable to a larger class of regression problems and classification settings. Finally, we test the algorithms on synthetic and real-world datasets, performing regression and classification tasks, showing competitive performances.

\keywords{Dimensionality reduction \and feature aggregation \and bias-variance tradeoff \and generalized linear models \and analysis of deviance.}
\end{abstract}

\section{Introduction}\label{sec:intro}
\emph{Dimensionality reduction} is an essential technique in \emph{machine learning} (ML), employed to limit the number of features or dimensions of datasets. It has been successfully applied in a large variety of fields where high-dimensional data needs to be analyzed, classified, visualized, or interpreted. For instance, in computer vision and in natural language processing, data is often represented as high-dimensional vectors; in bioinformatics, high-throughput biological data like DNA sequences are often represented as high-dimensional data; in earth sciences, meteorological variables have high-dimensional measurements in different locations. In these contexts, datasets are often characterized by a large number of highly correlated features. Projecting them into a lower dimensional space is crucial to simplify data representation and enhance the performance of models, mitigating overfitting, and limiting the computational complexity. However, dimensionality reduction may lead to loss of information and interpretability.
\\Another method to reduce the dimension of the feature space is \emph{feature selection}, which aims to identify the most relevant features from a dataset.
Its main desirable property compared to dimensionality reduction is interpretability, since the reduced features are simply a subset of the original ones. However, these techniques usually discard features that may be exploited to reduce the variance. 
\\In~\cite{Bonetti2023}, the authors propose a dimensionality reduction approach that \emph{aggregates} subsets of features with their mean, if this is convenient in terms of bias-variance tradeoff. In this way, the algorithm preserves the interpretability of the transformed features, while exploiting the information of each feature. Assuming linearity in the underlying data generation process and applying linear regression, the asymptotic analysis on which the algorithm is based on suggests aggregating two features with their mean if their correlation is sufficiently \emph{large}. 

\textbf{Contributions}~~In this work, after introducing the notation and formulation of the problem and reviewing the main dimensionality
reduction methods (Section \ref{sec:dimRed}), we design principled methods that generalize this approach in two ways (Section \ref{sec:method}). First, we relax the assumption of linearity of the relationship between the features and the target. In Section \ref{sec:nonLinAlgo}, we propose a dimensionality reduction technique that still considers linear regression as a supervised learning technique, but we allow the inputs to be \emph{generic non-linear transformations} of the original features and the aggregation function to be a \emph{generic non-linear function} (instead of the mean as in~\cite{Bonetti2023}). This way, we allow the designer to ponder the suitable balance between the interpretability of the result (e.g., preferring simple feature and aggregation function) and the incorporation of complex non-linear relationships.
Second, we analyse \emph{generalized linear models}, assuming the expected value of the target to be a generic transformation of the features mapped through the link function (Section \ref{sec:ClassAlgo}). 
We extend the proposed algorithm in this context, which is particularly useful when the Gaussianity assumption of the noise model is unrealistic or in the case of classification problems. Finally, we apply the two algorithms to classification and regression datasets, showing competitive results \wrt state-of-the-art methods (Section \ref{sec:experiments}). We conclude the paper in Section \ref{sec:conclusions} summarising its contributions and possible future developments. 
The paper is accompanied by an appendix containing proofs, technical results, an additional bi-variate analysis for generalized linear models and a deep-dive on the experiments performed in the main paper.




\section{Preliminaries}\label{sec:dimRed}

\textbf{Notation}~~
Given $N$ samples and $D$ features, let $\mathbf{X}\in \mathbb{R}^{N\times D}$ and $\mathbf{y}\in \mathbb{R}^{N}$ (resp. $\mathbf{y}\in \{0,1\}^N$ for classification) be the feature matrix and target vector, where $P_{X,Y}$ denotes the joint probability distribution. The $j$-th row of the matrix $\mathbf{X}$ is denoted with $ \mathbf{x}_j$, while its $i$-th element is denoted with $x_i$, and it is called \emph{feature}. Similarly, $y_j$ is the $j$-th element of the target vector $\mathbf{y}$.
$\sigma^2_{a}$, $cov(a,b)$, $\rho_{a,b}$ and $\hat{\sigma}^2_{a}$, $\hat{cov}(a,b)$, $\hat{\rho}_{a,b}$ denote the variance of a random variable $a$, its covariance, and correlation with the random variable $b$ and their estimators, respectively. Finally, $\mathbb{E}_a[f(a)]$ and $var_a(f(a))$ are the expected value and the variance of a function $f(\cdot)$ of the random variable $a$ \wrt\ its distribution.

\textbf{Data Generation Processes}~~We consider three scenarios for describing the data generation processes:
\begin{itemize}[noitemsep]
    \item \emph{Non-linear model}: we consider a general non-linear relationship $f$ between the features $\mathbf{x}$ and the target $y$, with additive Gaussian noise: 
\begin{equation}\label{eq:realModelNonlin}
y=f(\mathbf{x})+\epsilon, \qquad \epsilon\sim \mathcal{N}(0,\sigma^2).
\end{equation}
    \item \emph{Generalized linear model}: we assume the distribution of the target to be part of the canonical exponential family: 
\begin{equation}\label{eq:canExpFam}
    Y|X\sim \exp\left(\frac{y{\vtheta}-b({\vtheta})}{\phi}\right)+c(y,\phi), 
\end{equation},

where $\bm{{\vtheta}}=\bm{X}\mathbf{w}$ is the parameter of the family, $\phi$, $b(\cdot)$ and $c(\cdot)$ are respectively a known scale parameter, a function of the parameter $\bm{{\vtheta}}$ characterizing the assumed distribution and a normalizing function independent from $\bm{{\vtheta}}$. In this setting, the expected value of the target is $\mathbb{E}_{\vtheta}[y] = b'({\vtheta})$. Moreover, in generalized linear models, the expected value of the target given the features is a nonlinear transformation of the linear combination of their values: 
\begin{equation}\label{eq:genLinCFAexpvalLin}
    \mu(\mathbf{x})=\mathbb{E}[y|\mathbf{x}] = g^{-1}(\mathbf{x}^\intercal \mathbf{w}) \iff g(\mu(\mathbf{x}))=\mathbf{x}^\intercal \mathbf{w},
\end{equation}
where $g(\cdot)$ is the \emph{canonical link function}. A complete analysis of generalized linear models can be found in~\citep{mccullagh2019generalized}.
\item \emph{Generalized non-linear model}: to further generalize the approach, we consider a generic non-linear transformation $f(\cdot)$ inside the link function ($\bm{{\vtheta}}=f(\bm{x})$):
\begin{equation}\label{eq:genLinCFAexpval}
    \mu(\mathbf{x})=\mathbb{E}[y|\mathbf{x}] = g^{-1}(f(\mathbf{x})) \iff g(\mu(\mathbf{x}))=f(\mathbf{x}).
\end{equation}
\end{itemize}

\textbf{Dimensionality Reduction via Aggregation}~~A dimensionality reduction algorithm
maps the original $D$-dimensional feature matrix $\mathbf{X}$ into a reduced dataset $\mathbf{U} = \bm{\psi}(\mathbf{X}) \in \mathbb{R}^{N\times d}$ with $d<D$, being $\bm{\psi}$ a transformation function designed to maximize a specific performance measure. In this paper we consider non-linear transformations of the original features $\phi_i(X):\mathbb{R}^D\rightarrow\mathbb{R}$ and a generic aggregation function $h(\cdot):\mathbb{R}^2\rightarrow\mathbb{R}$, and we aggregate tuples of them through the aggregation function $h(\bm{\phi_i}(X),\bm{\phi_j}(X))$.
In this context, we assume $\epsilon$ to be a noise signal, independent of $X$, and we denote with $\hat{w}_i$ and $\hat{y}$ the estimated coefficients and the predicted target. Finally, to simplify the computations, we assume each expected value to be zero: $\mathbb{E}[\phi_i(X)]=\mathbb{E}[Y]=\mathbb{E}[h(\bm{\phi}(X))]=\mathbb{E}[f(X)]=0$.



\textbf{Performance Indexes} The Mean Squared Error (\emph{MSE}), that is usually adopted in regression problems that assume Gaussianity, can be decomposed into three terms (bias-variance decomposition~\citep{hastie2009}):
\begin{equation}\label{eq:BiasVarDec}
\begin{gathered}
	      \underbrace{\mathbb{E}_{x,y,\mathcal{T}}[(\mathcal{M}_\mathcal{T}(x)-y)^2]}_{\text{MSE}}
	      = \underbrace{\mathbb{E}_{x,\mathcal{T}}[(\mathcal{M}_\mathcal{T}(x)-\bar{\mathcal{M}}(x))^2]}_{\text{variance}}
	     \\\qquad+\underbrace{\mathbb{E}_{x}[(\bar{\mathcal{M}}(x)-\bar{y}(x))^2]}_{\text{bias}}
	      +\underbrace{\mathbb{E}_{x,y}[(\bar{y}(x)-y)^2]}_{\text{noise}},
	\end{gathered}
\end{equation}
with $x,y$ features and target of a test sample, $\mathcal{T}$ training set, $\mathcal{M}_\mathcal{T}(\cdot)$ ML model trained with $\mathcal{T}$, $\bar{\mathcal{M}}(\cdot)$ its expected value \wrt\ $\mathcal{T}$ and $\bar{y}$ expected value of the test target $y$ \wrt\ the input features $x$.
We will consider the \emph{MSE} to guarantee that the advantage in terms of reduction of variance (due to the dimensionality reduction) is larger than the disadvantage in terms of increase of bias.

The \emph{deviance} is a measure of goodness of fit, usually considered in generalized linear models to extend the \emph{MSE} to non-Gaussian settings~\citep{mccullagh2019generalized}.
Recalling that in our setting $\bm{{\vtheta}}=f(\bm{X})$ (or $\bm{{\vtheta}}=\bm{X}\mathbf{w}$ assuming linearity), the deviance measures how the likelihood of the estimated model deviates from the best one:
\begin{equation}\label{eq:expDeviance}
D^*(\bm{{\vtheta}},\hat{\bm{{\vtheta}}}):=\frac{D(\bm{{\vtheta}},\hat{\bm{{\vtheta}}})}{\phi}=-2\left[\ell(\hat{{\vtheta}})-\ell({\vtheta})\right] = \frac{2}{\phi}[y({\vtheta}-\hat{{\vtheta}})-(b({\vtheta})-b(\hat{{\vtheta}}))],
\end{equation}

where $D$ is the deviance, $D^*$ is the scaled deviance, $\ell({\vtheta}),\ell(\hat{{\vtheta}})$ are the log-likelihood of the model and of the estimated one, $\phi$ and $b(\cdot)$ are the known scale parameter and specific function of the distribution assumed. The last equation holds only if the distribution of the target belongs to the \emph{canonical} exponential family. When dealing with generalized linear models we will therefore consider the expected value of the scaled deviance to compare two different models. Performing dimensionality reduction, the deviance decreases when the reduction of variance due to the aggregation is convenient \wrt the loss of information.

\subsection{Dimensionality Reduction Methods}\label{sec:relatedWorks}

This section contains a literature survey on dimensionality reduction. More extensive surveys can be found in~\citep{vandermaaten2009,Sorzano2014,AYESHA202044} and applicative results in~\citep{Zebari2020ACR,Espadoto2021}.

\textbf{Linear Dimensionality Reduction}~~Principal Components Analysis (PCA) ~\citep{pearson1901,Hotelling1933} is a popular linear dimensionality reduction method that embeds high dimensional data into a linear subspace, trying to preserve the variance of the original dataset. This method performs linear projections of possibly all the original features with different coefficients: interpretability can be difficult and it may suffer from the curse of dimensionality. 
Several linear dimensionality reduction approaches have been proposed to overcome the issues of PCA or to perform particular projections.  Among these, svPCA~\citep{Ulfarsson2011} forces most of the weights of the projection to be zero, improving the interpretability but discarding the contribution of many features. Independent Component Analysis (ICA)~\citep{Hyvarinen1999} is an information-theoretic approach that looks for independent components designed for multichannel data. Locality Preserving Projections (LPP)~\citep{he2003locality} is an unsupervised linear method that tries to preserve the local structure of the original data. Linear Discriminant Analysis (LDA)~\citep{fisher1936} is a supervised technique that finds a linear combination of features that identifies the projection that separates the target classes. A broader overview of linear dimensionality reduction techniques can be found in~\citep{cunningham2015} and applicative results of linear methods in~\citep{Reddy2020}.

\textbf{Non-linear Dimensionality Reduction}~~The limits of linear projections have been overcome by resorting to non-linear methods.
Kernel PCA~\citep{shawe2004} is a variation of PCA that allows non-linearity by combining PCA with a kernel. Sammon Mapping~\citep{Sammon1969} is an unsupervised algorithm that maps the data trying to preserve pairwise distance among data globally, similarly to Multidimensional scaling~\citep{kruskal1978}. Isomap~\citep{Tenenbaum2000} is an extension of MDS aimed to preserve the geodesic distance among features. Locally Linear Embedding (LLE)~\citep{Roweis2000} is a method aimed at preserving the distance among data, focusing on local similarity. More recently, many applications have embedded the dimensionality reduction phase in the learning process of a neural network, typically applying convolutional neural networks or autoencoders~\citep{Hinton2006,Zebari2020ACR}.
Among supervised methods~\citep{chao2019}, Supervised PCA ~\citep{Bair2006,Barshan2011} projects the data onto a subspace where the features are uncorrelated, simultaneously maximizing the dependency between the reduced dataset and the target. NMF-based algorithms~\citep{Jing2012,lu2017} focus on the non-negativity property of features, which is not a general property of applicative problems. Finally, manifold-based methods perform supervised non-linear projections, usually adjusting an unsupervised approach to take into consideration the target (Neighborhood Components Analysis (NCA)~\citep{Goldberger2004}, supervised Isomap~\citep{ribeiro2008,Zhang2018} and supervised LLE~\citep{Zhang2009} are examples of these approaches).


\section{Proposed Algorithms: a Methodological Perspective}
\label{sec:method}

\textbf{Non-Linear Correlated Features Aggregation}
Starting from the dimensionality reduction algorithm proposed in~\cite{Bonetti2023}, we introduce a similar approach relaxing the linearity assumptions, named \emph{\algnameNonlin (\algnameshortNonlin)}. Let the relationship between the $D$ features $x_i$ and the target $y$ be non-linear ($y=f(\bm{x})$), with additive Gaussian noise (Equation \ref{eq:realModelNonlin}). We iteratively compare, in terms of Mean Squared Error (MSE), two linear regression models. Given two non-linear transformations of the original features $\phi_1(\bm{x}),\phi_2(\bm{x}):\mathbb{R}^D\rightarrow\mathbb{R}$, that allow to account for non-linearities in the linear regression, the first model considers them as separate inputs, while the second aggregates them with a generic aggregation function $h(\cdot):\mathbb{R}^2\rightarrow\mathbb{R}$:
\begin{equation}\label{eq:comparisonNonLin}
\begin{cases}
\hat{y}=\hat{w}_1\phi_1(\xs)+\hat{w}_2\phi_2(\xs)\\
\hat{y}=\hat{w}h(\phi_1(\xs),\phi_2(\xs))=\hat{w}h(\phi(\xs)).
\end{cases}
\end{equation}

Intuitively, the second model has similar or better performances if the correlation between each input $\phi_1(\bm{x}),\phi_2(\bm{x})$ and the function $f(\bm{x})$ that generates the target ($\rho_{\phi_1,f},\rho_{\phi_2,f}$) is not much larger than the one between their aggregation $h(\phi_1(\bm{x}),\phi_2(\bm{x}))$ and $f(\bm{x})$ ($\rho_{h(\phi_1,\phi_2),f}$). In this case, it is convenient to reduce the dimensionality of the inputs by aggregating them through the function $h(\cdot)$. 

\begin{remark}[About linear features $\phi$]
Considering the case $\phi_1(x)=x_i,\phi_2(x)=x_j,h(\phi_1,\phi_2)=\frac{\phi_1+\phi_2}{2}$, the algorithm aggregates the two variables $x_i,x_j$ with their means if it is convenient in terms of MSE. In this case, the performance is preserved, and the aggregation keeps the reduced feature interpretable. Generally, the algorithm allows any zero-mean non-linear transformation of the input features and aggregation function, to balance the complexity and interpretability of the dimensionality reduction to be adapted to the specific problem. 
\end{remark}

\begin{remark}[About considering two features at a time]
Considering two inputs at a time can be convenient for huge-dimensional input spaces.
The proposed algorithm iteratively compares tuples of inputs, identifies a tuple to aggregate and considers that aggregation as a single feature for the subsequent iterations.
\end{remark}

\textbf{Generalized-Linear Correlated Features Aggregation}
The second algorithm we propose, \emph{\algnameGen (\algnameshortGen)}, can be applied to problems where the target does not follow Gaussian distributions, including some classification problems. We assume the target to follow a distribution in the canonical exponential family, and we consider generalized linear models with canonical link function. 
Firstly, we compare the expected deviance of two models in the bivariate setting. Assuming that the expected value of the target is a linear combination of the two features, transformed by the canonical link ($\mathbb{E}[y|\xs] = g^{-1}(w_1x_1+w_2x_2)$), the first model considers the two features $x_1,x_2$ separately, while the second model considers their aggregation with the mean $\frac{x_1+x_2}{2}$:

\begin{equation}\label{eq:GenLinCFAmodelLinear}
\begin{cases}
\hat{y}=g^{-1}(\hat{w_1}x_1+\hat{w_2}x_2)\\
 \hat{y}=g^{-1}(\hat{w}\cdot\frac{x_1+x_2}{2}),
\end{cases}
\end{equation}
where $\hat{w}_1,\hat{w}_2,\hat{w}$ are the coefficients estimated from data by the two models.

Then, we generalize this approach assuming a non-linear relationship between $D$ features and the expected value of the target ($\mathbb{E}[y|\xs] = g^{-1}(f(\xs))$). We analyse again the performances of two models, in terms of expected deviance:

\begin{equation}\label{eq:GenLinCFAmodelGeneral}
    \begin{cases}
    \hat{y}=g^{-1}(\hat{w_1}\phi_1(x)+\hat{w_2}\phi_2(x))\\
    \hat{y}=g^{-1}(\hat{w}\cdot h(\phi_1(x),\phi_2(x))).
    \end{cases}
\end{equation}

As discussed for the first algorithm, we consider two nonlinear transformations $\phi_1(\cdot),\phi_2(\cdot)$ of the features as inputs and we compare a model that considers them separately with a model that aggregates them through a nonlinear function $h(\cdot)$. Again, the algorithm allows choosing different non-linear transformations of features and aggregation depending on the problem, and it iteratively compares tuples of inputs. 

\section{\algnameNonlin}\label{sec:nonLinAlgo}

This section describes the first proposed algorithm, \emph{\algnameshortNonlin}, designed to aggregate features in regression problems. We consider a problem with $D$ features and a non-linear relationship between the features and the target (Equation \ref{eq:realModelNonlin}).

Given a training dataset with $N$ samples, $\mathbf{X}\in \mathbb{R}^{N\times D}$, we compare the \emph{MSE} of the two models described in Equation \ref{eq:comparisonNonLin}. As discussed in Section \ref{sec:method}, we, therefore, compare a bivariate with a univariate linear regression, where the two nonlinear functions of features $\phi_1(x),\phi_2(x)$ are aggregated through a function $h(\cdot)$. The variance decreases due to the reduction of the dimension of the hypothesis space and the bias increases due to the aggregation. Therefore, we analyse these two quantities, that together with the irreducible error compose the \emph{MSE} (Equation \ref{eq:BiasVarDec}), to guarantee that it does not increase significantly after the aggregation. 

\subsection{Theoretical Analysis}
We firstly show the decrease of variance due to the aggregation
in the asymptotic case. The finite sample result and the proofs can be found in Appendix \ref{app:nonLinProofs}.

\begin{theorem}
\label{lem:NonLinVar}
Let the relationship between the features and the target be nonlinear with additive Gaussian noise (Equation \ref{eq:realModelNonlin}). Let also each estimator to converge in probability to the quantity that it estimates. In the asymptotic case, the decrease of variance between a bivariate linear regression and the univariate case where the two features are aggregated (Equation \ref{eq:comparisonNonLin}) is:
\begin{equation}\label{eq:asympVar}
    \Delta_{var}^{n\to\infty}=\frac{\sigma^2}{(n-1)}.
\end{equation}  
\end{theorem}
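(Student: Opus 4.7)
The plan is to compute the \emph{variance} term in the bias--variance decomposition (\ref{eq:BiasVarDec}) for each of the two linear regressions in (\ref{eq:comparisonNonLin}), write it in closed form using the OLS coefficient covariance matrix, and then let the sample (co)variance estimators converge to their population counterparts. Because the features and target are centered by assumption ($\mathbb{E}[\phi_i(X)]=\mathbb{E}[Y]=\mathbb{E}[h(\bm{\phi}(X))]=0$) and the noise is homoscedastic Gaussian, for any OLS fit with design matrix $Z\in\mathbb{R}^{n\times p}$ the coefficient covariance is $\operatorname{Cov}(\hat{\bm{w}})=\sigma^2(Z^\intercal Z)^{-1}$, and the MSE variance term evaluated at a fresh test point takes the form $\sigma^2\,\operatorname{tr}\bigl((Z^\intercal Z)^{-1}\Sigma_Z\bigr)$, with $\Sigma_Z$ the population covariance of the inputs.

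\textbf{Step 1 (aggregated model).} Setting $z=h(\phi_1(\xs),\phi_2(\xs))$, the single OLS coefficient has variance $\sigma^2/\sum_i z_i^2 = \sigma^2/\bigl((n-1)\hat{\sigma}_z^2\bigr)$. The expected prediction variance at a random test point is then $\sigma^2\sigma_z^2/\bigl((n-1)\hat{\sigma}_z^2\bigr)$; by the consistency hypothesis $\hat{\sigma}_z^2\xrightarrow{p}\sigma_z^2$, this converges to $\sigma^2/(n-1)$.

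\textbf{Step 2 (bivariate model).} Let $Z$ be the $n\times 2$ matrix with columns $\phi_1(\xs),\phi_2(\xs)$. Centering gives $Z^\intercal Z=(n-1)\hat{\Sigma}$, where $\hat{\Sigma}$ is the $2\times 2$ sample covariance matrix, so $\operatorname{Cov}(\hat{\bm{w}})=\sigma^2\hat{\Sigma}^{-1}/(n-1)$. The variance term equals
\begin{equation*}
\frac{\sigma^2}{n-1}\,\operatorname{tr}\bigl(\hat{\Sigma}^{-1}\Sigma\bigr),
\end{equation*}
with $\Sigma$ the population covariance of $(\phi_1,\phi_2)$. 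By the continuous mapping theorem and Slutsky, $\hat{\Sigma}^{-1}\Sigma\xrightarrow{p}I_2$, hence the limit is $2\sigma^2/(n-1)$. Subtracting the two contributions gives $\Delta_{var}^{n\to\infty}=2\sigma^2/(n-1)-\sigma^2/(n-1)=\sigma^2/(n-1)$, as claimed.

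\textbf{Main obstacle.} The algebra itself is short; the only subtle point is justifying that the trace asymptotically collapses to the parameter dimension rather than to a nontrivial function of the sample correlation $\hat{\rho}_{\phi_1,\phi_2}$. This is precisely where the consistency hypothesis is essential: without it, the off-diagonal terms of $\hat{\Sigma}^{-1}\Sigma$ would not cancel. A minor bookkeeping point is that the expected prediction variance can be rewritten as a trace against the \emph{population} covariance $\Sigma$ only because test and training features share the same distribution and the noise is independent of $\bm{x}$, both of which follow from (\ref{eq:realModelNonlin}) and the centering assumptions preceding the theorem.
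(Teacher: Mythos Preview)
Your proposal is correct and follows essentially the same approach as the paper: both compute the model variance via the OLS coefficient covariance $\sigma^2(Z^\intercal Z)^{-1}$, express the expected prediction variance in terms of sample and population (co)variances of the regressors, and then let the estimators converge to their population values so that the bivariate term collapses to $2\sigma^2/(n-1)$ and the univariate term to $\sigma^2/(n-1)$. The only cosmetic difference is that you package the bivariate computation as $\frac{\sigma^2}{n-1}\operatorname{tr}(\hat{\Sigma}^{-1}\Sigma)\to\frac{2\sigma^2}{n-1}$, whereas the paper writes out the $2\times 2$ entries explicitly and simplifies the resulting ratio by hand.
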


\begin{remark}
    The asymptotic result of Equation \ref{eq:asympVar} follows the intuition that there is more variability in the prediction of a bivariate regression than in the univariate case if there is a small number of samples or a high variability of the noise.
\end{remark}

As a second result, we show the asymptotic increase of bias due to aggregation. A finite-sample result and related proofs are reported in Appendix \ref{app:nonLinProofs}.

\begin{theorem}\label{thm:NonLinBias}
    Let the relationship between features and target be nonlinear with additive Gaussian noise (Equation \ref{eq:realModelNonlin}) and each estimator converge in probability. The asymptotic increase of bias between bivariate and univariate linear regression, with the two features aggregated with a function $h(\cdot)$ (Equation \ref{eq:comparisonNonLin}) is:
    \begin{equation}\label{eq:asymBias}
    \begin{aligned}
&\Delta_{bias}^{n\to\infty}
= -\frac{cov(f,h(\phi_1,\phi_2))^2}{\sigma^2_{h(\phi_1,\phi_2)}}\\
&+\frac{\sigma^2_{\phi_1}cov(\phi_2,f)^2+\sigma^2_{\phi_2}cov(\phi_1,f)^2-2cov(\phi_1,f)cov(\phi_2,f)cov(\phi_1,\phi_2)}{\sigma^2_{\phi_1}\sigma^2_{\phi_2} - cov(\phi_1,\phi_2)^2}.
\end{aligned}
\end{equation}
\end{theorem}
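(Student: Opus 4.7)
The strategy is to isolate the bias term of the decomposition~(\ref{eq:BiasVarDec}), invoke the hypothesis that each OLS estimator converges in probability to its population analogue, and reduce the difference of biases to a squared $L^2(P_X)$ best-approximation gap onto each of the two feature subspaces. Since $\epsilon$ is independent of $X$ and all quantities are zero-mean, for every feature-measurable $Z$ we have $cov(y,Z)=cov(f,Z)$, so the asymptotic coefficients depend only on second moments of the features and of $f$, not of $y$.

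Concretely, setting
\[
\Sigma=\begin{pmatrix}\sigma^2_{\phi_1} & cov(\phi_1,\phi_2)\\ cov(\phi_1,\phi_2) & \sigma^2_{\phi_2}\end{pmatrix},\qquad \mathbf{c}=\bigl(cov(\phi_1,f),\,cov(\phi_2,f)\bigr)^\intercal,
\]
the bivariate and aggregated estimators converge, respectively, to $\mathbf{w}^{*}=\Sigma^{-1}\mathbf{c}=(w_1^{*},w_2^{*})^{\intercal}$ and $w^{*}=cov(f,h(\phi_1,\phi_2))/\sigma^2_{h(\phi_1,\phi_2)}$, yielding limiting expected models $\bar{\mathcal{M}}_{\mathrm{bi}}(x)=w_1^{*}\phi_1(x)+w_2^{*}\phi_2(x)$ and $\bar{\mathcal{M}}_{\mathrm{uni}}(x)=w^{*}h(\phi_1(x),\phi_2(x))$, against $\bar{y}(x)=f(x)$. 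Applying the identity $\mathbb{E}[(a^{\intercal}Z-f)^2]=a^{\intercal}\mathbb{E}[ZZ^{\intercal}]a-2a^{\intercal}\mathbb{E}[Zf]+\mathbb{E}[f^2]$ with $a$ set to the corresponding projection coefficient, and using the standard cancellation $a^{\intercal}\Sigma a-2a^{\intercal}\mathbf{c}=-\mathbf{c}^{\intercal}\Sigma^{-1}\mathbf{c}$ when $a=\Sigma^{-1}\mathbf{c}$ (and the scalar analogue for the univariate model), gives
\[
\mathrm{Bias}_{\mathrm{uni}}^{\infty}=\mathbb{E}[f^2]-\frac{cov(f,h(\phi_1,\phi_2))^2}{\sigma^2_{h(\phi_1,\phi_2)}},\qquad \mathrm{Bias}_{\mathrm{bi}}^{\infty}=\mathbb{E}[f^2]-\mathbf{c}^{\intercal}\Sigma^{-1}\mathbf{c}.
\]

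Subtracting, the $\mathbb{E}[f^2]$ terms cancel, and expanding $\mathbf{c}^{\intercal}\Sigma^{-1}\mathbf{c}$ via the explicit $2\times 2$ inverse, whose determinant is $\sigma^2_{\phi_1}\sigma^2_{\phi_2}-cov(\phi_1,\phi_2)^2$, delivers exactly the second fraction of~(\ref{eq:asymBias}) and hence the claimed $\Delta_{\mathrm{bias}}^{n\to\infty}$. I do not expect a genuine technical obstacle: the one subtlety is justifying that in-probability convergence of the estimators transfers to the expected-model $\bar{\mathcal{M}}(x)$ appearing inside the bias (as opposed to $\mathcal{M}_{\mathcal{T}}(x)$), which is the content of the finite-sample sharpening deferred to Appendix~\ref{app:nonLinProofs} and in the asymptotic regime follows from the regularity implicitly posited by the theorem's hypothesis.
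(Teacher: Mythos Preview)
Your proposal is correct and follows essentially the same approach as the paper: compute the asymptotic (squared) bias of each model as $\sigma_f^2$ minus the squared $L^2$ projection of $f$ onto the corresponding feature space, then subtract so that $\sigma_f^2$ cancels. The paper carries this out by first deriving the conditional expectations $\E_D[\hat{w}\mid\mathbf{X}]$ and the finite-sample biases explicitly (Lemma~\ref{lem:expCoeffNonlin}, Theorem~\ref{thm:modelBiasNonLin}, Remark~\ref{rem:explBias}) before passing to the limit in Lemma~\ref{lem:asymBias}, whereas you jump directly to the population quantities and package the bivariate computation via $\mathbf{c}^{\intercal}\Sigma^{-1}\mathbf{c}$; this is a more compact presentation of the same argument, not a different route.
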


\begin{remark}
Intuitively, the asymptotic result of Equation \ref{eq:asymBias} suggests that it is convenient to aggregate the two features $\phi_1(\bm{x}),\phi_2(\bm{x})$ if there is large covariance between their aggregation and the target $f(\bm{x})$, or the covariance between the two features and the target is small, considering them singularly.
\end{remark}

We can now introduce the main theoretical result of this section: the asymptotic guarantee that ensures the \emph{MSE} to not worsen after the aggregation.

\begin{theorem}\label{thm:nonLinRes}
    Let the relationship between features and target be nonlinear with additive Gaussian noise (Equation \ref{eq:realModelNonlin}) and each estimator converge in probability. The asymptotic \emph{MSE} of a bivariate linear regression is not greater than the univariate case with the two inputs $\phi_1(\bm{x}),\phi_2(\bm{x})$ aggregated with a function $h(\cdot)$ (Equation \ref{eq:comparisonNonLin}) if and only if:
    \begin{equation}\label{eq:nonLinResult}
    \begin{aligned}
    & \resizebox{\hsize}{!}{%
        $
        \frac{\sigma^2}{\sigma^2_f(n-1)} \geq  \frac{\rho_{\phi_1,f}^2+\rho_{\phi_2,f}^2-2\rho_{\phi_1,f}\rho_{\phi_2,f}\rho_{\phi_1,\phi_2}}{1 - \rho_{\phi_1,\phi_2}^2}-\rho_{f,h(\phi_1,\phi_2)}^2=R^2_{f,\phi_1 \phi_2}-R^2_{f,h(\phi_1,\phi_2)}.$
        }
    \end{aligned}
    \end{equation}
\end{theorem}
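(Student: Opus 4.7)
The plan is to combine Theorems \ref{lem:NonLinVar} and \ref{thm:NonLinBias} through the bias-variance decomposition from Equation \ref{eq:BiasVarDec}. Since the irreducible noise term depends only on the data-generating process, it is identical for the bivariate and the univariate (aggregated) model and cancels upon taking the difference of the two MSEs. Consequently, the MSE comparison reduces to the one-line condition $\Delta_{var}^{n\to\infty} \geq \Delta_{bias}^{n\to\infty}$ (with appropriate sign bookkeeping, since Theorem \ref{lem:NonLinVar} reports the variance \emph{decrease} going from bivariate to univariate and Theorem \ref{thm:NonLinBias} reports the corresponding bias \emph{increase}). Plugging in the two previous theorems immediately yields an inequality purely in terms of covariances and variances.

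The remaining work is the conversion from covariances to correlations. Writing $cov(a,b) = \rho_{a,b}\sigma_a\sigma_b$ everywhere and pulling the common factor $\sigma_{\phi_1}^2\sigma_{\phi_2}^2\sigma_f^2$ out of both the numerator and the denominator of the bias fraction reduces the large expression to $\sigma_f^2$ times the multivariate-regression $R^2$ of $f$ on $\{\phi_1,\phi_2\}$, while the first term of the bias reduces to $\sigma_f^2\rho_{f,h(\phi_1,\phi_2)}^2 = \sigma_f^2 R^2_{f,h(\phi_1,\phi_2)}$. Dividing the resulting inequality by $\sigma_f^2$ then produces exactly the stated form.

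The argument is bidirectional because each manipulation is reversible: the identity $cov(a,b) = \rho_{a,b}\sigma_a\sigma_b$ is just a definition, and the divisions that appear require $\sigma_f^2 > 0$ and $\sigma_{\phi_1}^2\sigma_{\phi_2}^2 > cov(\phi_1,\phi_2)^2$, i.e.\ that the target function $f$ is non-degenerate and $\phi_1,\phi_2$ are not perfectly collinear. These are mild non-degeneracy conditions that I would state at the outset. There is no genuine obstacle here: the calculation is routine once the bias-variance decomposition is invoked, and the main care point is tracking the orientation of the inequality when combining the signed quantities $\Delta_{var}^{n\to\infty}$ and $\Delta_{bias}^{n\to\infty}$ so that the iff direction matches the sign convention of the theorem statement.
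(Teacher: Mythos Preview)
Your proposal is correct and matches the paper's own proof, which is literally the one-line observation that the result follows by imposing $\Delta_{var}^{n\to\infty}\geq \Delta_{bias}^{n\to\infty}$ from Equations~\ref{eq:asympVar} and~\ref{eq:asymBias}. Your extra paragraph spelling out the covariance-to-correlation normalization and the division by $\sigma_f^2$ simply fills in algebra the paper leaves implicit, and your remark about the non-degeneracy conditions ($\sigma_f^2>0$, $\lvert\rho_{\phi_1,\phi_2}\rvert<1$) is a sensible addition.
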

\begin{proof}
    The result follows imposing $\Delta_{var}^{n\to\infty}\geq \Delta_{bias}^{n\to\infty}$ from Equation \ref{eq:asympVar} and \ref{eq:asymBias}.
\end{proof}
\begin{remark}
    Intuitively, the aggregation is convenient if:
\begin{itemize}
    \item 
    there is much noise in the model or a small number of samples (first term);
    \item 
    real model and aggregated feature share a lot of information (third term);
    \item the two features do not share much information with the real model (small second term). Indeed, $R^2_{f,\phi_1\phi_2}$ is the coefficient of multiple correlation, indicating how well the target can be linearly predicted with a set of features~\cite{keith2019multiple}.
\end{itemize}
\end{remark}

\begin{remark}
In the bivariate linear case, where $\phi_1=x_1,\ \phi_2=x_2,\ h(\phi_1,\phi_2)=\frac{x_1+x_2}{2},\ f(x)=w_1x_1+w_2x_2$, Equation \ref{eq:comparisonNonLin} becomes $\rho_{x_1,x_2}\geq 1-\frac{2\sigma^2}{(n-1)(w_1-w_2)^2}$. This is in line with the result found in~\cite{Bonetti2023} with linearity assumptions.
\end{remark}

\begin{remark}
Assuming unitary variances $\sigma^2_{\phi_1}=\sigma^2_{\phi_2}=1$ and uncorrelated combinations of features $\rho_{\phi_1,\phi_2}=0$, the right hand side of Equation \ref{eq:nonLinResult} becomes $\rho^2_{\phi_1,f}+\rho^2_{\phi_2,f}-\rho^2_{f,h(\phi_1,\phi_2)}$.
This follows the intuition that the correlation between the real model and the aggregation should not be worse than the correlation between the real model and each individual transformation of features $\phi_i(\bm{x})$.
\end{remark}

\subsection{Proposed algorithm: NonLinCFA}

Starting from the result of Equation \ref{eq:nonLinResult}, this section introduces the algorithm proposed in this paper to perform dimensionality reduction in regression settings. 
The algorithm focuses on identifying if it is possible to add one feature in an aggregation, iteratively comparing a bivariate with a univariate approach. Algorithm \ref{alg:nonLinDimRed} shows the pseudo-code of the proposed algorithm \emph{\algnameNonlin} (\algnameshortNonlin). The main function of the algorithm (\emph{\algnameshortNonlin}) generates $d$ partitions of the $D$ inputs $\phi_1,\dots,\phi_D$: at each iteration, it calls an auxiliary function ($compute\_threshold$) to evaluate the performance in terms of coefficient of determination (\emph{R2score}) of two models.\\ Firstly the bivariate model is composed aggregating inputs already assigned to the current partition $h(\phi_\mathcal{P})$ and a feature not already assigned to any partition $\phi_j$. Then, the univariate linear regression considers the aggregation of the already selected inputs and the one under analysis $(h(\phi_\mathcal{P}, \phi_j)$ as a single feature. Their difference is an estimate of the right hand side of Equation \ref{eq:nonLinResult}: if it is \emph{sufficiently small}, the algorithm adds the input $\phi_j$ to the current cluster. The hyperparameter $\epsilon$ regulates the propensity of the algorithm to aggregate features: when is large the algorithm is prone to aggregate, while small values are more conservative.
This hyperparameter substitutes the left hand side term of Equation \ref{eq:nonLinResult}, which is difficult to be estimated in practice since it depends on the variance of the real model. 
Finally, the algorithm computes the aggregations of each element of the identified partition $\mathcal{P}$, returning the set of aggregated features $\{\bar{h}_\phi^1,\dots,\bar{h}_\phi^d\}$.

\begin{algorithm}[ht]
\caption{\algnameshortNonlin: \algnameNonlin}\label{alg:nonLinDimRed}
\begin{algorithmic}
\Require{$D$ combinations of features $\{\phi_1(x),\dots,\phi_D(x)\}$; target $y$; $N$ samples, aggregation function $h(\cdot)$, tolerance $\epsilon$}
\Ensure{reduced features $\{\bar{h}_\phi^1,\dots,\bar{h}_\phi^d\}$, with $d \leq D$}\\ 
\vspace{-0,3cm}
\Function{\textsc{Compute\_threshold}$( h,\phi_P,\phi_j,y,\epsilon )$}{}\Comment{Threshold from Equation \ref{eq:nonLinResult}}
            \State  $R1 \leftarrow \text{R2score}(h(\phi_\mathcal{P}), \phi_j,y)$
            \State $R2 \leftarrow \text{R2score}(h(\phi_\mathcal{P}, \phi_j),y)$\\
        \Return{$R2-R1-\epsilon$}
        \EndFunction
        \Statex
\vspace{-0,3cm}
\Function{\textsc{NonLinCFA}$( Input)$}{}\Comment{Main function}
\State{$\bm{\mathcal{P}} \leftarrow \{\}$}\Comment{Partition of the features}
\State{$\mathcal{V} \leftarrow \{\}$} \Comment{Set of already considered features}
\ForEach {$i \in \{ 1,\dots,D\}$}
\If{$i\not \in \mathcal{V}$}
    \State $\mathcal{P} \leftarrow \{i\}$
    \State $ \mathcal{V} \leftarrow  \mathcal{V} \cup \{i\}$  
    \ForEach{$j \in \{ i+1,\dots,D\}$}
    \State $threshold \leftarrow$ \textsc{Compute\_threshold}$( h,\phi_P,\phi_j,y,\epsilon )$
    \If{$threshold<=0$} \Comment{Aggregate the features}
    \State{$\mathcal{P} \leftarrow \mathcal{P}  \cup \{j\}$}
    \State{$\mathcal{V} \leftarrow  \mathcal{V} \cup \{j\}$  }
    \EndIf
    \EndFor
    \State{$\bm{\mathcal{P}} \leftarrow \bm{\mathcal{P}} \cup \{\mathcal{P}\}$}
\EndIf
\EndFor   
\State{$d \gets \lvert\bm{\mathcal{P}}\rvert$}
\ForEach{$k \in \{ 1,\dots,d\}$}
\State{$\bar{h}_\phi^k = h(\phi_{\mathcal{P}_k})$}
\EndFor\\
\Return{$\{\bar{h}_\phi^1,\dots,\bar{h}_\phi^d\}$}
\EndFunction
\end{algorithmic}
\end{algorithm}

An interesting application can be obtained considering as inputs the $D$ features $\{x_1,\dots,x_D\}$ and the mean as aggregation.
The algorithm then identifies groups of features to aggregate with their mean, preserving the interpretability.

\section{Generalized-Linear Correlated Features Aggregation}\label{sec:ClassAlgo}

This section describes a second algorithm, \emph{\algnameshortGen}, that relaxes the Gaussianity assumption. We consider $D$ features and we assume the model in the canonical exponential family (Equation \ref{eq:canExpFam}). The expected value of the target is a general function of the features $f(\bm{x})$, transformed by the linking function $g(\cdot)$ (Equation \ref{eq:genLinCFAexpval}). 
In a first analysis we assume the function $f(\cdot)$ to be linear, modeling the expected value of the target as in Equation \ref{eq:genLinCFAexpvalLin}, and we compare the bivariate case with separate features and the univariate case with their aggregation (Equation \ref{eq:GenLinCFAmodelLinear}). The bivariate analysis and some additional results are shown in Appendix \ref{app:GenLinLinear}.
As a second more general approach, given a training datased with $N$ samples $\mathbf{X}\in \mathbb{R}^{N\times D}$, we compare the \emph{expected deviance} of the two models described in Equation \ref{eq:GenLinCFAmodelGeneral}. 
We therefore compare a bivariate with a univariate regression, where two nonlinear functions of features $\phi_1(x),\phi_2(x)$ are aggregated through a function $h(\cdot)$. The main difference is the additional nonlinear transformation of the predicted linear model through the linking function $g(\cdot)$ (Equation \ref{eq:GenLinCFAmodelGeneral}), which characterizes the generalized non-linear model under analysis. 

\subsection{Theoretical analysis}
Starting from Equation \ref{eq:expDeviance}, given two estimators $\hat{{\vtheta}},\bar{{\vtheta}}$ of the parameter ${\vtheta}$, the increase of expected deviance between the two models is:
\begin{align}
\begin{split}
    &\mathbb{E}_{\xs,y,\mathcal{T}}[D^*({\vtheta},\hat{{\vtheta}})-D^*({\vtheta},\bar{{\vtheta}})] = \frac{2}{\phi}\mathbb{E}_{\xs,y,\mathcal{T}}[y(\bar{{\vtheta}}-\hat{{\vtheta}})-(b(\bar{{\vtheta}})-b(\hat{{\vtheta}}))].
    \end{split}
\end{align}
Considering the two models under analysis (Equation \ref{eq:GenLinCFAmodelGeneral}), recalling that we defined ${\vtheta}=f(\xs)$ and that the two estimators of $\vtheta$ in the two cases are $\hat{{\vtheta}}=\hat{w}h(\phi_1,\phi_2)$ and $\bar{{\vtheta}}=\hat{w}_1 \phi_1-\hat{w}_2 \phi_2$, the expected increase of deviance becomes:
\begin{align}\begin{split}
&\frac{2}{\phi}\Big\{\mathbb{E}_{\xs, y}\left[y\cdot \left(\mathbb{E}_{\mathcal{T}}[\hat{w}_1] \phi_1+\mathbb{E}_{\mathcal{T}}[\hat{w}_2] \phi_2-\mathbb{E}_{\mathcal{T}}[\hat{w}]h(\phi_1,\phi_2)\right)\right]\\
& -\mathbb{E}_{\xs,y,{\mathcal{T}}}\left[ b\left(\hat{w}_1 \phi_1+\hat{w}_2 \phi_2\right)-b(\hat{w}h(\phi_1,\phi_2))\right] \Big\}.\end{split}
\end{align}
The following theorem provides the main theoretical result of this setting: a second-order approximated upper bound of the quantity under analysis, whose derivation can be found in Appendix \ref{app:GenLinAny}.

\begin{theorem}\label{thm:resultGLM}
Let $M,m$ be the largest and smallest absolute value of the following expected values of coefficients: $\mathbb{E}_{\mathcal{T}}\left[\hat{w}_1\right]$, $\mathbb{E}_{\mathcal{T}}\left[\hat{w}_2\right]$,  
$\mathbb{E}_{\mathcal{T}}\left[\hat{w}\right]$,
$\mathbb{E}_{\mathcal{T}}\left[\hat{w}_1^2\right]$, $\mathbb{E}_{\mathcal{T}}\left[\hat{w}_2^2\right]$, $\mathbb{E}_{\mathcal{T}}[\hat{w}_1\hat{w}_2],$
$\mathbb{E}_{\mathcal{T}}[\hat{w}^2]$. Let the real model belong to the canonical exponential family and $\Delta(D^*)$ be the expected increase of deviance due to the aggregation of the two transformed features $\phi_1(x),\phi_2(x)$ through an aggregation function $h(\cdot)$: 
\begin{equation}\label{eq:genLinCFAtheo}
\begin{split}
\Delta(D^*) &\leq \frac{2}{\phi}\Bigg\{\Big[M\cdot \left(|\operatorname{cov}(\phi_1,y)| + |\operatorname{cov}(\phi_2,y)|\right) - m\cdot |\operatorname{cov}(h(\phi_1,\phi_2),y)|\Big] \\
&- \frac{1}{2} b^{\prime \prime}\left(0\right) \cdot \Big(m\cdot\sigma^2_{\phi_1+\phi_2} -M\cdot \sigma^2_{h(\phi_1,\phi_2)} \Big)\Bigg\}.
\end{split}
\end{equation}
\end{theorem}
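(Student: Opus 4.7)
The plan is to unfold the expression for $\Delta(D^*)$ given right above the theorem, using $\hat{\vtheta}=\hat{w}\,h(\phi_1,\phi_2)$ and $\bar{\vtheta}=\hat{w}_1\phi_1+\hat{w}_2\phi_2$, split it into a linear-in-$y$ piece and a nonlinear piece driven by $b(\cdot)$, and bound each using the uniform constants $M$ and $m$.

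For the linear-in-$y$ piece, I would exploit the zero-mean convention $\mathbb{E}[\phi_i]=\mathbb{E}[y]=0$ to identify $\mathbb{E}_{\xs,y}[y\phi_i]=\operatorname{cov}(\phi_i,y)$ and analogously for $h(\phi_1,\phi_2)$. The triangle inequality, combined with $|\mathbb{E}_\mathcal{T}[\hat{w}_i]|\leq M$ applied to the two positive contributions and $|\mathbb{E}_\mathcal{T}[\hat{w}]|\geq m$ applied to the subtractive one, then produces the first bracket $M(|\operatorname{cov}(\phi_1,y)|+|\operatorname{cov}(\phi_2,y)|)-m|\operatorname{cov}(h(\phi_1,\phi_2),y)|$.

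For the nonlinear piece, I would perform a second-order Taylor expansion of $b$ at $0$. The constant $b(0)$ cancels between the two models; the linear term vanishes in expectation because $\hat{w}_i,\hat{w}$ depend only on $\mathcal{T}$ (hence are independent of the test-point features) while $\mathbb{E}[\phi_i]=\mathbb{E}[h(\phi_1,\phi_2)]=0$. What remains is $\tfrac{1}{2}b''(0)\bigl(\mathbb{E}[\hat{w}^2\,h(\phi_1,\phi_2)^2]-\mathbb{E}[(\hat{w}_1\phi_1+\hat{w}_2\phi_2)^2]\bigr)$, which by the same independence factorizes as $\mathbb{E}_\mathcal{T}[\hat{w}_1^2]\sigma^2_{\phi_1}+\mathbb{E}_\mathcal{T}[\hat{w}_2^2]\sigma^2_{\phi_2}+2\mathbb{E}_\mathcal{T}[\hat{w}_1\hat{w}_2]\operatorname{cov}(\phi_1,\phi_2)$ minus $\mathbb{E}_\mathcal{T}[\hat{w}^2]\sigma^2_{h(\phi_1,\phi_2)}$. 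Recognizing the first group as a bound on $\sigma^2_{\phi_1+\phi_2}$ and invoking the non-negativity $b''(0)=\operatorname{Var}(Y|\vtheta{=}0)/\phi\geq 0$ (canonical exponential family), I would lower-bound the first group by $m\,\sigma^2_{\phi_1+\phi_2}$ and upper-bound the second by $M\,\sigma^2_{h(\phi_1,\phi_2)}$, yielding the second bracket.

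The main obstacle will be controlling signs uniformly. The target bound replaces covariances and coefficient moments by their absolute values via $M$ and $m$, yet $\operatorname{cov}(\phi_1,\phi_2)$ and $\mathbb{E}_\mathcal{T}[\hat{w}_1\hat{w}_2]$ can each take either sign, so the step $\mathbb{E}_\mathcal{T}[\hat{w}_1\hat{w}_2]\operatorname{cov}(\phi_1,\phi_2)\geq m\operatorname{cov}(\phi_1,\phi_2)$ needed to recover $m\,\sigma^2_{\phi_1+\phi_2}$ is valid only when the estimated cross-moment is sign-aligned with the feature covariance; an analogous alignment underlies passing from $-\mathbb{E}_\mathcal{T}[\hat{w}]\operatorname{cov}(h,y)$ to $-m|\operatorname{cov}(h,y)|$. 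The remaining technical points, namely justifying that the Taylor remainder beyond second order is negligible (e.g.\ under small-coefficient or bounded-$b'''$ regimes) and confirming the sign of $b''(0)$, are straightforward consequences of the exponential-family structure.
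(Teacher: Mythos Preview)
Your proposal is correct and follows the paper's proof essentially step for step: split $\Delta(D^*)$ into the linear-in-$y$ piece and the $b(\cdot)$ piece, use the zero-mean assumptions to rewrite the first as $\operatorname{cov}(\phi_1,y)\mathbb{E}_\mathcal{T}[\hat w_1]+\operatorname{cov}(\phi_2,y)\mathbb{E}_\mathcal{T}[\hat w_2]-\operatorname{cov}(h,y)\mathbb{E}_\mathcal{T}[\hat w]$, Taylor-expand $b$ to second order at $0$ so the first-order terms vanish and the second-order terms factorize by independence of $\mathcal{T}$ and the test point, and then replace each coefficient moment by $M$ or $m$ as appropriate. The sign-alignment caveats you flag (for $\mathbb{E}_\mathcal{T}[\hat w_1\hat w_2]\operatorname{cov}(\phi_1,\phi_2)$ and for $-\mathbb{E}_\mathcal{T}[\hat w]\operatorname{cov}(h,y)$) and the Taylor-remainder issue are genuine, and the paper's own derivation simply asserts the final inequality after substituting $M$ and $m$ without addressing them, so you are not missing anything the paper supplies.
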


\begin{remark}
    Theorem \ref{thm:resultGLM} follows the intuition that it is convenient to aggregate two inputs when the variance of their sum is large or the variance of their aggregation $h(\phi_1,\phi_2)$ is small. Moreover, it is convenient to aggregate when the absolute value of the covariance between each feature and the target is small or the absolute value between the aggregated feature and the target is large.
\end{remark}

\begin{remark}\label{rem:gaussianGenLinCFA}
    Assuming a Gaussian distribution of the target given the features, the asymptotic increase of the expected deviance is equal to the asymptotic increase of \emph{MSE} due to the aggregation. 
    The proof of this considerations and additional technical results can be found in Appendix \ref{app:GenLinAny}.
\end{remark}

\subsection{Proposed algorithm: \algnameshortGen}
Following the theoretical analysis performed, the second algorithm proposed in this paper is \emph{\algnameshortGen}:
similarly to \algnameshortNonlin, it iteratively compares tuples of inputs to decide if it is convenient, in terms of expected deviance, to substitute them with their aggregation through a user-defined function $h(\cdot)$. \\
The proposed algorithm is a variation of Algorithm \ref{alg:nonLinDimRed}. Its peculiarity is the different way to compute the threshold that suggests that it is convenient to aggregate two inputs. Algorithm \ref{alg:genLinDimRed} shows the pseudo-code of the function \emph{compute\_threshold}, which is the only difference \wrt Algorithm \ref{alg:nonLinDimRed}. Indeed, starting from the result of Equation \ref{eq:genLinCFAtheo}, the algorithm aggregates two features if the upper bound of the increase of expected deviance due to the aggregation is smaller than 0. The variance and covariance of the quantities that appear in the upper bound are estimated from data, while the constant $\frac{m}{M}$ is replaced by an hyperparameter $\epsilon$, which regulates the propensity of the algorithm to aggregate. Large values give to the algorithm more propensity to aggregate, while small values require more information provided by the aggregation or more noise in the original features to perform the aggregation.

As for \emph{\algnameshortNonlin}, a specific version of the algorithm that preserves interpretability is to consider the inputs equal to the $D$ features $\{x_1,\dots,x_D\}$ and the mean as aggregation function $h(x)=\frac{1}{|x|}\sum_{x_i\in x} x_i$.

\vspace{-0.5cm}

\begin{algorithm}[H]
\caption{\algnameshortGen: \algnameGen}\label{alg:genLinDimRed}
\begin{algorithmic}
\Require{$D$ combinations of features $\{\phi_1(x),\dots,\phi_D(x)\}$; target $y$; $N$ samples, aggregation function $h(\cdot)$, parameter $\epsilon$}
\Ensure{reduced features $\{\bar{h}_\phi^1,\dots,\bar{h}_\phi^d\}$, with $d \leq D$}\\ \ 
\vspace{-0,3cm}
\Function{\textsc{Compute\_threshold}$( h,\phi_P,\phi_j,y,\epsilon )$}{}\Comment{Threshold from Equation \ref{eq:genLinCFAtheo}}
            \State $L \leftarrow |\operatorname{\hat{cov}}(\phi_P,y)| + |\operatorname{\hat{cov}}(\phi_j,y)| + \frac{1}{2} b^{\prime \prime}\left(0\right)\hat{\sigma}^2_{h(\phi)}$
    \State $R \leftarrow |\operatorname{\hat{cov}}(h(\phi),y)| + \frac{1}{2} b^{\prime \prime}\left(0\right) \hat{\sigma}^2_{\phi_P+\phi_j}$\\
        \Return{$L-\epsilon R$}
        \EndFunction
        \Statex 
\vspace{-0,3cm}
\Function{\textsc{GenLinCFA}$( Input)$}{}\Comment{Main function}
\State{Equal to NonLinCFA function in Algorithm \ref{alg:nonLinDimRed}}
    \State{The only difference is the renewed \textsc{Compute\_threshold} function}\\
    \Return{Reduced features as described in $Algorithm$ \ref{alg:nonLinDimRed}}
\EndFunction
\end{algorithmic}
\end{algorithm}

\section{Experiments}\label{sec:experiments}

\subsection{Synthetic experiments}

In this subsection, we validate the proposed algorithms in synthetic experiments. We design two regression problems with $n=3000$ samples (randomly using $2000$ samples for training and $1000$ samples for testing). The first one has $D=100$ features and standard deviation of the noise $\sigma=10$, the second has $D=1000$ features and $\sigma=100$. We repeat the two experiments in classification, to apply \algnameshortGen also in this setting. More details can be found in Appendix \ref{app:experiments}. 

\begin{figure}
     \centering
     \begin{subfigure}{0.24\textwidth}
         \centering
        \includegraphics[width=\textwidth]{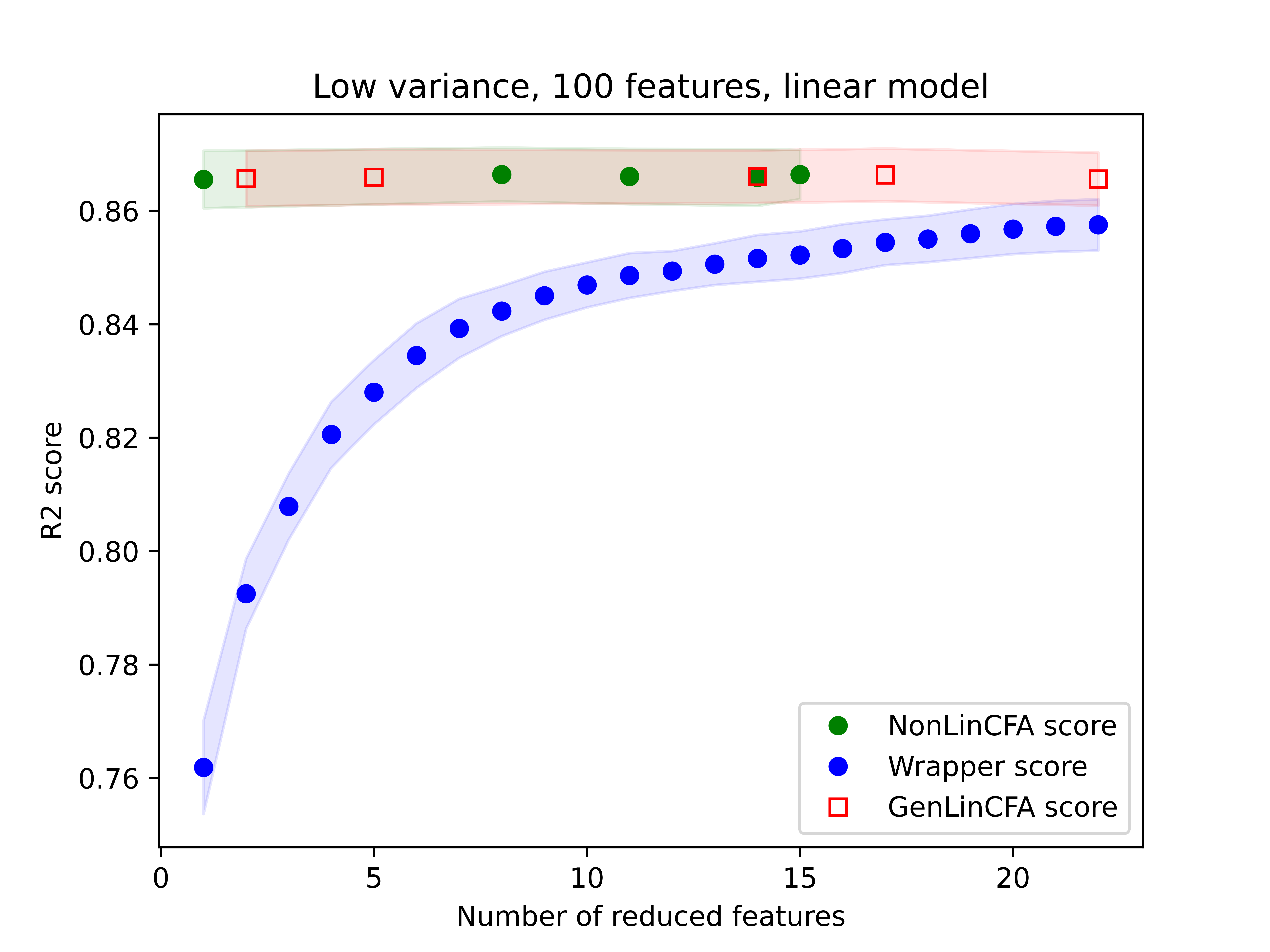}
         \caption{R2 score with $\sigma=10$ and $D=100$ features.}
         \label{fig:LowVar_100Feat_lin}
     \end{subfigure}
     \hfill
     \begin{subfigure}{0.24\textwidth}
         \centering
         \includegraphics[width=\textwidth]{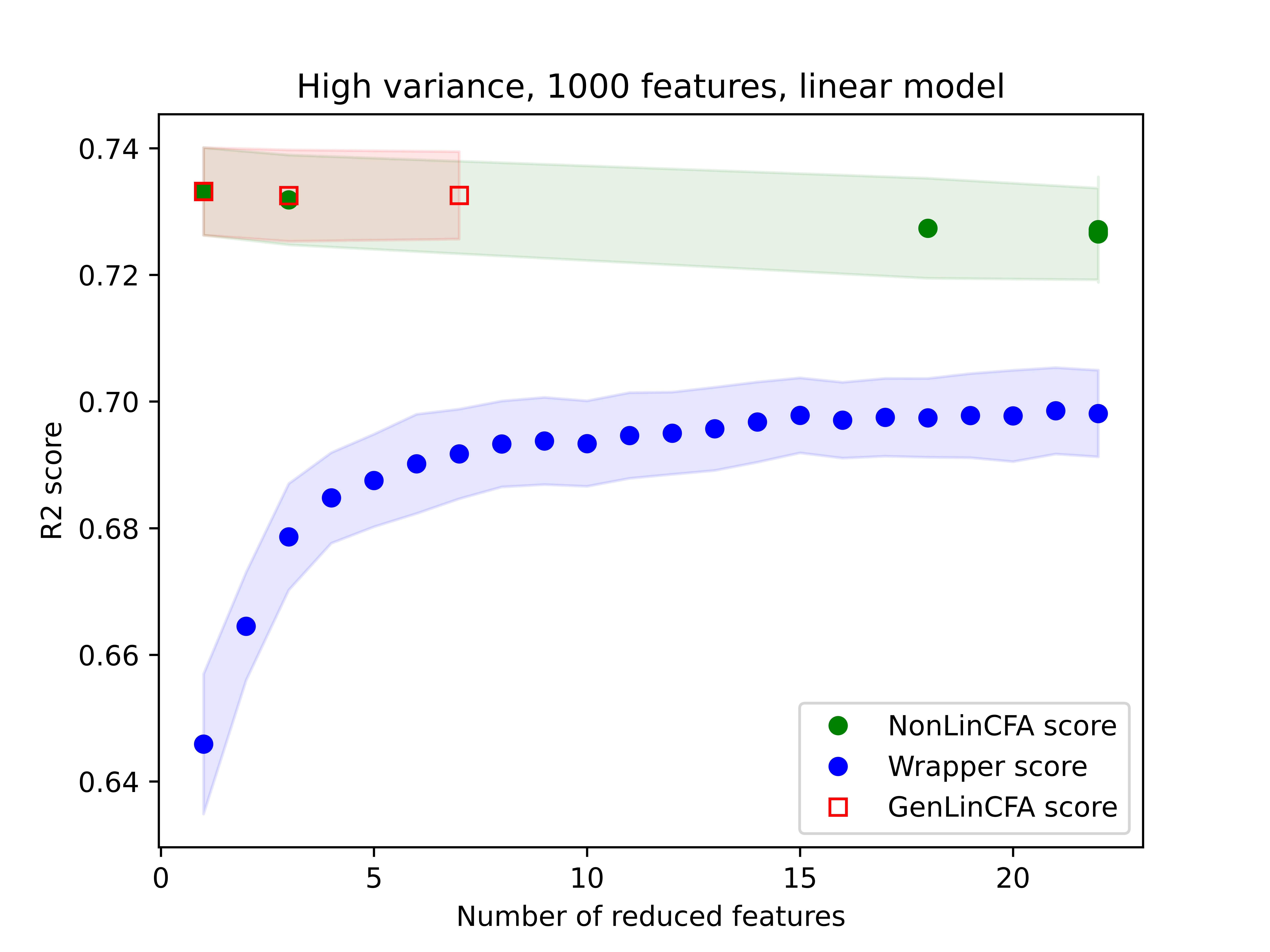}
         \caption{R2 score with $\sigma=100$ and $D=1000$.}
         \label{fig:HighVar_1000Feat_lin}
     \end{subfigure}
     \begin{subfigure}{0.24\textwidth}
         \centering
        \includegraphics[width=\textwidth]{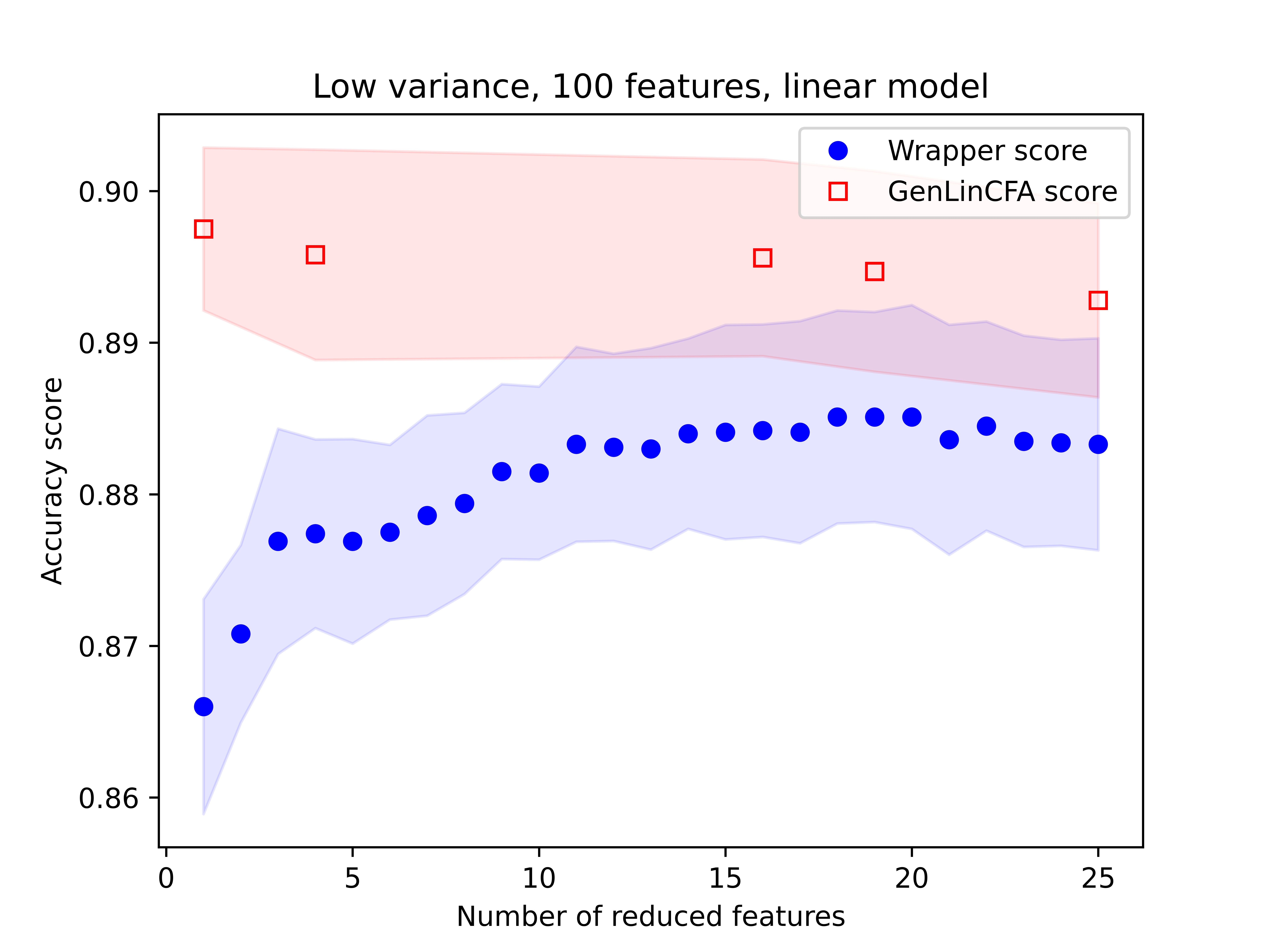}
         \caption{Accuracy score, setting of Figure \ref{fig:LowVar_100Feat_lin}, binary target.}
         \label{fig:LowVar_100Feat_lin_class}
     \end{subfigure}
     \hfill
     \begin{subfigure}{0.24\textwidth}
         \centering
         \includegraphics[width=\textwidth]{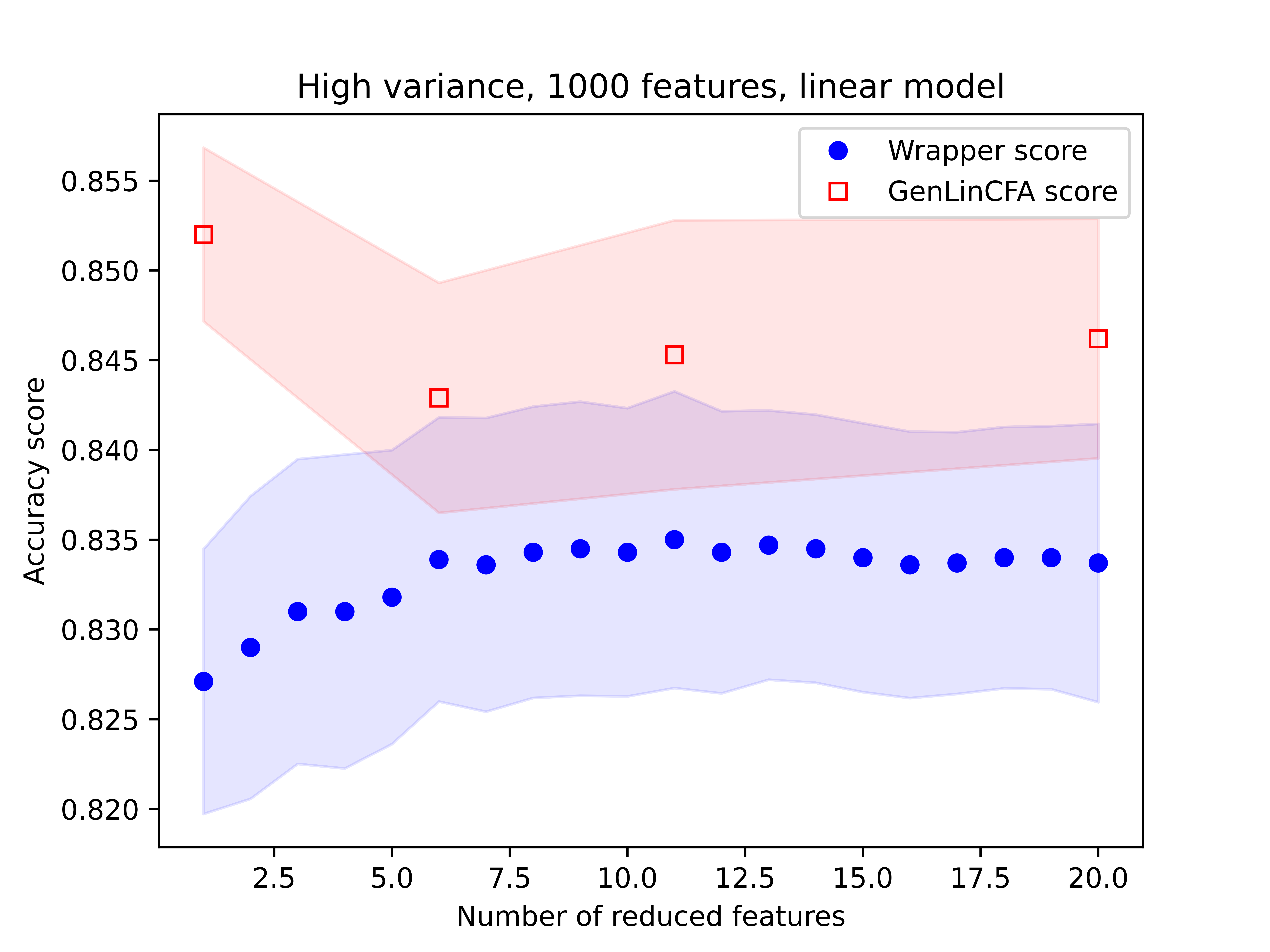}
         \caption{Accuracy score, setting of Figure \ref{fig:HighVar_1000Feat_lin}, binary target.}
         \label{fig:HighVar_1000Feat_lin_class}
     \end{subfigure}
        \caption{Application of \algnameshortNonlin and \algnameshortGen in regression and classification. Confidence intervals of test performances with different numbers of reduced features considering different hyperparameters are reported in the figures.}
        \label{fig:linearSynth}
\end{figure}

Figure \ref{fig:linearSynth} shows the results of the application of the methods in terms of $R2$ score on the test set. In both settings, \algnameshortNonlin and \algnameshortGen have been applied, considering the mean as an aggregation function. Then, linear regression was performed on the reduced features to evaluate the performances. As a baseline, a wrapper forward feature selection has been applied, with a number of features up to the maximum number of features extracted by the two methods. The experiments were repeated 10 times to produce confidence intervals. From Figure \ref{fig:LowVar_100Feat_lin}, it is possible to see that the two algorithms extract a comparable number of features, and the performance is already satisfactory with a small number of reduced features, while many features are needed for the wrapper method to reach similar results. Figure \ref{fig:HighVar_1000Feat_lin} reports the results in a more noisy environment with more features. With the same hyperparameters, \algnameshortNonlin and \algnameshortGen are respectively less and more prone to aggregate. 
As an additional comparison, LinCFA algorithm has been applied, since in this setting we are assuming the linearity of the model. The two proposed algorithms have similar performances, but with the advantage to be more prone to further reduce the dimension.

To test the \algnameshortGen algorithm also in a classification setting, the same two experiments are repeated, applying the sign function to the target and evaluating performances in terms of test accuracy (compared again with a wrapper forward feature selection). From Figure \ref{fig:LowVar_100Feat_lin_class}-\ref{fig:HighVar_1000Feat_lin_class}, it is possible to conclude that, again, the performances are similar or better \wrt the wrapper baseline and that, considering the same values of the hyperparameter, the algorithm is more prone to aggregate in a more complex and noisy context. 
Appendix \ref{app:experiments} reports detailed results and confidence intervals of the four experiments performed as well as additional synthetic experiments showing that the two proposed algorithms can deal with non-linear transformations of the input features and with nonlinear aggregations. 

\subsection{Real World Experiments}
To conclude the empirical analysis of the proposed algorithms, this section reports some experiments on real datasets. This analysis has been conducted evaluating test performances in terms of the coefficient of determination of the linear regression (or accuracy of the logistic regression for classification tasks), considering the reduced features as inputs. The two dimensionality reduction methods introduced in this paper have been performed with different hyperparameters, considering the original features as inputs ($\phi_i(\xs)=x_i$) and performing the mean as aggregation function. The results are compared with different state-of-the-art dimensionality reduction methods with different characteristics: linear unsupervised (PCA), linear supervised (LDA, LinCFA), nonlinear unsupervised (kernel PCA, Isomap, LLE) and nonlinear supervised (supervised PCA, NCA).

Table \ref{tab:realRes2_red} and \ref{tab:realRes_red} respectively report the test performances of three datasets from Kaggle and the UCI ML repository and four datasets extracted by the authors from meteorological data. In both tables are reported the number of reduced dimensions and the performance test score, considering the best validation hyperparameter for \algnameshortNonlin and \algnameshortGen as weel as the performance of LinCFA and of the best performing algorithm among the aforementioned state-of-the-art methods considered. The complete results and a better description of the datasets can be found in Appendix \ref{app:experiments}.

From the results it is possible to conclude that the proposed algorithms have competitive performances, with the advantage of preserving the interpretability of the reduced features, that are aggregated with the mean. In some cases (\emph{Finance, Climate, Climate (Class.)} the proposed algorithms outperform the compared methods, while in other situations they perform similarly (\emph{Bankruptcy, Parkinson}). In the second meteorological case (\emph{Climate II, Climate II (class.)}) the Kernel PCA algorithm has better performances, showing that in some settings it is necessary to balance between the loss of information and the interpretability of the reduced features.
\begin{table}[ht]
\caption{Experiments on real world datasets from Kaggle and UCI ML repository. Total number of samples $n$ splitted into train (66\%) and test (33\%) sets.
\label{tab:realRes2_red}}
\centering 
\centering
\resizebox{0.8\textwidth}{!}
{\begin{tabular}{@{}cccc@{}} \hline 
Quantity & Finance & Bankruptcy (class.) & Parkinson (class.) \\\hline 
\# samples $n$ & $1299$ & $1084$ & $384$ \\
\# features $D$ & $75$ & $65$ & $753$ \\
\hline
\textbf{Reduced Dimension} & & & \\
\algnameshortNonlin  & $7.4\pm 0.4$ & NA  & NA\\
\algnameshortGen & $8.0\pm 1.4$ & $27.6\pm 5.6$ & $23.4\pm 1.1$\\
LinCFA & $11.4\pm 0.7$ & NA & NA\\
Best baseline & $36.0\pm 10.8$ & $12.0\pm 9.5$ & $28.2\pm 15.3$ \\
\hline
\textbf{Test performance} & \textbf{$\mathbf{R^2}$ score} & \textbf{Accuracy score}  &  \textbf{Accuracy score}\\ 
\algnameshortNonlin  & $0.8136\pm 0.0036$ & NA  & NA\\
\algnameshortGen & $0.8119\pm 0.0010$ & $0.7503\pm 0.0012$ & $0.8016\pm 0.0069$\\
LinCFA & $0.8010\pm 0.0128$ & NA & NA\\
Best baseline & $0.7764\pm 0.0118$ & $0.7637\pm 0.0079$ & $0.7952\pm 0.0181$ \\
\hline
\end{tabular}}
\end{table}


\begin{table}[ht]
\caption{Experiments on climate datasets. The total number of samples $n$ has been splitted into train (66\% of data) and test (33\% of data) sets.
\label{tab:realRes_red}}
\centering 
\centering
\resizebox{\textwidth}{!}
{\begin{tabular}{@{}ccccc@{}} \hline 
 & Climate & Climate (Class.) & Climate II & Climate II (class.) \\\hline 
\# samples $n$ & $981$ & $981$ & $867$ & $867$ \\
\# features $D$ & $1991$ & $1991$ & $2408$ & $2408$ \\
\hline
\textbf{Reduced Dimension} & & & & \\
\algnameshortNonlin  & $16.0\pm 0.8$ & NA & $7.0\pm 0.9$ & NA\\
\algnameshortGen & $13.8\pm 3.0$ & $17.5\pm 3.3$ & $7.2\pm 1.1$ & $26.0\pm 6.1$\\
LinCFA & $38.2\pm 1.6$ & NA & $222.0\pm 2.7$ & NA\\
Best baseline & $41.8\pm 2.5$ & $37.0\pm 6.6$ & $21.8\pm 9.5$ & $11.1\pm 2.3$\\
\hline
\textbf{Test performance} & \textbf{$\mathbf{R^2}$ score} & \textbf{Accuracy score}  & \textbf{$\mathbf{R^2}$ score} & \textbf{Accuracy score}\\ 
\algnameshortNonlin  & $0.9395\pm 0.0125$ & NA & $0.2949\pm 0.0156$ & NA\\
\algnameshortGen & $0.9275\pm 0.0004$ & $0.9107\pm 0.0022$ & $0.2841\pm 0.0051$ & $0.7127\pm 0.0159$\\
LinCFA & $0.9007\pm 0.0310$ & NA & $-1.2861\pm 0.2322$ & NA\\
Best baseline & $0.8454\pm 0.0049$ & $0.8827\pm 0.0098$ & $0.3889\pm 0.0199$ & $0.7640\pm 0.0062$\\
\hline
\end{tabular}}
\end{table}
\section{Conclusions}\label{sec:conclusions}
In this paper, we have deepened the study of dimensionality reduction to account for non-linear effects, focusing on preserving both information and interpretability. The non-linearity has been accounted for in both the deterministic mapping function and  the noise model, considering the exponential family of distributions. The resulting algorithms aggregate, in the most general case, non-linear aggregation of non-linear features. Theoretical results have been provided to investigate the performance of the aggregation either in terms of MSE or increase of deviance. The experimental validation illustrates that our algorithms outperform the proposed baselines both in synthetically generated environments and in a real-world domain, or they have competitive results, with the advantage of letting the user define the most suitable aggregation function. Future works will include the consideration of other indexes of statistical dependence, other than the correlation and covariance, to perform the aggregation (e.g., mutual information).

%
%
%
\bibliographystyle{splncs04}
\bibliography{ms}
%



%
%
\newpage

\appendix
\section{Non-Linear Correlated Features Aggregation:\\ additional proofs and results}\label{app:nonLinProofs}

This section contains additional results and proofs related to Section \ref{sec:nonLinAlgo} of the main paper.

Firstly, we introduce the finite-sample increase of variance due to the aggregation, of which we reported the asymptotic version in Theorem \ref{lem:NonLinVar} of the main paper.

\begin{theorem}\label{thm:NonLinVariance}
    Let the relationship between the features and the target be nonlinear with additive Gaussian noise (Equation \ref{eq:realModelNonlin} of the main paper). The decrease of variance between a bivariate linear regression and the univariate case where the two features are aggregated (Equation \ref{eq:comparisonNonLin} of the main paper) is:
    \begin{equation}\label{eq:finiteVar}
    \begin{aligned}
    \Delta_{var}=
    \frac{\sigma^2}{(n-1)}\Bigg[ \frac{\sigma^2_{\phi_1}\hat{\sigma}^2_{\phi_2}+\sigma^2_{\phi_2}\hat{\sigma}^2_{\phi_1}-2cov(\phi_1,\phi_2)\hat{cov}(\phi_1,\phi_2)}{(\hat{\sigma}^2_{\phi_1}\hat{\sigma}^2_{\phi_2} - \hat{cov}(\phi_1,\phi_2)^2)}- \frac{\sigma_{h(\phi_1,\phi_2)}^2}{\hat{\sigma}_{h(\phi_1,\phi_2)}^2}\Bigg].
    \end{aligned}
    \end{equation}
\end{theorem}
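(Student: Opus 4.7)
The plan is to compute the variance contribution to the MSE of each of the two OLS predictors in closed form as a function of the training design matrix, then subtract. I would start by writing the predictor in each case as $\hat{w}^\intercal\phi(x)$ with $\hat{w}=(\Phi^\intercal\Phi)^{-1}\Phi^\intercal \mathbf{y}$, where $\Phi$ is the training design matrix (of the two columns $\phi_1,\phi_2$ in the bivariate case, and of the single column $h(\phi_1,\phi_2)$ in the univariate case). Substituting $\mathbf{y}=f+\epsilon$ with $\epsilon\sim\mathcal{N}(0,\sigma^2 I)$ independent of $\Phi$, and conditioning on the training inputs, the conditional variance of the prediction at a test point with feature vector $\phi(x)$ is the standard OLS formula $\sigma^2\,\phi(x)^\intercal(\Phi^\intercal\Phi)^{-1}\phi(x)$.

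Next I would marginalize over the test feature $\phi(x)$. Using the zero-mean assumption $\mathbb{E}[\phi_i]=0$ from the setup, $\mathbb{E}_x[\phi(x)\phi(x)^\intercal]=\Sigma$ is the true covariance of the features, so by the trace trick
\begin{equation*}
\mathbb{E}_{x}\!\left[\phi(x)^\intercal(\Phi^\intercal\Phi)^{-1}\phi(x)\right]=\operatorname{tr}\!\left((\Phi^\intercal\Phi)^{-1}\Sigma\right)=\tfrac{1}{n-1}\operatorname{tr}(\hat{\Sigma}^{-1}\Sigma),
\end{equation*}
using $\Phi^\intercal\Phi=(n-1)\hat{\Sigma}$ for centered features. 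This reduces the problem to computing two traces.

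For the bivariate case, I would invert the $2\times 2$ sample covariance matrix explicitly (determinant $\hat{\sigma}^2_{\phi_1}\hat{\sigma}^2_{\phi_2}-\hat{\operatorname{cov}}(\phi_1,\phi_2)^2$), multiply by the true covariance $\Sigma$ with entries $\sigma^2_{\phi_i}$ and $\operatorname{cov}(\phi_1,\phi_2)$, and read off the diagonal; this yields exactly the first fraction in \eqref{eq:finiteVar}. For the univariate case the trace is the scalar ratio $\sigma^2_{h(\phi_1,\phi_2)}/\hat{\sigma}^2_{h(\phi_1,\phi_2)}$. Multiplying each by $\sigma^2/(n-1)$ and subtracting the univariate from the bivariate variance gives the claimed formula. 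Theorem~\ref{lem:NonLinVar} is then immediate by letting $\hat{\Sigma}\xrightarrow{p}\Sigma$, so that the bracketed quantity tends to $\operatorname{tr}(I_2)-1=1$.

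The only non-routine step is justifying the decomposition of the variance term into a noise-averaged conditional variance times a test-point expectation; the rest is $2\times 2$ matrix algebra. The main subtlety to flag is the convention on which randomness enters $\mathcal{T}$: I would fix the convention that the training design is conditioned on while the training labels (equivalently, the noise) are averaged, which is standard for this style of bias--variance analysis of OLS and makes $\bar{\mathcal{M}}(x)=\mathbb{E}_{\mathcal{T}}[\mathcal{M}_\mathcal{T}(x)]$ coincide with the noise-averaged predictor used above.
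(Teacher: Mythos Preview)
Your proposal is correct and follows essentially the same approach as the paper: both condition on the training design, use the OLS conditional variance $\sigma^2(\Phi^\intercal\Phi)^{-1}$, and then average over the test point using the true feature covariance. The only difference is cosmetic: you package the test-point expectation as $\operatorname{tr}(\hat{\Sigma}^{-1}\Sigma)$ via the trace trick, whereas the paper expands the same quantity component-wise as $\sigma^2_{\phi_1}\operatorname{var}_D(\hat{w}_1)+\sigma^2_{\phi_2}\operatorname{var}_D(\hat{w}_2)+2\operatorname{cov}(\phi_1,\phi_2)\operatorname{cov}_D(\hat{w}_1,\hat{w}_2)$ before substituting the entries of the inverse $2\times2$ sample covariance.
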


\subsubsection{Proof of Theorem \ref{thm:NonLinVariance}}\ \\

Recalling the result:
    \begin{equation*}
    \begin{aligned}
    \Delta_{var}=
    \frac{\sigma^2}{(n-1)}\Bigg[ \frac{\sigma^2_{\phi_1}\hat{\sigma}^2_{\phi_2}+\sigma^2_{\phi_2}\hat{\sigma}^2_{\phi_1}-2cov(\phi_1,\phi_2)\hat{cov}(\phi_1,\phi_2)}{(\hat{\sigma}^2_{\phi_1}\hat{\sigma}^2_{\phi_2} - \hat{cov}(\phi_1,\phi_2)^2)}- \frac{\sigma_{h(\phi(x))}^2}{\hat{\sigma}_{h(\phi(x))}^2}\Bigg],
    \end{aligned}
    \end{equation*}

we will compute the variance for the bivariate case and for the univariate case, and then compute their difference. To do so, we need to start by computing the variance of the estimators.

\begin{lemma}
In the one dimensional case $\hat{y}=\hat{w}h(\phi(x))$: 
$$ var_D(\hat{w}|\mathbf{X}) = \frac{\sigma^2}{(n-1)\hat{\sigma}^2_{h(\phi(x))}}.
$$
In the two dimensional case $\hat{y}=\hat{w}_1\phi_1(x)+\hat{w}_2\phi_2(x)$: 
$$var_D(\hat{w}|\mathbf{X}) =
\frac{\sigma^2}{(n-1)(\hat{\sigma}^2_{\phi_1}\hat{\sigma}^2_{\phi_2} - \hat{cov}(\phi_1,\phi_2)^2)} \begin{bmatrix} \hat{\sigma}^2_{\phi_2} & -\hat{cov}(\phi_1,\phi_2) \\ -\hat{cov}(\phi_1,\phi_2) & \hat{\sigma}^2_{\phi_1} \end{bmatrix}.$$
\end{lemma}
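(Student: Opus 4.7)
The plan is to apply the standard closed-form OLS variance formula $\operatorname{var}(\hat{\mathbf{w}}\mid\mathbf{X}) = \sigma^2 (\mathbf{X}^\intercal\mathbf{X})^{-1}$, which follows immediately from $\hat{\mathbf{w}} = (\mathbf{X}^\intercal\mathbf{X})^{-1}\mathbf{X}^\intercal \mathbf{y}$ together with the additive Gaussian noise model of Equation \ref{eq:realModelNonlin} (with noise variance $\sigma^2$ and the design conditioned on). Once this formula is in place, the entire statement reduces to identifying the entries of $\mathbf{X}^\intercal\mathbf{X}$ with sample (co)variances, which is where the zero-mean assumption $\mathbb{E}[\phi_i(X)]=\mathbb{E}[h(\bm{\phi}(X))]=0$ enters.

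For the one-dimensional case, I would take $\mathbf{X}$ to be the column vector with entries $h(\bm{\phi}(\mathbf{x}_i))$, so that $\mathbf{X}^\intercal\mathbf{X}=\sum_i h(\bm{\phi}(\mathbf{x}_i))^2$. Using the zero-mean convention from the Preliminaries, the unbiased sample variance satisfies $\hat{\sigma}^2_{h(\phi(x))}=\tfrac{1}{n-1}\sum_i h(\bm{\phi}(\mathbf{x}_i))^2$, hence $(\mathbf{X}^\intercal\mathbf{X})^{-1}=\tfrac{1}{(n-1)\hat{\sigma}^2_{h(\phi(x))}}$, which after multiplying by $\sigma^2$ yields the claimed scalar variance.

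For the two-dimensional case, the matrix $\mathbf{X}^\intercal\mathbf{X}$ has diagonal entries $\sum_i \phi_k(\mathbf{x}_i)^2=(n-1)\hat{\sigma}^2_{\phi_k}$ for $k=1,2$ and off-diagonal entries $\sum_i \phi_1(\mathbf{x}_i)\phi_2(\mathbf{x}_i)=(n-1)\hat{\operatorname{cov}}(\phi_1,\phi_2)$, again by the zero-mean assumption. Inverting the resulting $2\times 2$ matrix using the cofactor formula divides by $(n-1)^2\bigl(\hat{\sigma}^2_{\phi_1}\hat{\sigma}^2_{\phi_2}-\hat{\operatorname{cov}}(\phi_1,\phi_2)^2\bigr)$ and swaps/negates entries to produce exactly the matrix in the statement; multiplying by $\sigma^2$ finishes the proof.

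Since this is a textbook OLS calculation, there is no genuine obstacle. The only care points are (i) being explicit that the conditioning on $\mathbf{X}$ lets us treat the design as deterministic so the noise-variance formula applies, and (ii) flagging the use of the zero-mean normalization from the Preliminaries, without which $\mathbf{X}^\intercal\mathbf{X}$ would involve raw second moments rather than sample (co)variances and the factor of $(n-1)$ would not appear cleanly.
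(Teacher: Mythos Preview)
Your proposal is correct and follows essentially the same approach as the paper: both derive the result from the closed-form OLS variance $\sigma^2(\mathbf{X}^\intercal\mathbf{X})^{-1}$ and then identify the entries of $\mathbf{X}^\intercal\mathbf{X}$ with $(n-1)$ times the sample (co)variances via the zero-mean assumption. If anything, you are slightly more explicit than the paper about why the conditioning on $\mathbf{X}$ justifies treating the design as deterministic and about where the factor $(n-1)$ comes from.
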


\begin{proof}
For the one dimensional result, denoting with $\mathbf{h(\phi)}$ the $N$ dimensional vector of aggregated samples $[h(\phi_1(x^1),\phi_2(x^1)),\ ...,\ h(\phi_1(x^N),\phi_2(x^N))]$: 
\begin{align*}
var_D(\hat{w}|\mathbf{X}) & =
var_D((\mathbf{h(\phi)}^\intercal \mathbf{h(\phi)})^{-1}\mathbf{h(\phi)}^\intercal y|\mathbf{X}) =
(\mathbf{h(\phi)}^\intercal\mathbf{h(\phi)})^{-1}\sigma^2 \\
& =\begin{bmatrix} h(\phi(x^1)) & ... & h(\phi(x^n)) \end{bmatrix} \begin{bmatrix} h(\phi(x^1)) \\ ... \\ h(\phi(x^n)) \end{bmatrix}\sigma^2\\
& = \frac{\sigma^2}{\sum_{i=1}^N h(\phi(x^i))^2}=\frac{\sigma^2}{(n-1)\hat{\sigma}^2_{h(\phi(x))}}.
\end{align*}
In the first equality it is exploited the closed formula to estimate the linear regression coefficients in linear regression, the second equality exploits the property of the variance to extract the constant matrices and the definition of variance of the target $y$. Finally, the third and fourth equalities are simple algebraic computations.
\ \\
In the two dimensional setting, denoting with $\mathbf{\Phi}=\begin{bmatrix} \phi_1(x^1) & \phi_2(x^1) \\ ... & ... \\ \phi_1(x^n) & \phi_2(x^n) \end{bmatrix}$ the $N\times 2$ matrix of samples: 
\begin{align*}
var_D(\hat{w}|\mathbf{X}) = (\mathbf{\Phi}^\intercal\mathbf{\Phi})^{-1}\sigma^2 = \Bigg(\begin{bmatrix} \phi_1(x^1) & ... & \phi_1(x^n) \\ \phi_2(x^1) & ... & \phi_2(x^n) \end{bmatrix} \begin{bmatrix} \phi_1(x^1) & \phi_2(x^1) \\ ... & ... \\ \phi_1(x^n) & \phi_2(x^n) \end{bmatrix}\Bigg)^{-1}\sigma^2\\
=\Bigg(\begin{bmatrix} \phi_1(x^1)^2+...+\phi_1(x^n)^2 & \phi_1(x^1)\phi_2(x^1)+...+\phi_1(x^n)\phi_2(x^n) \\ \phi_1(x^1)\phi_2(x^1)+...+\phi_1(x^n)\phi_2(x^n) & \phi_2(x^1)^2+...+\phi_2(x^n)^2 \end{bmatrix}\Bigg)^{-1}\sigma^2\\
= \frac{\sigma^2}{(n-1)(\hat{\sigma}^2_{\phi_1}\hat{\sigma}^2_{\phi_2} - \hat{cov}(\phi_1,\phi_2)^2)} \begin{bmatrix} \hat{\sigma}^2_{\phi_2} & -\hat{cov}(\phi_1,\phi_2) \\ -\hat{cov}(\phi_1,\phi_2) & \hat{\sigma}^2_{\phi_1} \end{bmatrix}.
\end{align*}
The first equivalence follows again from the closed form solution of linear regression and from the variance of the target, while the others follow from algebraic computations.
\end{proof}

We are now ready to derive the expression of variance of the model in the two cases.

\begin{theorem}
    In the one dimensional case $\hat{y}=\hat{w}g(\phi(x))$: 
\begin{equation*}
\E[(h_D(x)-\Bar{h}(x))^2| \mathbf{X}] = \frac{\sigma^2}{(n-1)}\cdot \frac{\sigma_{h(\phi(x))}^2}{\hat{\sigma}_{h(\phi(x))}^2}.
\end{equation*}

In the two dimensional case $\hat{y}=\hat{w}_1\phi_1(x)+\hat{w}_2\phi_2(x)$: 
\begin{equation*}
\E[(h_D(x)-\Bar{h}(x))^2| \mathbf{X}] =
\frac{\sigma^2}{(n-1)}\cdot \frac{\sigma^2_{\phi_1}\hat{\sigma}^2_{\phi_2}+\sigma^2_{\phi_2}\hat{\sigma}^2_{\phi_1}-2cov(\phi_1,\phi_2)\hat{cov}(\phi_1,\phi_2)}{(\hat{\sigma}^2_{\phi_1}\hat{\sigma}^2_{\phi_2} - \hat{cov}(\phi_1,\phi_2)^2)}. \end{equation*}
\end{theorem}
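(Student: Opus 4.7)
\subsubsection*{Proof Proposal}

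The plan is to reduce both cases to the computation of a quadratic form in the estimator covariance, whose value is already given by the preceding lemma, and then average over the test input using the zero-mean assumption to turn second moments of $\phi_i(x)$ into (population) variances and covariances.

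First I would handle the one-dimensional case. Since $h_D(x)=\hat w\, h(\phi(x))$ is linear in $\hat w$, and $\bar h(x)=\E_D[\hat w]\, h(\phi(x))$, one has $h_D(x)-\bar h(x)=(\hat w-\E_D[\hat w])\, h(\phi(x))$. Squaring and taking expectation over the training sample conditional on $\mathbf{X}$ yields $\E_D[(h_D(x)-\bar h(x))^2\mid\mathbf{X}]=\operatorname{var}_D(\hat w\mid\mathbf{X})\cdot h(\phi(x))^2$. Then I would take the outer expectation over the test input $x$: since $\E[h(\phi(x))]=0$ by assumption, $\E_x[h(\phi(x))^2]=\sigma^2_{h(\phi(x))}$. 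Plugging in the one-dimensional estimator variance $\sigma^2/((n-1)\hat\sigma^2_{h(\phi(x))})$ from the preceding lemma gives exactly the claimed expression.

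Next I would treat the two-dimensional case similarly. Writing $\mathbf{v}(x)=(\phi_1(x),\phi_2(x))^\intercal$ and $\hat{\mathbf{w}}=(\hat w_1,\hat w_2)^\intercal$, we have $h_D(x)-\bar h(x)=(\hat{\mathbf{w}}-\E_D[\hat{\mathbf{w}}])^\intercal \mathbf{v}(x)$, so
\begin{equation*}
\E_D[(h_D(x)-\bar h(x))^2\mid\mathbf{X}]=\mathbf{v}(x)^\intercal\,\operatorname{Cov}_D(\hat{\mathbf{w}}\mid\mathbf{X})\,\mathbf{v}(x).
\end{equation*}
Averaging over the test input, I would use $\E_x[\mathbf{v}(x)\mathbf{v}(x)^\intercal]=\Sigma_\phi$, where $\Sigma_\phi$ has entries $\sigma^2_{\phi_1},\sigma^2_{\phi_2},\operatorname{cov}(\phi_1,\phi_2)$ (again thanks to the zero-mean assumption on $\phi_i$). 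Hence
\begin{equation*}
\E_{x,D}[(h_D(x)-\bar h(x))^2\mid\mathbf{X}]=\operatorname{tr}\bigl(\operatorname{Cov}_D(\hat{\mathbf{w}}\mid\mathbf{X})\,\Sigma_\phi\bigr).
\end{equation*}
Substituting the closed-form $\operatorname{Cov}_D(\hat{\mathbf{w}}\mid\mathbf{X})$ from the preceding lemma and computing the trace gives $\sigma^2_{\phi_2}\hat\sigma^2_{\phi_1}+\sigma^2_{\phi_1}\hat\sigma^2_{\phi_2}-2\operatorname{cov}(\phi_1,\phi_2)\hat{\operatorname{cov}}(\phi_1,\phi_2)$ in the numerator, divided by $(n-1)(\hat\sigma^2_{\phi_1}\hat\sigma^2_{\phi_2}-\hat{\operatorname{cov}}(\phi_1,\phi_2)^2)/\sigma^2$, matching the claim.

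The only mildly delicate step is the trace/expectation identity $\E_x[\mathbf{v}(x)^\intercal A\,\mathbf{v}(x)]=\operatorname{tr}(A\,\E_x[\mathbf{v}(x)\mathbf{v}(x)^\intercal])$, which I would invoke explicitly, together with the careful separation between the population moments (averaged over the test $x$) and the empirical moments (which come from the training matrix $\mathbf{X}$ inside the estimator covariance). Everything else is linear algebra once the bias-variance decomposition has been unwound.
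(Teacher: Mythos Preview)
Your proposal is correct and follows essentially the same route as the paper: factor the prediction error as a linear form in $\hat{\mathbf{w}}-\E_D[\hat{\mathbf{w}}]$, take the inner expectation over the training data to obtain the estimator covariance from the preceding lemma, and then average over the test input using the zero-mean assumption on $\phi_i$ and $h(\phi)$. The only cosmetic difference is that in the two-dimensional case you package the computation via the trace identity $\E_x[\mathbf{v}^\intercal A\,\mathbf{v}]=\operatorname{tr}(A\,\Sigma_\phi)$, whereas the paper expands the square componentwise into $\operatorname{var}_X(\phi_1)\operatorname{var}_D(\hat w_1)+\operatorname{var}_X(\phi_2)\operatorname{var}_D(\hat w_2)+2\operatorname{cov}_X(\phi_1,\phi_2)\operatorname{cov}_D(\hat w_1,\hat w_2)$; these are the same calculation.
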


\begin{proof}
One dimensional, exploiting the independence between training, test samples and the definition of model variance and the assumption of expected values equal to zero of $h(\phi(x))$: 
\begin{align*}
\E_{X,Y,D}[(\mathcal{M}_D(x)-\Bar{\mathcal{M}}(x))^2] &= \E_{X,Y,D}[(\hat{w}h(\phi(x))-\E_D[\hat{w}h(\phi(x))])^2] \\
&= \E_{X,Y,D}[(h(\phi(x))(\hat{w}-\E_D[\hat{w}]))^2] \\ 
&= \E_{X,Y}[h(\phi(x))^2]\E_{D}[(\hat{w}-\E_D[\hat{w}])^2]\\
&= var_X(h(\phi(x)))\cdot var_D(\hat{w}).
\end{align*}
Conditioning on the features training set:
\begin{equation*}
\E_{X,Y,D}[(\mathcal{M}_D(x)-\Bar{\mathcal{M}}(x))^2 | \mathbf{X}] = \sigma_{h(\phi(x))}^2\cdot\frac{\sigma^2}{(n-1)\hat{\sigma}^2_{h(\phi(x))}}.
\end{equation*}
\ \\
Two dimensional, exploiting again the independence between train and test set and the assumption of expected values equal to zero of $\phi_1(x)$ and $\phi_2(x)$: 
\begin{align*}
\E_{X,Y,D}&[(\mathcal{M}_D(x)-\Bar{\mathcal{M}}(x))^2] = 
\E_{X,Y,D}[(\hat{w}_1\phi_1+\hat{w}_2\phi_2-\E_{D}[\hat{w}_1\phi_1+\hat{w}_2\phi_2])^2] \\
&=\E_{X,Y,D}[(\phi_1(\hat{w}_1-\E_{D}[\hat{w}_1])+\phi_2(\hat{w}_2-\E_{D}[\hat{w}_2]))^2] \\
&=\E_{X,Y,D}[(\phi_1(\hat{w}_1-\E_{D}[\hat{w}_1])^2]+\E_{X,Y,D}[(\phi_2(\hat{w}_2-\E_{D}[\hat{w}_2]))^2] \\
&+ 2
\E_{X,Y,D}[\phi_1\phi_2(\hat{w}_1-\E_{D}[\hat{w}_1])(\hat{w}_2-\E_{D}[\hat{w}_2])] \\ 
&= var_X(\phi_1)var_D(\hat{w}_1)+var_X(\phi_2)var_D(\hat{w}_2)+2cov_X(\phi_1,\phi_2)cov_D(\hat{w}_1,\hat{w}_2).
\end{align*}

Conditioning on the features training set:
\begin{equation*}
\begin{aligned}
\E_{X,Y,D}&[(\mathcal{M}_D(x)-\Bar{\mathcal{M}}(x))^2 | \mathbf{X}] \\ 
&= var_X(\phi_1|\mathbf{X})var_D(\hat{w}_1|\mathbf{X}) + var_X(\phi_2|\mathbf{X})var_D(\hat{w}_2|\mathbf{X})\\
&+2cov_X(\phi_1,\phi_2|\mathbf{X})cov_D(\hat{w}_1,\hat{w}_2|\mathbf{X})\\ 
&=\frac{\sigma^2}{(n-1)}\cdot\frac{\sigma^2_{\phi_1}\hat{\sigma}^2_{\phi_2}+\sigma^2_{\phi_2}\hat{\sigma}^2_{\phi_1}-2cov(\phi_1,\phi_2)\hat{cov}(\phi_1,\phi_2)}{(\hat{\sigma}^2_{\phi_1}\hat{\sigma}^2_{\phi_2} - \hat{cov}(\phi_1,\phi_2)^2)} . 
\end{aligned}
\end{equation*}
\end{proof}

In conclusion, Theorem \ref{thm:NonLinVariance} trivially follows by computing the difference between the two results of the theorem, \ie between the variance of the two and the one dimensional settings.

Moreover Theorem \ref{lem:NonLinVar}, in the main paper, follows from this result substituting each estimator with the quantity that it estimates, since it is assume to be convergent in probability to its value.

\subsubsection{Proof of Theorem \ref{thm:NonLinBias}}\ \\
Recalling the asymptotic result of the theorem:
\begin{equation*}
    \begin{aligned}
    &\Delta_{bias}^{n\to\infty}
= -\frac{cov(f,h(\phi(x)))^2}{\sigma^2_{h(\phi(x))}}\\
&+\frac{\sigma^2_{\phi_1}cov(\phi_2,f)^2+\sigma^2_{\phi_2}cov(\phi_1,f)^2-2cov(\phi_1,f)cov(\phi_2,f)cov(\phi_1,\phi_2)}{\sigma^2_{\phi_1}\sigma^2_{\phi_2} - cov(\phi_1,\phi_2)^2},
    \end{aligned}
\end{equation*}

we firstly need to derive the expected value of the estimators.

\begin{lemma}\label{lem:expCoeffNonlin}
    The expected value of the coefficient of the one dimensional regression is:
$$ \E_D[\hat{w}|\mathbf{X}] =
\frac{\hat{cov}(h(\phi(x)),f(x))}{\hat{\sigma}^2_{h(\phi(x))}}.$$
For the two dimensional regression:
\begin{align*}
   \E_D[\hat{w}|\mathbf{X}] =  
   \frac{1}{(\hat{\sigma}^2_{\phi_1}\hat{\sigma}^2_{\phi_2} - \hat{cov}(\phi_1,\phi_2)^2)}
   \begin{bmatrix} \hat{\sigma}^2_{\phi_2}\hat{cov}(\phi_1,f) -\hat{cov}(\phi_1,\phi_2)\hat{cov}(\phi_2,f) \\ \hat{\sigma}^2_{\phi_1}\hat{cov}(\phi_2,f) -\hat{cov}(\phi_1,\phi_2)\hat{cov}(\phi_1,f) \end{bmatrix}.
\end{align*}
\end{lemma}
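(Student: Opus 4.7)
The plan is to exploit the standard closed-form OLS solution $\hat{\mathbf{w}} = (\boldsymbol{\Phi}^\intercal \boldsymbol{\Phi})^{-1}\boldsymbol{\Phi}^\intercal \mathbf{y}$, where $\boldsymbol{\Phi}$ is the design matrix of the features entering the regression, and take the conditional expectation with respect to the training data $D$ given the feature matrix $\mathbf{X}$. Since $\mathbf{y} = f(\mathbf{X}) + \boldsymbol{\epsilon}$ with $\boldsymbol{\epsilon}$ zero-mean and independent of $\mathbf{X}$, linearity of expectation yields
\begin{equation*}
\E_D[\hat{\mathbf{w}}\mid \mathbf{X}] = (\boldsymbol{\Phi}^\intercal \boldsymbol{\Phi})^{-1}\boldsymbol{\Phi}^\intercal f(\mathbf{X}).
\end{equation*}
From this point the argument reduces to identifying the entries of $\boldsymbol{\Phi}^\intercal \boldsymbol{\Phi}$ and $\boldsymbol{\Phi}^\intercal f(\mathbf{X})$ with sample second moments and then inverting a $1{\times}1$ or a $2{\times}2$ matrix.

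For the univariate case $\boldsymbol{\Phi} = [h(\phi(x^1)),\dots,h(\phi(x^n))]^\intercal$. Invoking the zero-mean assumption $\E[h(\phi(X))] = 0 = \E[f(X)]$ stated in the preliminaries, the sums $\sum_i h(\phi(x^i))^2$ and $\sum_i h(\phi(x^i))f(x^i)$ coincide with $(n-1)\hat{\sigma}^2_{h(\phi(x))}$ and $(n-1)\hat{\mathrm{cov}}(h(\phi(x)),f(x))$, respectively. The $(n-1)$ factors cancel and the one-dimensional expression follows directly.

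For the bivariate case, the same zero-mean identifications give
\begin{equation*}
\boldsymbol{\Phi}^\intercal \boldsymbol{\Phi} = (n-1)\begin{bmatrix} \hat{\sigma}^2_{\phi_1} & \hat{\mathrm{cov}}(\phi_1,\phi_2) \\ \hat{\mathrm{cov}}(\phi_1,\phi_2) & \hat{\sigma}^2_{\phi_2} \end{bmatrix}, \qquad \boldsymbol{\Phi}^\intercal f(\mathbf{X}) = (n-1)\begin{bmatrix} \hat{\mathrm{cov}}(\phi_1,f) \\ \hat{\mathrm{cov}}(\phi_2,f) \end{bmatrix}.
\end{equation*}
Inverting the $2{\times}2$ Gram matrix via the standard adjugate formula and multiplying by the cross-product vector produces exactly the claimed entries of $\E_D[\hat{\mathbf{w}}\mid \mathbf{X}]$, with the $(n-1)$ factors cancelling once more.

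There is no serious obstacle here: the result is essentially the OLS normal equations rewritten in terms of sample variances and covariances. The only subtlety worth flagging in the proof is the careful use of the zero-mean assumption on $\phi_1,\phi_2,h(\phi_1,\phi_2),f$, which is what allows the raw sums of products to be read directly as (biased) sample variances and covariances multiplied by $(n-1)$, so that the normalizing constants cancel cleanly.
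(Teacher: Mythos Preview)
Your proposal is correct and follows essentially the same route as the paper: both start from the OLS closed form $\hat{\mathbf{w}} = (\boldsymbol{\Phi}^\intercal\boldsymbol{\Phi})^{-1}\boldsymbol{\Phi}^\intercal\mathbf{y}$, take the conditional expectation using $\E_D[\mathbf{y}\mid\mathbf{X}]=f(\mathbf{X})$, and then rewrite the resulting sums of products as $(n-1)$ times sample variances and covariances via the zero-mean assumption on $\phi_1,\phi_2,h,f$. The only cosmetic difference is that you make the cancellation of the $(n-1)$ factors and the role of the zero-mean assumption more explicit than the paper does.
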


\begin{proof}
In the one-dimensional case, exploiting the closed form solution of the estimate of the linear regression coefficient:
\begin{align*}
    \E_D[\hat{w}|\mathbf{X}]&=(h(\mathbf{\Phi})^\intercal h(\mathbf{\Phi}))^{-1}h(\mathbf{\Phi})^\intercal f(\mathbf{X}) \\
    &= \frac{1}{(n-1)\hat{\sigma}^2_{h(\phi(x))}}\begin{bmatrix} h(\phi(x^1)) & ... & h(\phi(x^n)) \end{bmatrix} \begin{bmatrix} f(x^1) \\ ... \\ f(x^n) \end{bmatrix} \\
    &= \frac{\hat{cov}(h(\phi(x)),f(x))}{\hat{\sigma}^2_{h(\phi(x))}}.
\end{align*}

In the two dimensional case, substituting the expression of the estimated coefficients:
\begin{align*}
\E_D(\hat{w}|\mathbf{X}) &=\\
&=(\mathbf{\Phi}^\intercal\mathbf{\Phi})^{-1}\mathbf{\Phi}^\intercal f(\mathbf{X})\\
&= \Bigg(\begin{bmatrix} \phi_1(x^1) & ... & \phi_1(x^n) \\ \phi_2(x^1) & ... & \phi_2(x^n) \end{bmatrix} \begin{bmatrix} \phi_1(x^1) & \phi_2(x^1) \\ ... & ... \\ \phi_1(x^n) & \phi_2(x^n) \end{bmatrix}\Bigg)^{-1}\mathbf{\Phi}^\intercal f(\mathbf{X})\\
&= \frac{1}{(n-1)(\hat{\sigma}^2_{\phi_1}\hat{\sigma}^2_{\phi_2} - \hat{cov}(\phi_1,\phi_2)^2)}\\
& \times \begin{bmatrix} \hat{\sigma}^2_{\phi_2} & -\hat{cov}(\phi_1,\phi_2) \\ -\hat{cov}(\phi_1,\phi_2) & \hat{\sigma}^2_{\phi_1} \end{bmatrix}
\begin{bmatrix} \phi_1(x^1) & ... & \phi_1(x^n) \\ \phi_2(x^1) & ... & \phi_2(x^n) \end{bmatrix}
\begin{bmatrix}
f(x^1) \\ ... \\ f(x^n)
\end{bmatrix}\\
&= \frac{1}{(n-1)(\hat{\sigma}^2_{\phi_1}\hat{\sigma}^2_{\phi_2} - \hat{cov}(\phi_1,\phi_2)^2)}\\
&\times
\begin{bmatrix} \hat{\sigma}^2_{\phi_2}\sum_i f(x^i)\phi_1(x^i) -\hat{cov}(\phi_1,\phi_2)\sum_i f(x^i)\phi_2(x^i) \\ -\hat{cov}(\phi_1,\phi_2)\sum_i f(x^i)\phi_1(x^i) + \hat{\sigma}^2_{\phi_1}\sum_i f(x^i)\phi_2(x^i)
\end{bmatrix}\\
&= \frac{1}{(\hat{\sigma}^2_{\phi_1}\hat{\sigma}^2_{\phi_2} - \hat{cov}(\phi_1,\phi_2)^2)}
   \begin{bmatrix} \hat{\sigma}^2_{\phi_2}\hat{cov}(\phi_1,f) -\hat{cov}(\phi_1,\phi_2)\hat{cov}(\phi_2,f) \\ \hat{\sigma}^2_{\phi_1}\hat{cov}(\phi_2,f) -\hat{cov}(\phi_1,\phi_2)\hat{cov}(\phi_1,f) \end{bmatrix}.
\end{align*}
\end{proof}

The next step of the proof is to derive the expression of (squared) bias of the two linear regression settings.

\begin{theorem}\label{thm:modelBiasNonLin}
    Let $h=h(\phi(x))$, $f=f(x)$.\\
In the one dimensional linear regression $\hat{y}=\hat{w}h$ the squared bias is: 
\begin{align*}
\E&[(\Bar{\mathcal{M}}(x)-\bar{y})^2|\mathbf{X}] =
\sigma^2_{f} + \frac{\sigma^2_{h}\hat{cov}(f,h)^2-2cov(f,h)\hat{cov}(f,h)\hat{\sigma}^2_h}{\hat{\sigma}^4_{h}}.
\end{align*}
In the two dimensional case $\hat{y}=\hat{w_1}\phi_1+\hat{w_2}\phi_2$ the squared bias is:
\begin{align*}
 \E[(\Bar{\mathcal{M}}(x)-\bar{y})^2|\mathbf{X}]&= \sigma^2_{\phi_1}\E_D[\hat{w_1}|\mathbf{X}]^2+\sigma^2_{\phi_2}\E_D[\hat{w_2}|\mathbf{X}]^2\\
& +2cov(\phi_1,\phi_2)\E_D[\hat{w_1}|\mathbf{X}]\E_D[\hat{w_2}|\mathbf{X}]+\sigma^2_f \\
& - 2 cov(\phi_1,f)\E_D[\hat{w_1}|\mathbf{X}] - 2 cov(\phi_2,f)\E_D[\hat{w_2}|\mathbf{X}].
\end{align*}
\end{theorem}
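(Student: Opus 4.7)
The plan is to expand the squared-bias term directly and use the zero-mean assumptions on $\phi_1,\phi_2,h,f$ to reduce each expectation to a variance or covariance. First I would observe that, under the additive Gaussian noise assumption $y = f(x) + \epsilon$ with $\epsilon \perp x$ and $\E[\epsilon]=0$, the quantity $\bar{y}(x) = \E[y \mid x]$ appearing in the bias term of Equation~\eqref{eq:BiasVarDec} equals $f(x)$. Moreover, $\bar{\mathcal{M}}(x) = \E_D[\mathcal{M}_D(x) \mid \mathbf{X}]$ is obtained by pulling the expectation through the linear model, since the features are deterministic functions of the test input; concretely, $\bar{\mathcal{M}}(x) = \E_D[\hat w \mid \mathbf{X}]\, h(\phi(x))$ in the one-dimensional case and $\bar{\mathcal{M}}(x) = \E_D[\hat w_1 \mid \mathbf{X}]\,\phi_1(x) + \E_D[\hat w_2 \mid \mathbf{X}]\,\phi_2(x)$ in the two-dimensional case.

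For the one-dimensional case, I would write the squared bias as
\begin{equation*}
\E_x\!\left[\left(\E_D[\hat w \mid \mathbf{X}]\, h - f\right)^2\right] = \E_D[\hat w \mid \mathbf{X}]^2\,\E_x[h^2] \,-\, 2\,\E_D[\hat w \mid \mathbf{X}]\,\E_x[hf] \,+\, \E_x[f^2],
\end{equation*}
and then identify $\E_x[h^2]=\sigma^2_h$, $\E_x[hf]=\mathrm{cov}(h,f)$, $\E_x[f^2]=\sigma^2_f$ using the zero-mean assumption $\E[h]=\E[f]=0$. Substituting the closed form $\E_D[\hat w \mid \mathbf{X}] = \hat{\mathrm{cov}}(h,f)/\hat\sigma^2_h$ from Lemma~\ref{lem:expCoeffNonlin} and collecting terms over the common denominator $\hat\sigma^4_h$ yields exactly the claimed expression.

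For the two-dimensional case, I would expand the square of the three-term difference $\E_D[\hat w_1]\phi_1 + \E_D[\hat w_2]\phi_2 - f$, which produces six cross-terms (three squares and three pairwise products). Taking the expectation over $x$ and again exploiting the zero-mean assumption turns each expectation into a variance or covariance: $\E_x[\phi_i^2]=\sigma^2_{\phi_i}$, $\E_x[\phi_1\phi_2]=\mathrm{cov}(\phi_1,\phi_2)$, $\E_x[\phi_i f] = \mathrm{cov}(\phi_i,f)$, $\E_x[f^2] = \sigma^2_f$. Since $\E_D[\hat w_i \mid \mathbf{X}]$ is constant with respect to $\E_x$, it factors out cleanly, and rearranging gives the stated formula. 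Here I would leave $\E_D[\hat w_1 \mid \mathbf{X}]$ and $\E_D[\hat w_2 \mid \mathbf{X}]$ un-substituted, as the theorem does, but Lemma~\ref{lem:expCoeffNonlin} provides their explicit forms if one wishes to fully expand them.

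The proof is a bookkeeping exercise rather than one that requires a deep idea; the only step that needs care is ensuring the cross-terms in the two-dimensional expansion are paired correctly so that the sign pattern in the final expression matches (the two $-2\,\mathrm{cov}(\phi_i,f)\,\E_D[\hat w_i \mid \mathbf{X}]$ contributions should come from the cross-terms between $\E_D[\hat w_i]\phi_i$ and $-f$, and the $+2\,\mathrm{cov}(\phi_1,\phi_2)\,\E_D[\hat w_1]\E_D[\hat w_2]$ contribution from the cross-term between the two model parts). Apart from this, no probabilistic subtlety is involved once one has conditioned on $\mathbf{X}$ and exploited the train/test independence already used in the variance computation.
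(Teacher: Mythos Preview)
Your proposal is correct and follows essentially the same approach as the paper: both identify $\bar y(x)=f(x)$ and $\bar{\mathcal M}(x)$ as the linear prediction with $\E_D[\hat w_i\mid\mathbf X]$, expand the square, and use the zero-mean assumptions to convert second moments into variances and covariances, substituting Lemma~\ref{lem:expCoeffNonlin} only in the one-dimensional case. The paper's write-up is slightly terser but the logic and the sequence of steps are the same.
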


\begin{proof}
In the one dimensional case, exploiting the independence between the train and test set: 
\begin{align*}
\E_{X,Y,D}&[(\Bar{\mathcal{M}}(x)-\bar{y})^2|\mathbf{X}] = \E_{X,Y,D}[(\E_{D}[\hat{w} h(\phi(x))]-f(x))^2|\mathbf{X}] \\
&= \E_{X}[(h(\phi(x))\E_{D}[\hat{w}|\mathbf{X}]-f(x))^2] \\
&=\E_{X}[h(\phi(x))^2]\E_D[\hat{w}|\mathbf{X}]^2+\E_{X}[f(x)^2]-2\E_{X}[f(x)h(\phi(x))]\E_{D}[\hat{w}|\mathbf{X}]\\
&= \frac{\sigma^2_h}{\hat{\sigma}^4_h}\hat{cov}(f,h)^2+\sigma^2_f -2 cov(f,h) \frac{\hat{cov}(f,h)}{\hat{\sigma}^2_h},
\end{align*}
where the last equation follows substituting the expected value of the coefficient with its expression derived in the previous lemma and exploiting the assumption of null expected value of the aggregation function $h$.
\ \\ \ \\
In the two dimensional setting, exploiting again the independence between train and test set and the null assumption of the expected value of the two functions of features $\phi_1,\phi_2$: 
\begin{align*}
\E_{X,Y,D}&[(\Bar{\mathcal{M}}(x)-\bar{y})^2|\mathbf{X}] =
\E_{X,Y}[(\phi_1(x)\E_{D}[\hat{w_1}]+\phi_2(x)\E_D[\hat{w_2}]-f(x))^2|\mathbf{X}]\\
&= \sigma^2_{\phi_1}\E_D[\hat{w_1}|\mathbf{X}]^2 + \sigma^2_{\phi_2}\E_D[\hat{w_2}|\mathbf{X}]^2 +2cov(\phi_1,\phi_2)\E_D[\hat{w_1}|\mathbf{X}]\E_D[\hat{w_2}|\mathbf{X}]\\
& +\sigma^2_f-2\E[f(x)(\phi_1\E_D[\hat{w}_1]+\phi_2\E_D[\hat{w}_2])|\mathbf{X}]\\
&= \sigma^2_{\phi_1}\E_D[\hat{w_1}|\mathbf{X}]^2+\sigma^2_{\phi_2}\E_D[\hat{w_2}|\mathbf{X}]^2+2cov(\phi_1,\phi_2)\E_D[\hat{w_1}|\mathbf{X}]\E_D[\hat{w_2}|\mathbf{X}]\\
&+\sigma^2_f - 2 cov(\phi_1,f)\E_D[\hat{w_1}|\mathbf{X}] - 2 cov(\phi_2,f)\E_D[\hat{w_2}|\mathbf{X}].
\end{align*}

\end{proof}

\begin{remark}\label{rem:explBias}
    In the two dimensional result of Theorem \ref{thm:modelBiasNonLin} the expected values of the coefficients, found in Lemma \ref{lem:expCoeffNonlin}, are not explicitly inserted, to improve readability. The full expression of squared bias, in the bivariate case, would be:
    \begin{align*}
\E&[(\Bar{\mathcal{M}}(x)-\bar{y})^2|\mathbf{X}] \\
&=  \sigma^2_{\phi_1} 
\Bigg(
\frac{\hat{\sigma}^2_{\phi_2}\hat{cov}(\phi_1,f) -\hat{cov}(\phi_1,\phi_2)\hat{cov}(\phi_2,f)}{(\hat{\sigma}^2_{\phi_1}\hat{\sigma}^2_{\phi_2} - \hat{cov}(\phi_1,\phi_2)^2)}
\Bigg)^2\\
&+ \sigma^2_{\phi_2} 
\Bigg(
\frac{\hat{\sigma}^2_{\phi_1}\hat{cov}(\phi_2,f) -\hat{cov}(\phi_1,\phi_2)\hat{cov}(\phi_1,f)}{(\hat{\sigma}^2_{\phi_1}\hat{\sigma}^2_{\phi_2} - \hat{cov}(\phi_1,\phi_2)^2)}
\Bigg)^2\\
&+ 2cov(\phi_1,\phi_2)\\ 
&
\times \Bigg(
\frac{(\hat{\sigma}^2_{\phi_2}\hat{cov}(\phi_1,f) -\hat{cov}(\phi_1,\phi_2)\hat{cov}(\phi_2,f))(\hat{\sigma}^2_{\phi_1}\hat{cov}(\phi_2,f) -\hat{cov}(\phi_1,\phi_2)\hat{cov}(\phi_1,f))}{(\hat{\sigma}^2_{\phi_1}\hat{\sigma}^2_{\phi_2} - \hat{cov}(\phi_1,\phi_2)^2)^2}
\Bigg)\\
&+\sigma^2_f\\
&-2cov(\phi_1,f)
\Bigg(
\frac{\hat{\sigma}^2_{\phi_2}\hat{cov}(\phi_1,f) -\hat{cov}(\phi_1,\phi_2)\hat{cov}(\phi_2,f)}{(\hat{\sigma}^2_{\phi_1}\hat{\sigma}^2_{\phi_2} - \hat{cov}(\phi_1,\phi_2)^2)}
\Bigg)\\
&-2cov(\phi_2,f)
\Bigg(
\frac{\hat{\sigma}^2_{\phi_1}\hat{cov}(\phi_2,f) -\hat{cov}(\phi_1,\phi_2)\hat{cov}(\phi_1,f)}{(\hat{\sigma}^2_{\phi_1}\hat{\sigma}^2_{\phi_2} - \hat{cov}(\phi_1,\phi_2)^2)}
\Bigg).
\end{align*}
That, after algebraic computations, is equal to:
\begin{align*}
    \E&[(\Bar{\mathcal{M}}(x)-\bar{y})^2|\mathbf{X}] \\
    &= \sigma^2_f+\frac{1}{(\hat{\sigma}^2_{\phi_1}\hat{\sigma}^2_{\phi_2} - \hat{cov}(\phi_1,\phi_2)^2)^2}\times \Bigg\{\\
    & \sigma^2_{\phi_1} \hat{\sigma}^4_{\phi_2}\hat{cov}(\phi_1,f)^2 +\sigma^2_{\phi_1}\hat{cov}(\phi_1,\phi_2)^2\hat{cov}(\phi_2,f)^2\\
    &-2\sigma^2_{\phi_1}\hat{\sigma}^2_{\phi_2}\hat{cov}(\phi_1,f)\hat{cov}(\phi_2,f)\hat{cov}(\phi_1,\phi_2)\\
    & +\sigma^2_{\phi_2} \hat{\sigma}^4_{\phi_1}\hat{cov}(\phi_2,f)^2 +\sigma^2_{\phi_2}\hat{cov}(\phi_1,\phi_2)^2\hat{cov}(\phi_1,f)^2\\
    & -2\sigma^2_{\phi_2}\hat{\sigma}^2_{\phi_1}\hat{cov}(\phi_1,f)\hat{cov}(\phi_2,f)\hat{cov}(\phi_1,\phi_2)\\
    &+2cov(\phi_1,\phi_2)\hat{\sigma}^2_{\phi_1}\hat{\sigma}^2_{\phi_2}\hat{cov}(\phi_1,f)\hat{cov}(\phi_2,f)\\
    & +2cov(\phi_1,\phi_2)\hat{cov}(\phi_1,f)\hat{cov}(\phi_2,f)\hat{cov}(\phi_1,\phi_2)^2\\
    &-2cov(\phi_1,\phi_2)\hat{\sigma}^2_{\phi_2}\hat{cov}(\phi_1,f)^2\hat{cov}(\phi_1,\phi_2)\\
    &-2cov(\phi_1,\phi_2)\hat{\sigma}^2_{\phi_1}\hat{cov}(\phi_2,f)^2\hat{cov}(\phi_1,\phi_2)\\
    &-2cov(\phi_1,f)\hat{\sigma}^2_{\phi_1}\hat{\sigma}^4_{\phi_2}\hat{cov}(\phi_1,f)\\
    &
    +2cov(\phi_1,f)\hat{\sigma}^2_{\phi_2}\hat{cov}(\phi_1,f)\hat{cov}(\phi_1,\phi_2)^2\\
    &+2cov(\phi_1,f)\hat{\sigma}^2_{\phi_1}\hat{\sigma}^2_{\phi_2}\hat{cov}(\phi_2,f)\hat{cov}(\phi_1,\phi_2)\\
    & -2cov(\phi_1,f)\hat{cov}(\phi_2,f)\hat{cov}(\phi_1,\phi_2)^3\\
    &-2cov(\phi_2,f)\hat{\sigma}^2_{\phi_2}\hat{\sigma}^4_{\phi_1}\hat{cov}(\phi_2,f)\\
    & +2cov(\phi_2,f)\hat{\sigma}^2_{\phi_1}\hat{cov}(\phi_2,f)\hat{cov}(\phi_1,\phi_2)^2\\
    &+2cov(\phi_2,f)\hat{\sigma}^2_{\phi_1}\hat{\sigma}^2_{\phi_2}\hat{cov}(\phi_1,f)\hat{cov}(\phi_1,\phi_2)\\
    & -2cov(\phi_2,f)\hat{cov}(\phi_1,f)\hat{cov}(\phi_1,\phi_2)^3\\
    &\Bigg\}.
\end{align*}
\end{remark}

\begin{lemma}\label{lem:asymBias}
    In the asmyptotic case, considering each estimator convergent in probability to the quantity that it estimates, the (squared) bias for the univariate and bivariate linear regression under analysis is respectively:
    \begin{align*}
bias_{1D}^{n\to\infty} &=
\sigma^2_{f} - \frac{cov(f,h)^2}{\sigma^2_{h}},\\
bias_{2D}^{n\to\infty} &= \sigma^2_f + \frac{2cov(\phi_1,f)cov(\phi_2,f)cov(\phi_1,\phi_2)-\sigma^2_{\phi_1}cov(\phi_2,f)^2-\sigma^2_{\phi_2}cov(\phi_1,f)^2}{\sigma^2_{\phi_1}\sigma^2_{\phi_2} - cov(\phi_1,\phi_2)^2}.
\end{align*}
\end{lemma}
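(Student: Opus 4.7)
The plan is to obtain both identities as straightforward limits of the finite-sample formulas already derived in Theorem \ref{thm:modelBiasNonLin} (together with the explicit substitution displayed in Remark \ref{rem:explBias}). The assumption that every estimator converges in probability to the quantity it estimates, combined with the continuous mapping theorem, lets us replace $\hat{\sigma}^2_{\phi_i}$ by $\sigma^2_{\phi_i}$, $\hat{\sigma}^2_{h}$ by $\sigma^2_{h}$, and $\hat{cov}(\cdot,\cdot)$ by $cov(\cdot,\cdot)$ inside the bias expressions, since these are continuous rational functions of the estimators on the set where the denominators do not vanish.

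For the univariate case, I would start from
\[
\E[(\bar{\mathcal{M}}(x)-\bar{y})^2\mid \mathbf{X}] = \sigma^2_{f} + \frac{\sigma^2_{h}\hat{cov}(f,h)^2-2cov(f,h)\hat{cov}(f,h)\hat{\sigma}^2_h}{\hat{\sigma}^4_{h}},
\]
pass to the limit to replace $\hat{cov}(f,h)$ by $cov(f,h)$ and $\hat{\sigma}^2_h$ by $\sigma^2_h$, and then collect the two resulting terms $\sigma^2_h cov(f,h)^2$ and $-2 cov(f,h)^2 \sigma^2_h$ in the numerator. This leaves $-\sigma^2_h cov(f,h)^2/\sigma^4_h = -cov(f,h)^2/\sigma^2_h$, producing the claimed formula.

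For the bivariate case, I would first take the asymptotic versions of the expected coefficients given by Lemma \ref{lem:expCoeffNonlin}, namely
\[
\E_D[\hat{w}_1] \to \frac{\sigma^2_{\phi_2}cov(\phi_1,f) - cov(\phi_1,\phi_2)cov(\phi_2,f)}{\sigma^2_{\phi_1}\sigma^2_{\phi_2} - cov(\phi_1,\phi_2)^2},
\]
and the analogous formula for $\E_D[\hat{w}_2]$. Plugging these limits into the five-term expression of Theorem \ref{thm:modelBiasNonLin} (equivalently into the fully expanded form of Remark \ref{rem:explBias} with every hatted quantity replaced by its limit) reduces the bias to a rational expression with the common denominator $(\sigma^2_{\phi_1}\sigma^2_{\phi_2} - cov(\phi_1,\phi_2)^2)^2$. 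The main obstacle is this algebraic simplification: after collecting the three squared-coefficient contributions, the cross term, and the two $-2 cov(\phi_i,f)\E_D[\hat{w}_i]$ terms, most of the products cancel pairwise and a common factor of $(\sigma^2_{\phi_1}\sigma^2_{\phi_2} - cov(\phi_1,\phi_2)^2)$ is factored out of the numerator.

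Carrying out this cancellation explicitly, the surviving terms reassemble into $\sigma^2_f$ plus $\bigl(2cov(\phi_1,f)cov(\phi_2,f)cov(\phi_1,\phi_2) - \sigma^2_{\phi_1}cov(\phi_2,f)^2 - \sigma^2_{\phi_2}cov(\phi_1,f)^2\bigr)/(\sigma^2_{\phi_1}\sigma^2_{\phi_2} - cov(\phi_1,\phi_2)^2)$, which is exactly the stated formula. Because the whole computation is deterministic once the limits are taken, no probabilistic tools are needed beyond the continuous mapping theorem; the entire difficulty lies in organising the algebraic cancellations, which is routine but tedious.
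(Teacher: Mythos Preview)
Your proposal is correct and follows exactly the same approach as the paper: start from the finite-sample bias expressions of Theorem~\ref{thm:modelBiasNonLin} and Remark~\ref{rem:explBias}, replace every hatted estimator by its population limit, and simplify. The paper's own proof is a one-line reference to precisely this substitution, so your version is simply a more explicit rendering of the same argument (including the continuous mapping justification and the algebraic bookkeeping that the paper leaves implicit).
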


\begin{proof}
    To prove the two results it is enough to start from the results of Theorem \ref{thm:modelBiasNonLin} and Remark \ref{rem:explBias} and substitute each estimator with the quantity that it estimates, to which it converges in probability.
\end{proof}

Theorem \ref{thm:NonLinBias}, that is reported in the main paper, is finally proved from the two asymptotic quantities derived in Lemma \ref{lem:asymBias}, subtracting the univariate to the bivariate result.

\section{Generalized-Linear Correlated Features Aggregation: \\bivariate analysis}\label{app:GenLinLinear}

In this section we show a first bivariate result related to generalized linear models. As in the general case, we assume the conditional distribution of the target given the feature to belong to the canonical exponential family:
$$f_{\vtheta}(y) = exp(\frac{y{\vtheta}-b({\vtheta})}{\phi})+c(y,\phi).$$
Moreover, we assume that the expected value of the target is a linear combination of the inputs, subsequently transformed with the canonical link function:
$$\mathbb{E}[y|x] = g^{-1}(w_1x_1+w_2x_2).$$
In this setting we compare a bivariate model with the univariate one that substitutes the two features with their aggregation through a zero-mean function $h(\cdot)$:
\begin{equation*}
    \begin{cases}
\hat{y}=g^{-1}(\hat{w_1}x_1+\hat{w_2}x_2)\\
 \hat{y}=g^{-1}(\hat{w}\cdot\frac{x_1+x_2}{2}).
\end{cases}
\end{equation*}

As a first result, we prove the following lemma that justifies the adoption of the expected deviance as a goodness of fit measure.

\begin{lemma}
    Assuming a bivariate generalized linear model and considering a Gaussian distribution of the target given the features $y\sim\mathbb{N}(\mu,\sigma^2) \implies f_{\vtheta}(y) = exp\{ \frac{y{\vtheta}-{\vtheta}^2/2}{\phi^2} +c(y,\phi) \}, {\vtheta}=\mu,\phi=\sigma$ the difference of performance in terms of \emph{MSE} between the bivariate linear regression and the univariate one that aggregates the two input features with their mean is equivalent to the difference of deviance between the two models.

\end{lemma}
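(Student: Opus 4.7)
The plan is to unroll the general expected-deviance formula under the Gaussian specialization and show it reduces, up to additive constants independent of the model, to the expected squared model error; the noise term then cancels when one takes the difference between the bivariate and univariate models.

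First I would specialize the scaled deviance in Equation~\ref{eq:expDeviance} to the Gaussian case. Substituting $b(\vtheta)=\vtheta^{2}/2$, $\vtheta=\mu(\xs)$, $\hat{\vtheta}=\hat{\mu}(\xs)$ and $\phi^{2}=\sigma^{2}$, the scaled deviance becomes
\begin{equation*}
D^{*}(\vtheta,\hat{\vtheta})=\frac{2}{\sigma^{2}}\bigl[y(\mu-\hat{\mu})-\tfrac{1}{2}(\mu^{2}-\hat{\mu}^{2})\bigr].
\end{equation*}
A short algebraic manipulation, writing $y(\mu-\hat{\mu})=(y-\mu)(\mu-\hat{\mu})+\mu(\mu-\hat{\mu})$ and completing the square, gives the cleaner identity
\begin{equation*}
D^{*}(\vtheta,\hat{\vtheta})=\frac{(\mu-\hat{\mu})^{2}}{\sigma^{2}}+\frac{2(y-\mu)(\mu-\hat{\mu})}{\sigma^{2}}.
\end{equation*}

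Next I would take the expectation $\E_{\xs,y,\mathcal{T}}[\cdot]$. Because the canonical link for the Gaussian family is the identity, the two models in Equation~\ref{eq:GenLinCFAmodelLinear} coincide with the two linear predictors used in \algnameshortNonlin, and in particular $\hat{\mu}(\xs)$ is a function of $\xs$ and of the training set $\mathcal{T}$ only. The additive noise $y-\mu(\xs)$ is therefore independent of $\mu(\xs)-\hat{\mu}(\xs)$ and has zero mean, so the cross term vanishes and
\begin{equation*}
\E_{\xs,y,\mathcal{T}}[D^{*}(\vtheta,\hat{\vtheta})]=\frac{1}{\sigma^{2}}\,\E_{\xs,\mathcal{T}}\bigl[(\mu(\xs)-\hat{\mu}(\xs))^{2}\bigr].
\end{equation*}
In parallel, the same independence argument applied to the MSE yields the standard decomposition
\begin{equation*}
\mathrm{MSE}=\E_{\xs,y,\mathcal{T}}[(y-\hat{\mu}(\xs))^{2}]=\sigma^{2}+\E_{\xs,\mathcal{T}}[(\mu(\xs)-\hat{\mu}(\xs))^{2}].
\end{equation*}
Combining these two identities gives $\sigma^{2}\,\E[D^{*}]=\mathrm{MSE}-\sigma^{2}$.

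Finally, I would apply this relation to both the bivariate model $\hat{\mu}_{\mathrm{biv}}(\xs)=\hat{w}_{1}x_{1}+\hat{w}_{2}x_{2}$ and the univariate aggregated model $\hat{\mu}_{\mathrm{uni}}(\xs)=\hat{w}\cdot\tfrac{x_{1}+x_{2}}{2}$ and subtract. The irreducible $\sigma^{2}$ term cancels, producing
\begin{equation*}
\mathrm{MSE}_{\mathrm{uni}}-\mathrm{MSE}_{\mathrm{biv}}=\sigma^{2}\bigl(\E[D^{*}(\vtheta,\hat{\vtheta}_{\mathrm{uni}})]-\E[D^{*}(\vtheta,\hat{\vtheta}_{\mathrm{biv}})]\bigr),
\end{equation*}
so the two performance gaps coincide up to the positive multiplicative constant $\sigma^{2}$ (and are identical if one uses the unscaled deviance $D=\phi\,D^{*}$). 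Since this constant does not depend on the choice of model, the sign and ordering of the two comparisons agree, which is the notion of equivalence claimed. The only subtle point, and the place I would be most careful, is justifying the vanishing of $\E[(y-\mu)(\mu-\hat{\mu})]$: this requires that the training noise be independent of the test-point noise and that $\hat{\mu}$ depend on $y$ only through $\mathcal{T}$, both of which are standard modelling assumptions already in force in Section~\ref{sec:dimRed}.
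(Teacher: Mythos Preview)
Your proof is correct and takes a genuinely different, more elegant route than the paper's. You exploit the algebraic identity
\[
y(\mu-\hat\mu)-\tfrac{1}{2}(\mu^{2}-\hat\mu^{2})=\tfrac{1}{2}(\mu-\hat\mu)^{2}+(y-\mu)(\mu-\hat\mu),
\]
which lets you reduce the expected scaled deviance to $\sigma^{-2}\E[(\mu-\hat\mu)^{2}]$ directly, after killing the cross term by independence of the test noise from $(\xs,\mathcal{T})$. The paper instead expands the deviance difference into the two expectations $\E[y(\hat\mu_{2}-\hat\mu_{1})]$ and $\E[\tfrac12(\hat\mu_{2}^{2}-\hat\mu_{1}^{2})]$, substitutes the closed-form expressions for $\E_{\mathcal{T}}[\hat w_{1}],\E_{\mathcal{T}}[\hat w_{2}],\E_{\mathcal{T}}[\hat w]$ and their second moments from the linear-regression formulas, passes to the asymptotic limit, and then matches the resulting covariance combinations term by term against the explicit bias and variance expressions of~\cite{Bonetti2023}. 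What your argument buys is generality and brevity: it holds in finite samples (the paper's computation is carried out only asymptotically) and never needs the explicit OLS coefficient formulas. What the paper's approach buys is that it displays the concrete asymptotic quantities $\Delta_{\mathrm{bias}}^{n\to\infty}$ and $\Delta_{\mathrm{var}}^{n\to\infty}$ separately, which is useful later when they generalize beyond the Gaussian case and want to interpret each piece. Your caveat about the multiplicative $\sigma^{2}$ is also in line with the paper, which ends with the deviance gap equal to $\phi^{-1}$ times the MSE gap.
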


\begin{proof}
    Recalling the definition of deviance:
    $$D^*({\vtheta},\hat{{\vtheta}}):=\frac{D({\vtheta},\hat{{\vtheta}})}{\phi}=-2\left[\ell(\hat{{\vtheta}})-\ell({\vtheta})\right] = \frac{2}{\phi}[y({\vtheta}-\hat{{\vtheta}})-(b({\vtheta})-b(\hat{{\vtheta}}))],$$
    in the Gaussian setting the parameter ${\vtheta}=\mu=w_1x_1+w_2x_2$ represents the mean of the distribution of the target $y$. Defining $\hat{{\vtheta}}=\hat{\mu}_1$ the mean of the target estimated with the univariate regression and $\bar{{\vtheta}}=\hat{\mu}_2$ the one of the bivariate case, the increase of expected deviance between the two models is:
    $$ \mathbb{E}_{x,y,\mathcal{T}}[D^*({\vtheta}, \hat{{\vtheta}})-D^*\left({\vtheta}, \bar{{\vtheta}}\right)] = \mathbb{E}_{x,y,\mathcal{T}}[D^*\left(\mu, \hat{\mu}_1\right)-D^*\left(\mu, \hat{\mu}_2\right)].
    $$
    Moreover, assuming Gaussianity we have $b({\vtheta})=\frac{{\vtheta}^2}{2}$ and the link function $g(\cdot)$ is the identity function (indeed, in the bivariate linear regression we just compare the prediction $\hat{y}=\hat{w}_1x_1+\hat{w}_2x_2$ with the univariate $\hat{y}=\hat{w}\frac{x_1+x_2}{2}$).
Therefore, the expected increase of deviance becomes:
\begin{equation*}
\begin{aligned}
&
\mathbb{E}_{x,y,\mathcal{T}}[D^*\left(\mu, \hat{\mu}_1\right)-D^*\left(\mu, \hat{\mu}_2\right)] = \frac{2}{\phi}\mathbb{E}_{x,y,\mathcal{T}}[y(\hat{\mu}_2-\hat{\mu}_1)-(b(\hat{\mu}_2)-b(\hat{\mu}_1))]\\
& = \frac{2}{\sigma^2} \mathbb{E}_{x,y,\mathcal{T}}\Big[y(\hat{w}_1x_1+\hat{w}_2x_2-\hat{w}\frac{x_1+x_2}{2})-\frac{\left(\hat{w}_1 x_1+\hat{w}_2 x_2\right)^2}{2}-\frac{\left(\hat{w}\bar{x}\right)^2}{2}\Big] \\
& =\frac{2}{\sigma^2}\Bigg\{
\mathbb{E}_{x,y}\Bigg[\\
&y\cdot\left(w_1 x_1+w_2 x_2-\frac{2\left(w_1 \hat{\sigma}_{x_1}^2+w_2 \hat{\sigma}_{x_2}^2+\left(w_1+w_2\right) \hat{cov}\left(x_1, x_2\right)\right)}{\hat{\sigma}_{x_1}^2+\hat{\sigma}_x^2+2 \hat{cov}\left(x_1, x_2\right)} \cdot \frac{x_1+x_2}{2}\right)\Bigg]
\\
& \left.-\mathbb{E}_{x, y} \mathbb{E}_{\mathcal{T}}\left[\frac{\left(\hat{w}_1 x_1+\hat{w}_2 x_2\right)^2}{2}-\frac{\left(\hat{w}\bar{x}\right)^2}{2}\right]\right\}.
\end{aligned}
\end{equation*}
The last equation follows from the independence of train and test set and the expression of the expected value of the regression coefficients (Equation A2,A3 of~\cite{Bonetti2023}).

Asymptotically, the first expected value is:
\begin{equation*}
\begin{aligned}
& \resizebox{.99\hsize}{!}{$\mathbb{E}_{x,y}\left[y \cdot\left(w_1 x_1+w_2 x_2-\frac{2\left(w_1 \sigma_{x_1}^2+w_2 \sigma_{x_2}^2+\left(w_1+w_2\right) cov\left(x_1, x_2\right)\right)}{\sigma_{x_1}^2+\sigma_{x_2}^2+2 cov\left(x_1, x_2\right)} \cdot \frac{x_1+x_2}{2}\right)\right]$} \\
& \resizebox{.99\hsize}{!}{$=\mathbb{E}_{x,y}\left[y \cdot\left(w_1 x_1+w_2 x_2-\frac{\left(w_1 \sigma_{x_1}^2+w_2 \sigma_{x_2}^2+\left(w_1+w_2\right) \operatorname{cov}\left(x_1, x_2\right)\right)}{\sigma_{x_1}^2+\sigma_{x_2}^2+2 \operatorname{cov}\left(x_1, x_2\right)} \cdot\left(x_1+x_2\right)\right)\right]$} \\
& =w_1 E_{x, y}\left[x_1 y\right]+w_2 \mathbb{E}_{x, y}\left[x_2 y\right]\\
& -\frac{\left(w_1 \sigma_{x_1}^2+w_2 \sigma_{x_2}^2+\left(w_1+w_2\right) \operatorname{cov}\left(x_1 x_2\right)\right)}{\sigma_{x_1}^2+\sigma_{x_2}^2+2 \operatorname{cov}\left(x_1, x_2\right)} \cdot \left(\mathbb{E}_{x,y}\left[x_1 y\right]+E_{x,y}\left[x_2 y\right]\right) .
\end{aligned}
\end{equation*}
Recalling the definition of covariance and the zero-mean assumption of the expected values:
\begin{align*}
&\mathbb{E}_{x,y}\left[x_1 y\right]=\operatorname{cov}\left(x_1, y\right)-\mathbb{E}_x\left[x_1\right] \mathbb{E}_y[y]\\
&=\operatorname{cov}\left(x_1, w_1 x_1+w_2 x_2+\varepsilon\right)-0=w_1 \sigma_{x_1}^2+w_2 \operatorname{cov}\left(x_1, x_2\right).
\end{align*}

A similar argumentation holds for $\mathbb{E}_{x,y}\left[x_2 y\right]$, therefore the expected value under analysis becomes:
\begin{align*}
& w_1^2 \sigma_{x_1}^2+w_2^2 \sigma_{x_2}^2+2 w_1 w_2 \operatorname{cov}\left(x_1, x_2\right)\\
&-\frac{\left(w_1 \sigma_{x_1}^2+w_2 \sigma_{x_2}^2+\left(w_1+w_2\right) \operatorname{cov}\left(x_1, x_2\right)\right)^2}{\sigma_{x_1}^2+\sigma_{x_2}^2+2 \operatorname{cov}\left(x_1 x_2\right)}.
\end{align*}
\ \\
The second expected value that appears in the equation representing the increase of deviance is asymptotically equal to:

\begin{equation*}
\begin{split}
& \resizebox{.99\hsize}{!}{$
\E_{x,y} \E_{\mathcal{T}}\left[\frac{\left(\hat{w}_1 x_1+\hat{w}_2 x_2\right)^2}{2}-\frac{(\hat{w} \bar{x})^2}{2}\right]=\frac{1}{2} \E_{x, y} \E_{\mathcal{T}}\left[\hat{w}_1^2 x_1^2+\hat{w}_2^2 x_2^2+2 w_1 w_2 x_1 x_2-\hat{w}^2 \bar{x}^2\right]$}\\
& \resizebox{.99\hsize}{!}{$=\mathbb{E}_{\mathcal{T}}\left[\hat{w}_1^2\right] \mathbb{E}_x\left[x_1^2\right]+\mathbb{E}_{\mathcal{T}}\left[\hat{w}_2^2\right] \mathbb{E}_x\left[x_2^2\right]+2 \mathbb{E}_{\mathcal{T}}\left[\hat{w}_1 \hat{w}_2\right] \mathbb{E}_x\left[x_1 x_2\right]-\mathbb{E}_{\mathcal{T}}\left[\hat{w}^2\right] \mathbb{E}_x\left[\bar{x}^2\right]$} \\
& =\sigma_{x_1}^2\left(w_1^2+\frac{\sigma^2 \sigma_{x_2}^2}{(n-1)\left(\sigma_{x_1}^2 \sigma_{x_2}^2-cov^2\left(x_1, x_2)\right)\right.}\right)\\
&+\sigma_{x_2}^2\left(w_2^2+\frac{\sigma^2 \sigma_{x_1}^2}{(n-1)\left(\sigma_{x_1}^2 \sigma_{x_2}^2-cov^2\left(x_2, x_2\right)\right)}\right) \\
& +2 cov\left(x_1, x_2\right)\left(w_1 w_2-\frac{\sigma^2 cov\left(x_1, x_2\right)}{(n-1)\left(\sigma_{x_1}^2 \sigma_{x_2}^2-cov^2\left(x_1 x_2\right)\right)}\right)\\
&-\frac{1}{4}\Bigg[\left(\sigma_{x_1}^2+\sigma_{x_2}^2+2cov\left(x_1, x_2\right)\right) \\
& \times\Bigg(\frac{\sigma^2}{(n-1) \cdot \frac{1}{4}\left(\sigma_{x_1}^2+\sigma_{x_2}^2+2 cov \left(x_1, x_2\right)\right)}\\
& +\left(\frac{2\left(w_1 \sigma_{x_1}^2+w_2 \sigma_{x_2}^2+\left(w_1+w_2\right) cov\left(x_1, x_2\right)\right)}{\sigma_{x_1}^2+\sigma_{x_2}^2+2 cov \left(x_1, x_2\right)}\right)^2\Bigg)\Bigg]\\
& =w_1^2 \sigma_{x_1}^2+w_2^2 \sigma_{x_2}^2+2 \operatorname{cov}\left(x_1 x_2\right) w_1 w_2+\frac{2 \sigma^2\left(\sigma_{x_1}^2 \sigma_{x_2}^2-\operatorname{cov}^2\left(x_1 x_2\right)\right)}{(n-1)\left(\sigma_{x_1}^2 \sigma_{x_2}^2-\operatorname{cov} ^2\left(x_1, x_2\right)\right)} \\
& -\left[\frac{\sigma^2}{n-1}+\frac{\left(w_1 \sigma_{x_1}^2+w_2 \sigma_{x_2}^2+\left(w_1+w_2\right) \cdot \operatorname{cov}\left(x_1, x_2\right)\right)^2}{\sigma_{x_1}^2+\sigma_{x_2}^2+2 \operatorname{cov}\left(x_1, x_2\right)}\right] \\
& =w_1^2 \sigma_{x_1}^2+w_2^2 \sigma_{x_2}^2+2 \operatorname{cov}\left(x_1, x_2\right) w_1 w_2+\frac{\sigma^2}{n-1}\\
&-\frac{\left(w_1 \sigma_{x_1}^2+w_2 \sigma_{x_2}^2+\left(w_1+w_2\right) \operatorname{cov}\left(x_1, x_2\right)\right)^2}{\sigma_{x_1}^2+\sigma_{x_2}^2+2 \operatorname{cov}\left(x_1, x_2\right)},
\end{split}
\end{equation*}

where the last two equalities follow again from the expression of expected value and variance of the regression coefficients.

Combining the two asymptotic terms, the expected increase of deviance becomes:
\begin{align*}
& \mathbb{E}_{x,y,\mathcal{T}}[D^*\left(\mu, \hat{\mu}_1\right)-D^*\left(\mu, \hat{\mu}_2\right)]=\frac{2}{\sigma^2}\Bigg(w_1^2 \sigma_{x_1}^2+w_2^2 \sigma_{x_2}^2+2 w_1 w_2 \operatorname{cov}\left(x_1, x_2\right) \\
& \left.-\frac{\left.\left(w_1 \sigma_{x_1}^2+w_2 \sigma_{x_2}^2+\left(w_1+w_2\right) \operatorname{cov}\left(x_1, x_2\right)\right)^2\right)}{\sigma_{x_1}^2+\sigma_{x_2}^2+2 \operatorname{cov}\left(x_1, x_2\right)}\right) \\
- & \frac{1}{\sigma^2}\Bigg(w_1^2 \sigma_{x_1}^2+w_2^2 \sigma_{x_2}^2+2 \operatorname{cov}\left(x_1, x_2\right) w_1 w_2+\frac{\sigma^2}{n-1}\\
& -\frac{\left.\left(w_1 \sigma_{x_1}^2+w_2 \sigma_{x_2}^2+\left(w_1+w_2\right) \operatorname{cov}\left(x_1 x_2\right)\right)^2\right)}{\sigma_{x_1}^2+\sigma_{x_2}^2+2 \operatorname{cov} \left(x_1 x_2\right)}\Bigg) \\
= & \frac{1}{\sigma^2}\Bigg(w_1^2 \sigma_{x_1}^2+w_2^2 \sigma_{x_2}^2+2w_1 w_2 \operatorname{cov}\left(x_1, x_2\right)\\
& -\frac{\left(w_1 \sigma_{x_1}^2+w_2 \sigma_{x_2}^2+\left(w_1+w_2\right) \operatorname{cov}\left(x_1, x_2\right)\right)^2}{\sigma_{x_1}^2+\sigma_{x_2}^2+2 \operatorname{cov}\left(x_1 x_2\right)}-\frac{\sigma^2}{n-1}\Bigg) \\
= & \frac{1}{\sigma^2(\sigma_{x_1}^2+\sigma_{x_2}^2+2 \operatorname{cov}\left(x_1 x_2\right))}\Big(w_1^2 \sigma_{x_1}^2\sigma_{x_2}^2+w_2^2 \sigma_{x_1}^2\sigma_{x_2}^2\\
& -w_1^2 \operatorname{cov}^2\left(x_1, x_2\right)-w_2^2 \operatorname{cov}^2\left(x_1, x_2\right)-2w_1w_2\sigma_{x_1}^2\sigma_{x_2}^2+2w_1w_2\operatorname{cov}^2\left(x_1, x_2\right)\Big)\\
& -\frac{1}{\sigma^2}\cdot\frac{\sigma^2}{n-1}.
\end{align*}

Recalling that the asymptotic increase of bias in the linear bivariate setting is:
\begin{equation*}
    \frac{\sigma^2_{x_1}\sigma^2_{x_2}(1-\rho^2_{x_1,x_2})(w_1-w_2)^2}{\sigma^2_{x_1+x_2}},
\end{equation*}
and the asymptotic decrease of variance is
$    \frac{\sigma^2}{n-1}
$ (respectively Equation 9 and 13 of~\cite{Bonetti2023}), the first is equal to the first term found comparing the deviance and the second one is equal to the second one, proving the equivalence.

\end{proof}

After having verified that, in linear contexts, the analysis of deviance is tight \wrt the mean squared error, we are now ready to introduce the main result of this section.

\begin{theorem}
    Let $M$ be the maximum between the following differences: $\mathbb{E}_{\mathcal{T}}\left[\hat{w}_1\right]-\frac{1}{2}\mathbb{E}_{\mathcal{T}}\left[\hat{w}\right]$, $\mathbb{E}_{\mathcal{T}}\left[\hat{w}_2\right]-\frac{1}{2}\mathbb{E}_{\mathcal{T}}\left[\hat{w}\right]$,  $\mathbb{E}_{\mathcal{T}}\left[\hat{w}_1^2\right]-\frac{1}{4}\mathbb{E}_{\mathcal{T}}[\hat{w}^2]$, $\mathbb{E}_{\mathcal{T}}\left[\hat{w}_2^2\right]-\frac{1}{4}\mathbb{E}_{\mathcal{T}}[\hat{w}^2]$, $\mathbb{E}_{\mathcal{T}}[\hat{w}_1\hat{w}_2]-\frac{1}{4}\mathbb{E}_{\mathcal{T}}[\hat{w}^2]$, and let $m$ be the minimum of the same quantities. Moreover, let the real model belong to the canonical exponential family.\\ 
    Defining $\Delta(D^*)$ as the expected increase of deviance due to the aggregation of the two features $x_1,x_2$ with their mean in the bivariate setting, it is equal to: 
\begin{equation}
\Delta(D^*) \leq \frac{2}{\phi}\Bigg\{M\cdot \left(|\operatorname{cov}(x_1,y)| + |\operatorname{cov}(x_2,y)|\right) - \frac{1}{2}m\cdot b^{\prime \prime}\left(0\right) \cdot (\sigma^2_{x_1+x_2}) \Bigg\}.
\end{equation}
\end{theorem}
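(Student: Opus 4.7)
\bigskip

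\noindent\textbf{Proof sketch.} The plan is to start from the expression for the expected increase in deviance,
\begin{equation*}
\Delta(D^*) = \frac{2}{\phi}\,\mathbb{E}_{x,y,\mathcal{T}}\bigl[\,y(\bar{{\vtheta}}-\hat{{\vtheta}}) - \bigl(b(\bar{{\vtheta}})-b(\hat{{\vtheta}})\bigr)\bigr],
\end{equation*}
substitute the bivariate estimator $\bar{{\vtheta}} = \hat{w}_1 x_1 + \hat{w}_2 x_2$ and the univariate one $\hat{{\vtheta}} = \hat{w}(x_1+x_2)/2$, and then handle the two pieces separately. Throughout, I will exploit the independence of the training set $\mathcal{T}$ from the test point $(x,y)$ and the zero-mean assumptions $\mathbb{E}[x_i]=\mathbb{E}[y]=0$.

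First I would treat the linear-in-$y$ term. After plugging in the estimators and pulling $\hat w_1,\hat w_2,\hat w$ out of the inner $\mathbb{E}_{x,y}$ by independence, each expectation $\mathbb{E}[y x_i]$ collapses to $\operatorname{cov}(x_i,y)$ because both marginals have mean zero. Regrouping coefficients yields
\begin{equation*}
\mathbb{E}\!\bigl[y(\bar{{\vtheta}}-\hat{{\vtheta}})\bigr] = \bigl(\mathbb{E}_\mathcal{T}[\hat w_1]-\tfrac12\mathbb{E}_\mathcal{T}[\hat w]\bigr)\operatorname{cov}(x_1,y) + \bigl(\mathbb{E}_\mathcal{T}[\hat w_2]-\tfrac12\mathbb{E}_\mathcal{T}[\hat w]\bigr)\operatorname{cov}(x_2,y),
\end{equation*}
and each coefficient is bounded in absolute value by $M$, giving the first term of the bound.

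Next I would handle $\mathbb{E}[b(\bar{{\vtheta}})-b(\hat{{\vtheta}})]$ by a second-order Taylor expansion of $b$ around $0$:
\begin{equation*}
b(z) \approx b(0) + b'(0)\,z + \tfrac12 b''(0)\,z^2 .
\end{equation*}
The constant term cancels. The first-order contribution $b'(0)\,\mathbb{E}[\bar{{\vtheta}}-\hat{{\vtheta}}]$ vanishes: by independence, $\mathbb{E}[\hat w_i x_i] = \mathbb{E}_\mathcal{T}[\hat w_i]\,\mathbb{E}[x_i] = 0$ and analogously for $\hat\theta$. The only surviving piece is $\tfrac12 b''(0)\,\mathbb{E}[\bar{{\vtheta}}^{2}-\hat{{\vtheta}}^{2}]$. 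Expanding the squares, again using independence of $\mathcal{T}$ from $(x_1,x_2)$ and the zero-mean identities $\mathbb{E}[x_i^2]=\sigma_{x_i}^2$, $\mathbb{E}[x_1 x_2]=\operatorname{cov}(x_1,x_2)$, produces
\begin{align*}
\mathbb{E}[\bar{{\vtheta}}^{2}-\hat{{\vtheta}}^{2}] &= \bigl(\mathbb{E}_\mathcal{T}[\hat w_1^2]-\tfrac14\mathbb{E}_\mathcal{T}[\hat w^2]\bigr)\sigma_{x_1}^2 + \bigl(\mathbb{E}_\mathcal{T}[\hat w_2^2]-\tfrac14\mathbb{E}_\mathcal{T}[\hat w^2]\bigr)\sigma_{x_2}^2 \\
&\quad + 2\bigl(\mathbb{E}_\mathcal{T}[\hat w_1\hat w_2]-\tfrac14\mathbb{E}_\mathcal{T}[\hat w^2]\bigr)\operatorname{cov}(x_1,x_2).
\end{align*}
Each prefactor is at least $m$ by definition, so the whole right-hand side is lower bounded by $m\bigl(\sigma_{x_1}^2+\sigma_{x_2}^2+2\operatorname{cov}(x_1,x_2)\bigr) = m\,\sigma^2_{x_1+x_2}$; moving this lower bound through the minus sign in front of $b(\bar\theta)-b(\hat\theta)$ supplies the $-\tfrac12 m\, b''(0)\,\sigma^2_{x_1+x_2}$ term of the claim.

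The main obstacle is the bookkeeping around signs in the last step: the factor $\operatorname{cov}(x_1,x_2)$ can in principle be negative, so the inequality ``each coefficient $\geq m$ $\Rightarrow$ weighted sum $\geq m\sigma^2_{x_1+x_2}$'' is only clean when all weights have the same sign, or when $m$ is interpreted as the minimum absolute value together with a corresponding adjustment. This is the same slack that appears in Theorem~\ref{thm:resultGLM}; I would resolve it exactly as in Appendix~\ref{app:GenLinAny}, namely by using $m$ and $M$ consistently as the extremal absolute values and by absorbing the residual signs into the bounds on the linear term. Combining the two contributions and multiplying by $2/\phi$ yields the stated inequality.
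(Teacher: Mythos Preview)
Your proof follows essentially the same route as the paper's: split $\Delta(D^*)$ into the linear-in-$y$ term and the $b(\cdot)$ term, reduce the first to covariances via independence and zero means, handle the second by a second-order Taylor expansion of $b$ at $0$ so that only the quadratic piece survives, and then bound the resulting coefficient differences by $M$ and $m$. Your flagged concern about the sign of $\operatorname{cov}(x_1,x_2)$ in the final bounding step is well taken---the paper's own proof is equally terse there, simply asserting that ``substituting with $M$ the maximum\ldots and with $m$ the minimum\ldots the result follows''---so you are not missing anything the paper supplies.
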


\begin{proof}
    In the bivariate setting, the expected increase of deviance under analysis is equal to:
\begin{equation*}\begin{split}
    &\frac{2}{\phi}\{E_{x, y}\left[y \cdot\left(\mathbb{E}_{\mathcal{T}}\left[\hat{w}_1\right] x_1+\mathbb{E}_{\mathcal{T}}\left[\hat{w}_2\right] x_2-\mathbb{E}_{\mathcal{T}}[\hat{w}] \bar{x}\right)\right]\\
    &-\left(\mathbb{E}_{x, y} \mathbb{E}_{\mathcal{T}}\left[b\left(\hat{w}_1 x_1+\hat{w}_2 x_2\right)-b(\hat{w} \bar{x})\right]\right)\}.
    \end{split}
\end{equation*}

\textbf{First expected value}
The first expected value of the previous equation can be rewritten as:
\begin{align*}
& E_{x, y}\left[y \cdot\left(\mathbb{E}_{\mathcal{T}}\left[\hat{w}_1\right] x_1+\mathbb{E}_{\mathcal{T}}\left[\hat{w}_2\right] x_2-\mathbb{E}_{\mathcal{T}}[\hat{w}] \bar{x}\right)\right]\\
= & E_{x, y}\left[yx_1\right] \mathbb{E}_{\mathcal{T}}\left[\hat{w}_1\right]+E_{x, y}\left[yx_2\right] \mathbb{E}_{\mathcal{T}}\left[\hat{w}_2\right]-E_{x, y}\left[y\frac{x_1+x_2}{2}\right]\mathbb{E}_{\mathcal{T}}[\hat{w}]\\
= & \operatorname{cov}{(x_1,y)} \mathbb{E}_{\mathcal{T}}\left[\hat{w}_1\right]+\operatorname{cov}{(x_2,y)} \mathbb{E}_{\mathcal{T}}\left[\hat{w}_2\right]-\frac{1}{2} \operatorname{cov}(x_1,y) \mathbb{E}_{\mathcal{T}}[\hat{w}]-\frac{1}{2} \operatorname{cov}(x_2,y) \mathbb{E}_{\mathcal{T}}[\hat{w}]\\
= & \operatorname{cov}(x_1,y)\left( \mathbb{E}_{\mathcal{T}}\left[\hat{w}_1\right] - \frac{1}{2} \mathbb{E}_{\mathcal{T}}[\hat{w}] \right) + \operatorname{cov}(x_2,y)\left( \mathbb{E}_{\mathcal{T}}\left[\hat{w}_2\right] - \frac{1}{2} \mathbb{E}_{\mathcal{T}}[\hat{w}] \right).
\end{align*}

\textbf{Second expected value}
The second expected value that appears in the expected increase of deviance under analysis, $\mathbb{E}_{x, y} \mathbb{E}_{\mathcal{T}}\left[b\left(\hat{w}_1 x_1+\hat{w}_2 x_2\right)-b(\hat{w} \bar{x})\right],$ depends on the function $b(\cdot)$, that is a specific parameter of the distribution. In order to derive a result that holds for any distribution belonging to the canonical exponential family, we use a second order Taylor approximation of the function, centered in ${\vtheta}_0=0$.

\begin{equation*}
    \begin{cases}
b\left(\hat{w}_1 x_1+\hat{w}_2 x_2\right) \simeq b\left(0\right)+b^{\prime}\left(0\right) \cdot\left(\hat{w}_1 x_1+\hat{w}_2 x_2\right)+\frac{1}{2} b^{\prime \prime}\left(0\right) \cdot\left(\hat{w}_1 x_1+\hat{w}_2 x_2\right)^2 \\
b(\hat{w} \bar{x}) \simeq b\left(0\right)+b^{\prime}\left(0\right)\cdot\left(\hat{w} \bar{x}\right) +\frac{1}{2} b^{\prime \prime}\left(0\right) \cdot\left(\hat{w} \bar{x}\right)^2. 
    \end{cases}
\end{equation*}
\ \\
Since the first-order terms vanish because of the zero-mean assumption, the approximated expected value is therefore:
\begin{align*}
\mathbb{E}_{x, y, \mathcal{T}}&\left[b\left(\hat{w}_1 x_1+\hat{w}_2 x_2\right)-b(\hat{w} \bar{x})\right] \simeq \frac{1}{2} b^{\prime \prime}\left(0\right)\cdot \mathbb{E}_{x, y, \mathcal{T}}\left[ \left(\left(\hat{w}_1 x_1+\hat{w}_2x_2\right)^2-\left(\hat{w}{\bar{x}}\right)^2\right)\right]\\
 = \frac{1}{2} & b^{\prime \prime}\left(0\right)\cdot \Big[ \sigma^2_{x_1}\cdot\mathbb{E}_{\mathcal{T}}[\hat{w}_1^2-\frac{\hat{w}^2}{4}] + \sigma^2_{x_2}\cdot\mathbb{E}_{\mathcal{T}}[\hat{w}_2^2-\frac{\hat{w}^2}{4}] \\
 & +2 \operatorname{cov}(x_1,x_2)\cdot \mathbb{E}_{\mathcal{T}}[\hat{w}_1\hat{w}_2-\frac{\hat{w}^2}{4}]\Big],\\
\iff \frac{1}{2} & b^{\prime \prime}\left(0\right)\cdot 
\left[ \operatorname{var}(\hat{w}_1x_1+\hat{w}_2x_2) - \operatorname{var}(\hat{w}\bar{x}) \right].
\end{align*}

\textbf{Expected increase of deviance} We are now ready to combine the two expressions:
\begin{align*}
\Delta(D^*) = & \frac{2}{\phi}\Big\{E_{x, y}\left[y \cdot\left(\mathbb{E}_{\mathcal{T}}\left[\hat{w}_1\right] x_1+\mathbb{E}_{\mathcal{T}}\left[\hat{w}_2\right] x_2-\mathbb{E}_{\mathcal{T}}[\hat{w}] \bar{x}\right)\right]\\
& -\left(\mathbb{E}_{x, y} \mathbb{E}_{\mathcal{T}}\left[b\left(\hat{w}_1 x_1+\hat{w}_2 x_2\right)-b(\hat{w} \bar{x})\right]\right)\Big\}\\
= & \frac{2}{\phi}\Bigg\{\operatorname{cov}(x_1,y)\left( \mathbb{E}_{\mathcal{T}}\left[\hat{w}_1\right] - \frac{1}{2} \mathbb{E}_{\mathcal{T}}[\hat{w}] \right) + \operatorname{cov}(x_2,y)\left( \mathbb{E}_{\mathcal{T}}\left[\hat{w}_2\right] - \frac{1}{2} \mathbb{E}_{\mathcal{T}}[\hat{w}] \right)\\
& - \frac{1}{2} b^{\prime \prime}\left(0\right) \Bigg[ \sigma^2_{x_1}\left(\mathbb{E}_{\mathcal{T}}[\hat{w}_1^2]-\frac{1}{4}\mathbb{E}_{\mathcal{T}}[\hat{w}^2]\right) + \sigma^2_{x_2}\left(\mathbb{E}_{\mathcal{T}}[\hat{w}_2^2]-\frac{1}{4}\mathbb{E}_{\mathcal{T}}[\hat{w}^2]\right) \\
& +2 \operatorname{cov(x_1,x_2)}\left(\mathbb{E}_{\mathcal{T}}[\hat{w}_1\hat{w}_2]-\frac{1}{4}\mathbb{E}_{\mathcal{T}}[\hat{w}^2]\right)\Bigg]\Bigg\}.
\end{align*}

Finally, substituting with: 
$M$ the maximum difference of the expected value of the coefficients that appear in the equation ($\mathbb{E}_{\mathcal{T}}\left[\hat{w}_1\right]-\frac{1}{2}\mathbb{E}_{\mathcal{T}}\left[\hat{w}\right]$, $\mathbb{E}_{\mathcal{T}}\left[\hat{w}_2\right]-\frac{1}{2}\mathbb{E}_{\mathcal{T}}\left[\hat{w}\right]$,  $\mathbb{E}_{\mathcal{T}}\left[\hat{w}_1^2\right]-\frac{1}{4}\mathbb{E}_{\mathcal{T}}[\hat{w}^2]$, $\mathbb{E}_{\mathcal{T}}\left[\hat{w}_2^2\right]-\frac{1}{4}\mathbb{E}_{\mathcal{T}}[\hat{w}^2]$, $\mathbb{E}_{\mathcal{T}}[\hat{w}_1\hat{w}_2]-\frac{1}{4}\mathbb{E}_{\mathcal{T}}[\hat{w}^2]$), and with $m$ the minimum of the same quantities, the result follows.
\end{proof}

We conclude this bivariate analysis with a justification of the choice to center in $0$ the Taylor expansion in the proof above and providing an intuitive interpretation of the result of the theorem.

\begin{remark}
    Considering the center in zero (${\vtheta}_0=0$) for the second-order Taylor expansion of the function $b(\cdot)$ in the analysis of deviance, in the linear asymptotic case, the second order Taylor expansion is an exact approximation of the term.
\begin{proof}
    In the linear case, recalling that $b({\vtheta}) = \frac{{\vtheta}^2}{2}$, the expected value that contains the function $b(\cdot)$ in the proof, is:
\begin{align*}
& \mathbb{E}\left[b\left(\hat{w}_1 x_1+\hat{w}_2 x_2\right)-b(\hat{w} \bar{x})\right]=\frac{1}{2} \mathbb{E}\left[\left(\hat{w}_1 x_1+\hat{w}_2 x_2\right)^2-(\hat{w} \bar{x})^2\right] \\
& =\frac{1}{2} \mathbb{E}\left[\hat{w}_1^2 x_1^2+\hat{w}_2^2 x_2^2+2 \hat{w}_1 \hat{w}_2 x_1 x_2-\frac{\hat{w}^2 x_1^2}{4}-\frac{\hat{w}^2 x_2^2}{4}-\frac{\hat{w}^2 x_1 x_2}{2}\right] \\
& =\frac{1}{2}\Bigg[\sigma_{x_1}^2 \cdot \mathbb{E}_{\mathcal{T}}\left[\hat{w}_1^2-\frac{\hat{w}^2}{4}\right]+\sigma_{x_2}^2 \cdot \mathbb{E}_{\mathcal{T}}\left[\hat{w}_2^2-\frac{\hat{w}^2}{4}\right]\\
& +2 \operatorname{cov}\left(x_1 x_2\right) \cdot \mathbb{E}_{\mathcal{T}}\left[\hat{w}_1 \hat{w}_2-\frac{\hat{w}^2}{4}\right]\Bigg].
\end{align*}
\end{proof}

Moreover, $b^{\prime \prime}\left(0\right)=1$. Therefore, this quantity is equal to the general expression of the second order Taylor expansion centered in $0$.

\end{remark}

\begin{remark}
    The result of the theorem suggests that it is convenient to aggregate two variables when the variance of the sum between the two variables is large or the absolute value of the covariance between each feature and the target is small. Indeed, this implies respectively that there is much noise or that each feature shares a lot of information with the target individually.
\end{remark}

\section{Generalized-Linear Correlated Features Aggregation: \\additional proofs and results}\label{app:GenLinAny}
\subsubsection{Proof of Theorem \ref{thm:resultGLM}}
Recalling the expression of the expected increase of deviance:
\begin{align}\label{eq:expectedIncrDev}\begin{split}
&\frac{2}{\phi}\Big\{\mathbb{E}_{x, y}\left[y\cdot \left(\mathbb{E}_{\mathcal{T}}[\hat{w}_1] \phi_1+\mathbb{E}_{\mathcal{T}}[\hat{w}_2] \phi_2-\mathbb{E}_{\mathcal{T}}[\hat{w}]h(\phi_1,\phi_2)\right)\right]\\
& -\mathbb{E}_{x,y,{\mathcal{T}}}\left[ b\left(\hat{w}_1 \phi_1+\hat{w}_2 \phi_2\right)-b(\hat{w}h(\phi_1,\phi_2))\right] \Big\},\end{split}
\end{align}
we analyse the two expected values separately.\\

Exploiting the zero-mean assumption, the first expected value is equal to:
\begin{align*}
& E_{x, y}\left[y \cdot\left(\mathbb{E}_{\mathcal{T}}\left[\hat{w}_1\right] \phi_1+\mathbb{E}_{\mathcal{T}}\left[\hat{w}_2\right] \phi_2-\mathbb{E}_{\mathcal{T}}[\hat{w}] h(\phi_1,\phi_2)\right)\right]\\
= & E_{x, y}\left[y\phi_1\right] \mathbb{E}_{\mathcal{T}}\left[\hat{w}_1\right]+E_{x, y}\left[y\phi_2\right] \mathbb{E}_{\mathcal{T}}\left[\hat{w}_2\right]-E_{x, y}\left[y h(\phi_1,\phi_2)\right]\mathbb{E}_{\mathcal{T}}[\hat{w}]\\
= & \operatorname{cov}({\phi_1,y}) \mathbb{E}_{\mathcal{T}}\left[\hat{w}_1\right]+\operatorname{cov}(\phi_2,y) \mathbb{E}_{\mathcal{T}}\left[\hat{w}_2\right]- \operatorname{cov}(h(\phi_1,\phi_2),y) \mathbb{E}_{\mathcal{T}}[\hat{w}].
\end{align*}
The second expectation, on the other hand, is:
\begin{equation*}
    \mathbb{E}_{x,y,\mathcal{T}}\left[ b\left(\hat{w}_1 \phi_1+\hat{w}_2 \phi_2\right)-b(\hat{w}h(\phi_1,\phi_2))\right].
\end{equation*}
In order to obtain an expression that holds for any distribution in the canonical exponential family, similarly to the bivatiate case of the previous section, we perform a second order Taylor approximation of the function $b(\cdot)$:
\ \\ \ \\
\begin{equation*}
    \begin{cases*}
b\left(\hat{w}_1 \phi_1+\hat{w}_2 \phi_2\right) \simeq b\left(0\right)+b^{\prime}\left(0\right) \cdot\left(\hat{w}_1 \phi_1+\hat{w}_2 \phi_2\right)+\frac{1}{2} b^{\prime \prime}\left(0\right) \cdot\left(\hat{w}_1 \phi_1+\hat{w}_2 \phi_2\right)^2 
\\
b(\hat{w} h(\phi_1,\phi_2)) \simeq b\left(0\right)+b^{\prime}\left(0\right)\cdot\left(\hat{w} h(\phi_1,\phi_2)\right) +\frac{1}{2} b^{\prime \prime}\left(0\right) \cdot\left(\hat{w} h(\phi_1,\phi_2)\right)^2. 
    \end{cases*}
\end{equation*}
\ \\ \ \\ \ \\ 
The expected value under analysis therefore becomes:
\begin{equation*}
\begin{split}
& \mathbb{E}_{x,y,\mathcal{T}}\left[ b\left(\hat{w}_1 \phi_1+\hat{w}_2 \phi_2\right)-b(\hat{w}h(\phi_1,\phi_2))\right]\\
\simeq & \frac{1}{2} b^{\prime \prime}\left(0\right)\cdot \mathbb{E}_{x, y, \mathcal{T}}\left[ \left(\hat{w}_1 \phi_1+\hat{w}_2 \phi_2\right)^2 -\left(\hat{w}h(\phi_1,\phi_2)\right)^2\right]\\
= &\frac{1}{2} b^{\prime \prime}\left(0\right)\cdot [ \sigma^2_{\phi_1}\mathbb{E}_{\mathcal{T}}[\hat{w}_1^2]+\sigma^2_{\phi_2}\mathbb{E}_{\mathcal{T}}[\hat{w}_2^2]\\
& +2\operatorname{cov}(\phi_1,\phi_2)\mathbb{E}_{\mathcal{T}}[\hat{w}_1\hat{w}_2]-\sigma^2_{h(\phi_1,\phi_2)}\mathbb{E}_{\mathcal{T}}[\hat{w}^2] ]\\
\iff & \frac{1}{2} b^{\prime \prime}\left(0\right)\cdot 
\left[ \operatorname{var}(\hat{w}_1\phi_1+\hat{w}_2\phi_2) - \operatorname{var}(\hat{w}h(\phi_1,\phi_2)) \right].
\end{split}
\end{equation*}

Merging the two expected values, the expected increase of deviance finally becomes:
\begin{equation*}\label{eq:taylorApprox}
\begin{aligned}
\Delta(D^*) = & \frac{2}{\phi}\Big\{\operatorname{cov}({\phi_1,y}) \mathbb{E}_{\mathcal{T}}\left[\hat{w}_1\right]+\operatorname{cov}(\phi_2,y) \mathbb{E}_{\mathcal{T}}\left[\hat{w}_2\right]- \operatorname{cov}(h(\phi_1,\phi_2),y) \mathbb{E}_{\mathcal{T}}[\hat{w}]\\
- & \frac{1}{2} b^{\prime \prime}\left(0\right)\cdot[ \sigma^2_{\phi_1}\mathbb{E}_{\mathcal{T}}[\hat{w}_1^2]+\sigma^2_{\phi_2}\mathbb{E}_{\mathcal{T}}[\hat{w}_2^2]\\
& +2\operatorname{cov}(\phi_1,\phi_2)\mathbb{E}_{\mathcal{T}}[\hat{w}_1\hat{w}_2]-\sigma^2_{h(\phi_1,\phi_2)}\mathbb{E}_{\mathcal{T}}[\hat{w}^2] ]\Big\}\\
\leq & \frac{2}{\phi}\Bigg\{\Big[M\cdot \left(|\operatorname{cov}(\phi_1,y)| + |\operatorname{cov}(\phi_2,y)|\right) - m\cdot |\operatorname{cov}(h(\phi),y)|\Big] \\
&- \frac{1}{2} b^{\prime \prime}\left(0\right) \cdot \Big(m\cdot\sigma^2_{x_1+x_2} -M\cdot \sigma^2_{h(\phi)} \Big)\Bigg\},
\end{aligned}
\end{equation*}

where $M,m$ are respectively the maximum and minimum absolute values of the expected values of the coefficients and their squared values. This concludes the proof, since the second expression of the theorem follows by rearranging the terms.

\subsubsection{Additional Gaussian considerations}

In this section we prove the statement of Remark \ref{rem:gaussianGenLinCFA}, assuming that the generalized linear model follows a Gaussian distribution.

\begin{lemma}
    Let a generalized linear model have Gaussian distribution of the target given the features ($Y|X\sim \mathbb{N}(\mu,\sigma^2)$) and compare the bivariate linear regression having as inputs two zero-mean transformations of the features $\phi_1(x),\phi_2(x)$ with a univariate linear regression having as input the aggregation of $\phi_1,\phi_2$ through a function $h(\cdot)$ (Equation \ref{eq:comparisonNonLin} of the main paper). The increase of \emph{expected MSE} due to the aggregation is equal to the increase of the \emph{expected deviance}.
\end{lemma}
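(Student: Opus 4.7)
The approach is to specialize the generic derivation of Theorem~\ref{thm:resultGLM} to the Gaussian case and exploit the fact that the quadratic Taylor expansion used there becomes an exact identity, after which the expected deviance reduces algebraically to the expected MSE up to the scale factor $\phi = \sigma^2$. For $Y|X \sim \mathcal{N}(\mu,\sigma^2)$ written as in Equation~\ref{eq:canExpFam}, one has $\vtheta = \mu$, $b(\vtheta) = \vtheta^2/2$ (so $b'(0)=0$ and $b''(0) = 1$), $\phi = \sigma^2$, and the canonical link $g$ is the identity. Consequently the two GLM models of Equation~\ref{eq:GenLinCFAmodelGeneral} coincide with the two linear models of Equation~\ref{eq:comparisonNonLin} on which the \algnameshortNonlin\ MSE analysis was carried out.

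First I would expand the two mean squared errors and subtract. Letting $\Delta(\mathrm{MSE})$ denote the expected MSE of the univariate (aggregated) model minus that of the bivariate model, the common $\E[y^2]$ cancels, yielding
\begin{equation*}
\Delta(\mathrm{MSE}) = 2\,\E_{\xs,y,\mathcal{T}}[y(\hat w_1\phi_1 + \hat w_2\phi_2 - \hat w\,h(\phi_1,\phi_2))] - \E_{\xs,\mathcal{T}}[(\hat w_1\phi_1 + \hat w_2\phi_2)^2 - (\hat w\,h(\phi_1,\phi_2))^2].
\end{equation*}
These are precisely the two terms that appear, with the prefactor $2/\phi$, in Equation~\ref{eq:expectedIncrDev} of the expected deviance, provided the second one is evaluated without any truncation.

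Next I would evaluate Equation~\ref{eq:expectedIncrDev} directly, substituting $b(\vtheta) = \vtheta^2/2$. The crucial observation is that the second-order Taylor expansion of $b$ around $0$ used in Appendix~\ref{app:GenLinAny} to prove Theorem~\ref{thm:resultGLM} is now \emph{exact}, because $b$ itself is quadratic with $b(0)=b'(0)=0$ and $b''(0)=1$. Therefore
\begin{equation*}
\E_{\xs,y,\mathcal{T}}[b(\hat w_1\phi_1+\hat w_2\phi_2)-b(\hat w\, h(\phi_1,\phi_2))] = \tfrac12\, \E_{\xs,\mathcal{T}}[(\hat w_1\phi_1+\hat w_2\phi_2)^2 - (\hat w\, h(\phi_1,\phi_2))^2].
\end{equation*}
Using independence of training and test samples to absorb the $\E_\mathcal{T}$ into the $\E_{\xs,y}$ for the linear terms, and setting $\phi = \sigma^2$, I obtain $\Delta(D^*) = \Delta(\mathrm{MSE})/\sigma^2$, so that the unscaled expected deviance satisfies $\Delta(D) = \phi\,\Delta(D^*) = \Delta(\mathrm{MSE})$, which is the claim.

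The computation is essentially bookkeeping, so the only real obstacle is keeping scale conventions straight: distinguishing the scaled deviance $D^* = D/\phi$ from the unscaled $D$, and verifying that the $\phi = \sigma^2$ factor cancels exactly against the prefactor $2/\phi$ in the deviance definition. Because $b$ is a true quadratic, no approximation is incurred, no asymptotic limit is required, and the equality holds exactly in finite samples.
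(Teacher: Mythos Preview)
Your argument is correct and in fact more direct than the paper's. You establish the identity $\Delta(D^*)=\Delta(\mathrm{MSE})/\sigma^2$ as a pure algebraic consequence of $b(\vtheta)=\vtheta^2/2$ by expanding both the MSE difference and the deviance difference and matching terms; since $b$ is exactly quadratic, no Taylor truncation or asymptotic limit enters, and the equality holds at finite $n$. The paper instead verifies the identity only asymptotically: it substitutes the explicit limiting expressions for $\E_{\mathcal T}[\hat w_i]$, $\operatorname{var}(\hat w_i)$, $\operatorname{cov}(\hat w_1,\hat w_2)$ from Lemma~\ref{lem:expCoeffNonlin} and the variance lemma, then shows by a lengthy covariance computation that the first expected value in Equation~\ref{eq:expectedIncrDev} reduces to $\Delta_{bias}^{n\to\infty}$ and the second to $\Delta_{var}^{n\to\infty}+\Delta_{bias}^{n\to\infty}$, recombining to $\tfrac{1}{\phi}(\Delta_{bias}^{n\to\infty}-\Delta_{var}^{n\to\infty})$. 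Your route avoids all of this bookkeeping and yields a strictly stronger (finite-sample) statement; the paper's longer computation does, however, make explicit the decomposition of the deviance into the bias and variance pieces of Theorems~\ref{lem:NonLinVar} and~\ref{thm:NonLinBias}, which is informative but not logically required for the lemma as stated.
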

\begin{proof}
    Recalling that, with the Gaussianity assumption, the link function $g(\cdot)$ is the identity and the function $b(\cdot)=\frac{{\vtheta}^2}{2}$, the expression of the expected increase of deviance (Equation \ref{eq:expectedIncrDev}) becomes:
\begin{align*}
\begin{split}
&\frac{2}{\sigma^2}\Big\{\mathbb{E}_{x, y}\left[y\cdot \left(\mathbb{E}_{\mathcal{T}}[\hat{w}_1] \phi_1+\mathbb{E}_{\mathcal{T}}[\hat{w}_2] \phi_2-\mathbb{E}_{\mathcal{T}}[\hat{w}]h(\phi_1,\phi_2)\right)\right]\\
& -\frac{1}{2}\mathbb{E}_{x,y,{\mathcal{T}}}\left[ \left(\hat{w}_1 \phi_1+\hat{w}_2 \phi_2\right)^2-(\hat{w}h(\phi_1,\phi_2))^2\right] \Big\}\\
= & \resizebox{.99\hsize}{!}{$\left\{ \mathbb{E}_{x,y}\left[y\cdot \left(\frac{\phi_1(\sigma^2_{\phi_2}cov(\phi_1,f)-cov(\phi_1,\phi_2)cov(\phi_2,f))+\phi_2(\sigma^2_{\phi_1}cov(\phi_2,f)-cov(\phi_1,\phi_2)cov(\phi_1,f))}{\sigma^2_{\phi_1}\sigma^2_{\phi_2}-cov^2(\phi_1,\phi_2)} \right.\right. \right.$} \\
& \left.\left. -\frac{\operatorname{cov}\left(h\left(\phi_1, \phi_2\right), f\right)}{\sigma_h^2\left(\phi_1, \phi_2\right)} \cdot h\left(\phi_1, \phi_2\right)\right)\right] \\
& \left. -\frac{1}{2} E_{x, y, \mathcal{T}}\left[\hat{w}_1^2 \phi_1^2+\hat{w}_2^2 \phi_2^2+2 \hat{w}_1 \hat{w}_2 \phi_1 \phi_2-\hat{w}^2 \hat{h}^2\left(\phi_1, \phi_2\right)\right]\right\},
\end{split}
\end{align*}

where the equation holds substituting the expression of the expected value of the linear regression coefficients.\\ \ \\
\emph{First expected value.}\\
Exploiting the independence between train and test data and the zero mean assumption, the first expected value in the equation above becomes:
\begin{equation*}
\begin{split}
 & \resizebox{.99\hsize}{!}{$
\dfrac{\operatorname{cov}\left(\phi_1, y\right) \cdot\left[\sigma_{\phi_2}^2 \operatorname{cov}\left(\phi_{1}, f\right)-\operatorname{cov}\left(\phi_1, \phi_2\right) \operatorname{cov}\left(\phi_2, f\right)\right]+\operatorname{cov}\left(\phi_2, y\right) \cdot\left[\sigma_{\phi_1}^2 \operatorname{cov}\left(\phi_2, f\right)-\operatorname{cov}\left(\phi_1, \phi_2\right) \operatorname{cov}\left(\phi_1, f\right)\right]}{\sigma_{\phi_1}^2 \sigma_{\phi_2}^2-\operatorname{cov}^2\left(\phi_1, \phi_2\right)}
$} \\
& -\frac{\operatorname{cov}\left(h\left(\phi_1, \phi_2\right), f\right)}{\sigma_{h\left(\phi_1, \phi_2\right)}^2} \cdot \operatorname{cov}\left(h\left(\phi_1, \phi_2\right), y\right) \\
& \resizebox{.99\hsize}{!}{$
\dfrac{\sigma_{\phi_2}^2 \operatorname{cov}^2\left(\phi_{1}, f\right)-\operatorname{cov}\left(\phi_1, \phi_2\right) \operatorname{cov}\left(\phi_{1}, f\right)\operatorname{cov}\left(\phi_2, f\right)+\sigma_{\phi_1}^2 \operatorname{cov}^2\left(\phi_2, f\right)-\operatorname{cov}\left(\phi_1, \phi_2\right) \operatorname{cov}\left(\phi_1, f\right)\operatorname{cov}\left(\phi_2, f\right)}{\sigma_{\phi_1}^2 \sigma_{\phi_2}^2-\operatorname{cov}^2\left(\phi_1, \phi_2\right)}$}\\
& -\frac{\operatorname{cov}^2\left(h\left(\phi_1, \phi_2\right), f\right)}{\sigma_{h\left(\phi_1, \phi_2\right)}^2} \\
& \resizebox{.99\hsize}{!}{$=\dfrac{\sigma_{\phi_2}^2 \operatorname{cov}^2\left(\phi_1, f\right)+\sigma_{\phi_1}^2 \operatorname{cov}^2\left(\phi_2, f\right)-2 \operatorname{cov}\left(\phi_1, \phi_2\right) \operatorname{cov}\left(\phi_1, f\right) \operatorname{cov}\left(\phi_2, f\right)}{\sigma_{\phi_1}^2 \sigma_{\phi_2}^2-\operatorname{cov}^2\left(\phi_1, \phi_2\right)}-\frac{\operatorname{cov}^2\left(h\left(\phi_1, \phi_2\right), f\right)}{\sigma_{h\left(\phi_1, \phi_2\right)}^2}$} \\
& =\Delta_{\text {bias}}^{n \rightarrow \infty},
\end{split}
\end{equation*}
where the last expression is exactly the increase of (squared) asymptotic bias found in Equation \ref{eq:asymBias}.\\ \ \\
\emph{Second expected value}\\ 
Substituting the expected values of the estimates of the regression coefficients and their variances, the second expected value of the expression of the increase of deviance becomes:  

\begin{equation*}
    \begin{aligned}
        & \mathbb{E}_{\mathcal{T}}\left[\hat{w}_1^2\right] \sigma_{\phi_1}^2+\mathbb{E}_{\mathcal{T}}\left[\hat{w}_2^2\right] \sigma_{\phi_2}^2+2 \mathbb{E}_{\mathcal{T}}\left[\hat{w}_1 \hat{w}_2\right] \operatorname{cov}\left(\phi_1, \phi_2\right)-\mathbb{E}_{\mathcal{T}}\left[\hat{w}^2\right] \sigma_{h\left(\phi_1, \phi_2\right)}^2 \\
&=\left(\operatorname{var}\left(\hat{w}_1\right)+\mathbb{E}_{\mathcal{T}}\left[\hat{w}_1\right]^2\right) \sigma_{\phi_1}^2+\left(\operatorname{var}\left(\hat{w}_2\right)+\mathbb{E}_{\mathcal{T}}\left[\hat{w}_2\right]^2\right) \sigma_{\phi_2}^2 \\
& +2\left(\operatorname{cov}\left(\hat{w}_1, \hat{w}_2\right)+\mathbb{E}_{\mathcal{T}}\left[\hat{w}_1\right] \mathbb{E}_{\mathcal{T}}\left[\hat{w}_2\right]\right) \operatorname{cov}\left(\phi_1, \phi_2\right)-\left(\operatorname{var}(\hat{w})+\mathbb{E}_{\mathcal{T}}[\hat{w}]^2\right) \sigma_{h\left(\phi_1, \phi_2\right)}^2 \\
& \resizebox{.99\hsize}{!}{$=\left[\frac{\sigma^2 \cdot \sigma_{\phi_2}^2}{(n-1)\left(\sigma_{\phi_1}^2 \sigma_{\phi_2}^2-\operatorname{cov}^2\left(\phi_1, \phi_2\right)\right)}+\left(\frac{\sigma_{\phi_2}^2 \operatorname{cov}\left(\phi_1, f\right)-\operatorname{cov}\left(\phi_1, \phi_2\right) \operatorname{cov}\left(\phi_2, f\right)}{\sigma_{\phi_1}^2 \sigma_{\phi_2}^2-\operatorname{cov}^2\left(\phi_1, \phi_2\right)}\right)^2\right] \sigma_{\phi_1}^2$} \\
& \resizebox{.99\hsize}{!}{$+\left[\frac{\sigma^2 \cdot \sigma_{\phi_1}^2}{(n-1)\left(\sigma_{\phi_1}^2 \sigma_{\phi_2}^2-\operatorname{cov}^2\left(\phi_1, \phi_2\right)\right)}+\left(\frac{\sigma_{\phi_1}^2 \operatorname{cov}\left(\phi_{2},f\right)-\operatorname{cov}\left(\phi_1, \phi_2\right) \operatorname{cov}\left(\phi_1, f\right)}{\sigma_{\phi_1}^2 \sigma_{\phi_2}^2-\operatorname{cov}^2\left(\phi_1, \phi_2\right)}\right)^2\right] \sigma_{\phi_2}^2$} \\
& +2\operatorname{cov}\left(\phi_1, \phi_2\right)\left[\frac{-\sigma^2 \cdot \operatorname{cov}\left(\phi_1, \phi_2\right)}{(n-1)\left(\sigma_{\phi_1}^2 \sigma_{\phi_2}^2-\operatorname{cov}^2\left(\phi_1 \phi_2\right)\right)}\right.\\
& \resizebox{.99\hsize}{!}{$+\left.\left(\frac{\left.\left(\sigma_{\phi_2}^2 \operatorname{cov}\left(\phi_1, f\right)-\operatorname{cov}\left(\phi_1, \phi_2\right) \operatorname{cov}\left(\phi_2, f\right)\right) \cdot\left(\sigma_{\phi_1}^2 \operatorname{cov}\left(\phi_2, f\right)-\operatorname{cov}\left(\phi_1, \phi_2\right) \operatorname{cov}\left(\phi_1, f\right)\right)\right)}{\left(\sigma_{\phi_1}^2 \sigma_{\phi_2}^2-\operatorname{cov}^2\left(\phi_1, \phi_2\right)\right)^2}\right)\right]$}\\
& -\left(\frac{\sigma^2}{n-1} \cdot \frac{1}{\sigma^2_{h\left(\phi_1, \phi_2\right)}}+\left(\frac{\operatorname{cov}\left(h\left(\phi_1, \phi_2\right), f\right)}{\sigma^2_{h\left(\phi_1, \phi_2\right)}}\right)^2\right) \sigma_{h\left(\phi_1, \phi_2\right)}^2.
\\
\end{aligned}
\end{equation*}

Considering the terms with $n-1$ in the denominator, they are equal to the asymptotic decrease of variance found in Equation \ref{eq:asympVar}:

\begin{equation*}
\begin{split}
    \frac{\sigma^2}{n-1}\cdot\left[\frac{\sigma_{\phi_2}^2\sigma_{\phi_1}^2 - \sigma_{\phi_1}^2\sigma_{\phi_2}^2-2\operatorname{cov}^2\left(\phi_1, \phi_2\right)} {\sigma_{\phi_1}^2 \sigma_{\phi_2}^2-\operatorname{cov}^2\left(\phi_1, \phi_2\right)}- \frac{\sigma^2_{h\left(\phi_1, \phi_2\right)}}{\sigma^2_{h\left(\phi_1, \phi_2\right)}} \right]\\
    = \frac{\sigma^2}{n-1}\cdot (2-1) = \frac{\sigma^2}{n-1} = \Delta_{\text {var}}^{n \rightarrow \infty}.
\end{split}
\end{equation*}

The remaining terms are:
\begin{equation*}
\begin{split}
& \resizebox{.99\hsize}{!}{$\sigma^2_{\phi_1}\left(\frac{\sigma_{\phi_2}^2 \operatorname{cov}\left(\phi_1, f\right)-\operatorname{cov}\left(\phi_1, \phi_2\right) \operatorname{cov}\left(\phi_2, f\right)}{\sigma_{\phi_1}^2 \sigma_{\phi_2}^2-\operatorname{cov}^2\left(\phi_1, \phi_2\right)}\right)^2
+
\sigma^2_{\phi_2}\left(\frac{\sigma_{\phi_1}^2 \operatorname{cov}\left(\phi_2, f\right)-\operatorname{cov}\left(\phi_1, \phi_2\right) \operatorname{cov}\left(\phi_1, f\right)}{\sigma_{\phi_1}^2 \sigma_{\phi_2}^2-\operatorname{cov}^2\left(\phi_1, \phi_2\right)}\right)^2$}\\
& \resizebox{.99\hsize}{!}{$+2\operatorname{cov}\left(\phi_1, \phi_2\right) \left(\frac{\left.\left(\sigma_{\phi_2}^2 \operatorname{cov}\left(\phi_1, f\right)-\operatorname{cov}\left(\phi_1, \phi_2\right) \operatorname{cov}\left(\phi_2, f\right)\right) \cdot\left(\sigma_{\phi_1}^2 \operatorname{cov}\left(\phi_2, f\right)-\operatorname{cov}\left(\phi_1, \phi_2\right) \operatorname{cov}\left(\phi_1, f\right)\right)\right)}{\left(\sigma_{\phi_1}^2 \sigma_{\phi_2}^2-\operatorname{cov}^2\left(\phi_1, \phi_2\right)\right)^2}\right)$}\\
& -\frac{\operatorname{cov}^2\left(h\left(\phi_1, \phi_2\right), f\right)}{\sigma^2_{h\left(\phi_1, \phi_2\right)}}.
\end{split}
\end{equation*}

The first three terms of this expression have the same denominator, therefore we can focus on their numerators:

\begin{equation*}
\begin{split}
& \resizebox{.99\hsize}{!}{$\sigma_{\phi_1}^2\left(\sigma_{\phi_2}^4 \operatorname{cov}^2\left(\phi_1, f\right)+\operatorname{cov}^2\left(\phi_1, \phi_2\right) \operatorname{cov}^2\left(\phi_{2,}, f\right)-2 \sigma_{\phi_2}^2 \operatorname{cov}\left(\phi_1, f\right) \operatorname{cov}\left(\phi_{2,} f\right) \operatorname{cov} \left(\phi_1, \phi_2\right)\right)$} \\
+ & \resizebox{.99\hsize}{!}{$\sigma_{\phi_2}^2\left(\sigma_{\phi_1}^4 \operatorname{cov}^2\left(\phi_2, f\right)+\operatorname{cov}^2\left(\phi_1, \phi_2\right) \operatorname{cov}^2\left(\phi_1, f\right)-2 \sigma_{\phi_1}^2 \operatorname{cov}\left(\phi_1, f\right) \operatorname{cov}\left(\phi_2, f\right) \operatorname{cov}\left(\phi_1, \phi_2\right)\right)$} \\
+ & 2 \operatorname{cov}\left(\phi_1, \phi_2\right)\left(\sigma_{\phi_1}^2 \sigma_{\phi_2}^2 \operatorname{cov}\left(\phi_1, f \right) \operatorname{cov}\left(\phi_2, f\right)-\sigma_{\phi_2}^2 \operatorname{cov}^2\left(\phi_1, f\right) \operatorname{cov}\left(\phi_1, \phi_2\right)\right.\\
+&\left.\operatorname{cov}^2\left(\phi_1, \phi_2\right) \operatorname{cov}\left(\phi_1, f\right) \operatorname{cov} \left(\phi_2, f\right)-\sigma_{\phi_1}^2 \operatorname{cov}^2\left(\phi_2, f\right) \operatorname{cov}\left(\phi_1, \phi_2\right)\right) \\
= & \sigma_{\phi_1}^2 \sigma_{\phi_2}^4 \operatorname{cov}^2\left(\phi_1, f\right)+\sigma_{\phi_1}^4 \sigma_{\phi_2}^2 \operatorname{cov}^2\left(\phi_2, f\right)-\sigma_{\phi_1}^2 \operatorname{cov}^2\left(\phi_2, f\right) \operatorname{cov}^2\left(\phi_1, \phi_2\right) \\
- & \sigma_{\phi_2}^2 \operatorname{cov}^2\left(\phi_1, f\right) \operatorname{cov}^2\left(\phi_1, \phi_2\right)-2 \sigma_{\phi_1}^2 \sigma_{\phi_2}^2 \operatorname{cov}\left(\phi_1, f\right) \operatorname{cov}\left(\phi_2, f\right) \operatorname{cov}\left(\phi_1, \phi_2\right)\\
+ & 2 \operatorname{cov}^3\left(\phi_1, \phi_2\right) \operatorname{cov}\left(\phi_1, f\right) \operatorname{cov}\left(\phi_2, f\right) \\
= & \resizebox{.99\hsize}{!}{$\sigma_{\phi_1}^2 \operatorname{cov}^2\left(\phi_{2}, f\right) \cdot\left[\sigma_{\phi_1}^2 \sigma_{\phi_2}^2-\operatorname{cov} ^2\left(\phi_1, \phi_2\right)\right]+\sigma_{\phi_2}^2 \operatorname{cov} ^2\left(\phi_1, f\right) \cdot\left[\sigma_{\phi_1}^2 \sigma_{\phi_2}^2-\operatorname{cov} ^2\left(\phi_1, \phi_2\right)\right]$} \\
- & 2 \operatorname{cov}\left(\phi_1, f\right) \operatorname{cov}\left(\phi_2, f\right) \operatorname{cov}\left(\phi_1 \phi_2\right) \cdot\left[\sigma_{\phi_1}^2 \sigma_{\phi_2}^2-\operatorname{cov}^2\left(\phi_1, \phi_2\right)\right] \\
= & \resizebox{.99\hsize}{!}{$\left(\sigma_{\phi_1}^2 \operatorname{cov}^2\left(\phi_{2}, f\right)+\sigma_{\phi_2}^2 \operatorname{cov}^2\left(\phi_1, f\right)-2 \operatorname{cov}\left(\phi_1, f\right) \operatorname{cov}\left(\phi_{2}, f\right) \operatorname{cov}\left(\phi_1, \phi_2\right)\right) \cdot\left(\sigma_{\phi_1}^2 \sigma_{\phi_2}^2-\operatorname{cov}^2\left(\phi_1, \phi_2\right)\right).$}
\end{split}
\end{equation*}

The full term is therefore:
\begin{equation*}
\begin{split}
& \frac{\left(\sigma_{\phi_1}^2 \operatorname{cov}^2\left(\phi_{2}, f\right)+\sigma_{\phi_2}^2 \operatorname{cov}^2\left(\phi_1, f\right)-2 \operatorname{cov}\left(\phi_1, f\right) \operatorname{cov}\left(\phi_{2}, f\right) \operatorname{cov}\left(\phi_1, \phi_2\right)\right) }{\left(\sigma_{\phi_1}^2 \sigma_{\phi_2}^2-\operatorname{cov}^2\left(\phi_1, \phi_2\right)\right)^2}\\
\times & \left(\sigma_{\phi_1}^2 \sigma_{\phi_2}^2-\operatorname{cov}^2\left(\phi_1, \phi_2\right)\right) - \frac{\operatorname{cov}^2\left(h\left(\phi_1, \phi_2\right), f\right)}{\sigma^2_{h\left(\phi_1, \phi_2\right)}} =\Delta_{\text {bias}}^{n \rightarrow \infty}. 
\end{split}
\end{equation*}

Summing up, the expected deviance is therefore equal to:
\begin{equation*}
    \frac{2}{\phi} \left(\Delta_{\text {bias}}^{n \rightarrow \infty}-\frac{1}{2}(\Delta_{\text {var}}^{n \rightarrow \infty} +\Delta_{\text {bias}}^{n \rightarrow \infty})\right) = \frac{1}{\phi} \left(\Delta_{\text {bias}}^{n \rightarrow \infty} -\Delta_{\text {var}}^{n \rightarrow \infty}\right),
\end{equation*}
that is exactly the increase of MSE in terms of increase of bias and reduction of variance due to the aggregation found in the analysis of the \algnameshortNonlin algorithm.
\end{proof}

A second result assuming Gaussianity justifies the choice of centering the second order Taylor expansion in ${\vtheta}_0=0$ to approximate the function $b(\cdot)$ for any generalized linear model.

\begin{lemma}
    Considering the center in zero (${\vtheta}_0=0$), in the Gaussian asymptotic case, the second order Taylor expansion leads to an exact approximation of the function $b(\cdot)$.
\end{lemma}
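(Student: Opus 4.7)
The plan is essentially a one-line observation, but worth laying out carefully. In the Gaussian case, the canonical exponential-family representation gives $b(\vtheta) = \vtheta^2/2$, which is a polynomial of degree exactly two. Hence its Taylor series about any point terminates at the second-order term, and the second-order Taylor polynomial reproduces $b$ exactly, with zero remainder.

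Concretely, I would compute the derivatives at the chosen center $\vtheta_0 = 0$: $b(0) = 0$, $b'(\vtheta) = \vtheta$ so $b'(0) = 0$, $b''(\vtheta) = 1$ so $b''(0) = 1$, and $b^{(k)}(\vtheta) = 0$ for all $k \geq 3$. Plugging into the second-order Taylor formula yields
\begin{equation*}
b(\vtheta) \;=\; b(0) + b'(0)\,\vtheta + \tfrac{1}{2}b''(0)\,\vtheta^2 + R_2(\vtheta) \;=\; \tfrac{1}{2}\vtheta^2,
\end{equation*}
with remainder $R_2(\vtheta) \equiv 0$ by Taylor's theorem since $b^{(3)} \equiv 0$. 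This identity holds for every value of $\vtheta$, so in particular when $\vtheta$ is instantiated at $\hat{w}_1\phi_1+\hat{w}_2\phi_2$ or $\hat{w}\,h(\phi_1,\phi_2)$, the approximation used in the derivation of Theorem~\ref{thm:resultGLM} coincides with the exact value of $b$, and therefore the upper bound on the increase of deviance obtained via the Taylor expansion in Equation~\ref{eq:taylorApprox} is tight in the Gaussian setting.

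There is really no obstacle: the only subtlety is verifying that the particular center $\vtheta_0 = 0$ is admissible, which is immediate because the Taylor expansion of a quadratic is exact regardless of the center, and the choice $\vtheta_0 = 0$ additionally makes the zeroth- and first-order terms vanish, in harmony with the zero-mean assumption on the transformed features used elsewhere in the analysis. The ``asymptotic'' qualifier in the statement simply reflects that in the asymptotic regime each estimator converges to the quantity it estimates, so the exactness of the $b$-approximation translates directly into the equivalence of expected deviance and MSE established in the previous lemma.
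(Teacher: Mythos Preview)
Your proposal is correct and rests on the same key observation as the paper: in the Gaussian case $b(\vtheta)=\vtheta^2/2$ is a quadratic, so its second-order Taylor polynomial at $\vtheta_0=0$ coincides with $b$ identically. The paper verifies this by explicitly computing $\mathbb{E}_{x,y,\mathcal{T}}[b(\hat{w}_1\phi_1+\hat{w}_2\phi_2)-b(\hat{w}h(\phi_1,\phi_2))]$ with $b(\vtheta)=\vtheta^2/2$ and checking it matches the general Taylor-expanded formula, whereas you argue the pointwise identity directly via vanishing third derivative; your route is slightly more economical but the substance is the same.
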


\begin{proof}
The function $b(\cdot)$ appears in the increase of deviance analysis performed in the previous subsection in the following equation:

$$\mathbb{E}_{x,y,\mathcal{T}}\left[ b\left(\hat{w}_1 \phi_1+\hat{w}_2 \phi_2\right)-b(\hat{w}h(\phi_1,\phi_2))\right].$$

In the linear case, recalling that $b({\vtheta}) = \frac{{\vtheta}^2}{2}$, the expected value under analysis is equal to:

\begin{equation*}
\begin{aligned}
 & \frac{1}{2} \cdot \mathbb{E}_{x, y, D}\left[ \left(\hat{w}_1 \phi_1+\hat{w}_2 \phi_2\right)^2 -\left(\hat{w}h(\phi_1,\phi_2)\right)^2\right]\\
= &\frac{1}{2} \cdot\left[ \sigma^2_{\phi_1}\mathbb{E}_{D}[\hat{w}_1^2]+\sigma^2_{\phi_2}\mathbb{E}_{D}[\hat{w}_2^2]+2\operatorname{cov}(\phi_1,\phi_2)\mathbb{E}_{D}[\hat{w}_1\hat{w}_2]-\sigma^2_{h(\phi_1,\phi_2)}\mathbb{E}_{D}[\hat{w}^2] \right].
\end{aligned}
\end{equation*}
Moreover, $b^{\prime \prime}\left(0\right)=1$. Therefore, this quantity is equal to the general expression of the second order Taylor expansion centered in $0$ (Equation \ref{eq:taylorApprox}).
\end{proof}

\section{Experiments}\label{app:experiments}

\subsubsection{Synthetic experiments}
This subsection provides more details on the synthetic experiments introduced in the main paper. The first synthetic problem that we designed is composed of $D=100$ features and standard deviation of the noise $\sigma=10$. The first independent variable $x_1$ follows a uniform distribution in the interval $[0,1]$. Any other feature $x_i,\ i \in \{2,..,100\}$, is a linear combination between a randomly chosen previous features $x_j,\ j<i$ and a random variable that follows a uniform distribution in the interval $[0,1]$ (specifically $x_i=0.7x_j+0.3u,\ u\sim\mathcal{U}([0,1])$). The target variable $y$ is finally a linear combination between the $D$ features $x_1,..,x_{100}$, randomly sampling the coefficients from a uniform distribution in $[0,1]$. Moreover, a Gaussian noise with standard deviation $\sigma$ is added to the target.\\
The same experiment is repeated in a more complex setting, considering $D=1000$ features and $\sigma=100$ as standard deviation of the additive noise.\\
To test the \algnameshortGen algorithm in classification settings, the same two experiment described in this section have been repeated applying the sign function to the target, in order to transform the two problems into classification tasks.\\
In Table \ref{tab:linearResultsSynth} and Table \ref{tab:linearResultsSynthClass} it is possible to find the detailed results of the experiments performed respectively with continuous and binary target. As discussed in the main paper, also LinCFA algorithm has been applied to compare the regression results. Since it has no hyperparameters to tune, it is not reported in the tables. In the setting with $D=100$ features it leads to $d=39\pm 1.52$ features and an $R2 score=0.8659\pm 0.0049$. In the setting with $D=1000$ features, it produces $d=193.2\pm 2.37$ features with an $R2 score=0.7070\pm 0.0074$.

\begin{table*}[th]
\caption{95\% confidence intervals for linear synthetic experiments: different hyperparameters, continuous target.\label{tab:linearResultsSynth}}
\centering
\resizebox{\textwidth}{!}
{\begin{tabular}{@{}cccccc@{}} \hline 
    & \multicolumn{5}{c}{\textbf{Noise variance $\mathbf{\sigma=10}$, number of features $\mathbf{D=100}$}}\\\hline
    & \multicolumn{5}{c}{\algnameshortNonlin}\\
Hyperparameter & $\epsilon=0.01$ & $\epsilon=0.001$ & $\epsilon=0.0001$ & $\epsilon=0.00001$ & $\epsilon=0.000001$\\
Number reduced features $d$ & $1.0 \pm 0.0$ & $8.0\pm 0.88$ & $11.4\pm1.45$ & $14.4\pm 1.28$ & $14.7\pm1.21$ \\
$R2$ test score & $0.8655\pm 0.0051$ & $0.8664\pm 0.0048$ & $0.8661\pm 0.0049$ & $0.8659\pm0.0050$ & $0.8664\pm0.0043$\\ 
\hline
& \multicolumn{5}{c}{\algnameshortGen}\\
Hyperparameter & $\epsilon=0.76$ & $\epsilon=0.77$ & $\epsilon=0.78$ & $\epsilon=0.79$ & $\epsilon=0.80$\\
Number reduced features $d$ & $21.6 \pm 1.47$ & $16.6\pm 0.63$ & $13.6\pm1.31$ & $5.4\pm 1.15$ & $2.0\pm0.39$ \\
$R2$ test score & $0.8656\pm 0.0047$ & $0.8663\pm 0.0046$ & $0.8660\pm 0.0046$ & $0.8659\pm0.0049$ & $0.8657\pm0.0049$\\ 
\hline
& \multicolumn{5}{c}{\textbf{Noise variance $\mathbf{\sigma=100}$, number of features $\mathbf{D=1000}$}}\\\hline
    & \multicolumn{5}{c}{\algnameshortNonlin}\\
Hyperparameter & $\epsilon=0.01$ & $\epsilon=0.001$ & $\epsilon=0.0001$ & $\epsilon=0.00001$ & $\epsilon=0.000001$\\
Number reduced features $d$ & $1.0 \pm 0.0$ & $3.1\pm 0.65$ & $18.0\pm1.92$ & $21.9\pm 2.10$ & $22.3\pm1.90$ \\
$R2$ test score & $0.7332\pm 0.0069$ & $0.7318\pm 0.0071$ & $0.7274\pm 0.0079$ & $0.7265\pm0.0072$ & $0.7267\pm0.0076$\\ 
\hline
& \multicolumn{5}{c}{\algnameshortGen}\\
Hyperparameter & $\epsilon=0.76$ & $\epsilon=0.77$ & $\epsilon=0.78$ & $\epsilon=0.79$ & $\epsilon=0.80$\\
Number reduced features $d$ & $7.3 \pm 1.08$ & $3.4\pm 0.63$ & $1.0\pm 0.0$ & $1.0\pm 0.0$ & $1.0\pm 0.0$ \\
$R2$ test score & $0.7326\pm 0.0069$ & $0.7325\pm 0.0072$ & $0.7332\pm 0.0069$ & $0.7332\pm 0.0069$ & $0.7332\pm 0.0069$\\ 
\hline
\end{tabular}}
\end{table*}

\begin{table*}[th]
\caption{95\% confidence intervals for linear synthetic experiments: different hyperparameters, binary target.\label{tab:linearResultsSynthClass}}
\centering
\resizebox{\textwidth}{!}
{\begin{tabular}{@{}cccccc@{}} \hline 
    & \multicolumn{5}{c}{\textbf{Noise variance $\mathbf{\sigma=10}$, number of features $\mathbf{D=100}$}}\\\hline
& \multicolumn{5}{c}{\algnameshortGen}\\
Hyperparameter & $\epsilon=0.71$ & $\epsilon=0.72$ & $\epsilon=0.73$ & $\epsilon=0.75$ & $\epsilon=0.77$\\
Number reduced features $d$ & $25.2 \pm 1.59$ & $19.4\pm 1.69$ & $15.6\pm1.39$ & $4.3\pm 1.21$ & $1.0\pm0.0$ \\
$Accuracy$ test score & $0.8928\pm 0.0064$ & $0.8947\pm 0.0066$ & $0.8956\pm 0.0065$ & $0.8958\pm0.0069$ & $0.8975\pm0.0054$\\ 
\hline
& \multicolumn{5}{c}{\textbf{Noise variance $\mathbf{\sigma=100}$, number of features $\mathbf{D=1000}$}}\\\hline
& \multicolumn{5}{c}{\algnameshortGen}\\
Hyperparameter & $\epsilon=0.71$ & $\epsilon=0.72$ & $\epsilon=0.73$ & $\epsilon=0.75$ & $\epsilon=0.77$\\
Number reduced features $d$ & $20.0 \pm 3.54$ & $11.1\pm 2.06$ & $5.7\pm 0.88$ & $1.0\pm 0.0$ & $1.0\pm 0.0$ \\
$Accuracy$ test score & $0.8462\pm 0.0067$ & $0.8453\pm 0.0075$ & $0.8429\pm 0.0064$ & $0.8520\pm 0.0048$ & $0.8520\pm 0.0048$\\ 
\hline
\end{tabular}}
\end{table*}
\ \\ \ \\ 
The two regression and the two classification synthetic experiments that have been described in this section have been repeated considering a nonlinear relationship between the features and the target. In particular, the datasets have been generated exactly in the same way, with the only difference to define the target variable as a linear combination of the squared value of the input features, with additive Gaussian noise. In this setting, a wrapper feature selection has been again considered as baseline, considering the squared of the inputs as candidate features to select. \algnameshortNonlin and \algnameshortGen have been applied in two different ways: considering as features the squared values of the original features ($\phi_i(x)=x_i^2$) and selecting the mean as aggregation function, or considering the original features as inputs ($\phi_i(x)=x_i$) and performing the squared sum as aggregation function. In this way, the two methods have been tested considering nonlinear transformations of features or nonlinear aggregation functions. 

\begin{figure}
     \centering
     \begin{subfigure}{0.48\textwidth}
         \centering
        \includegraphics[width=\textwidth]{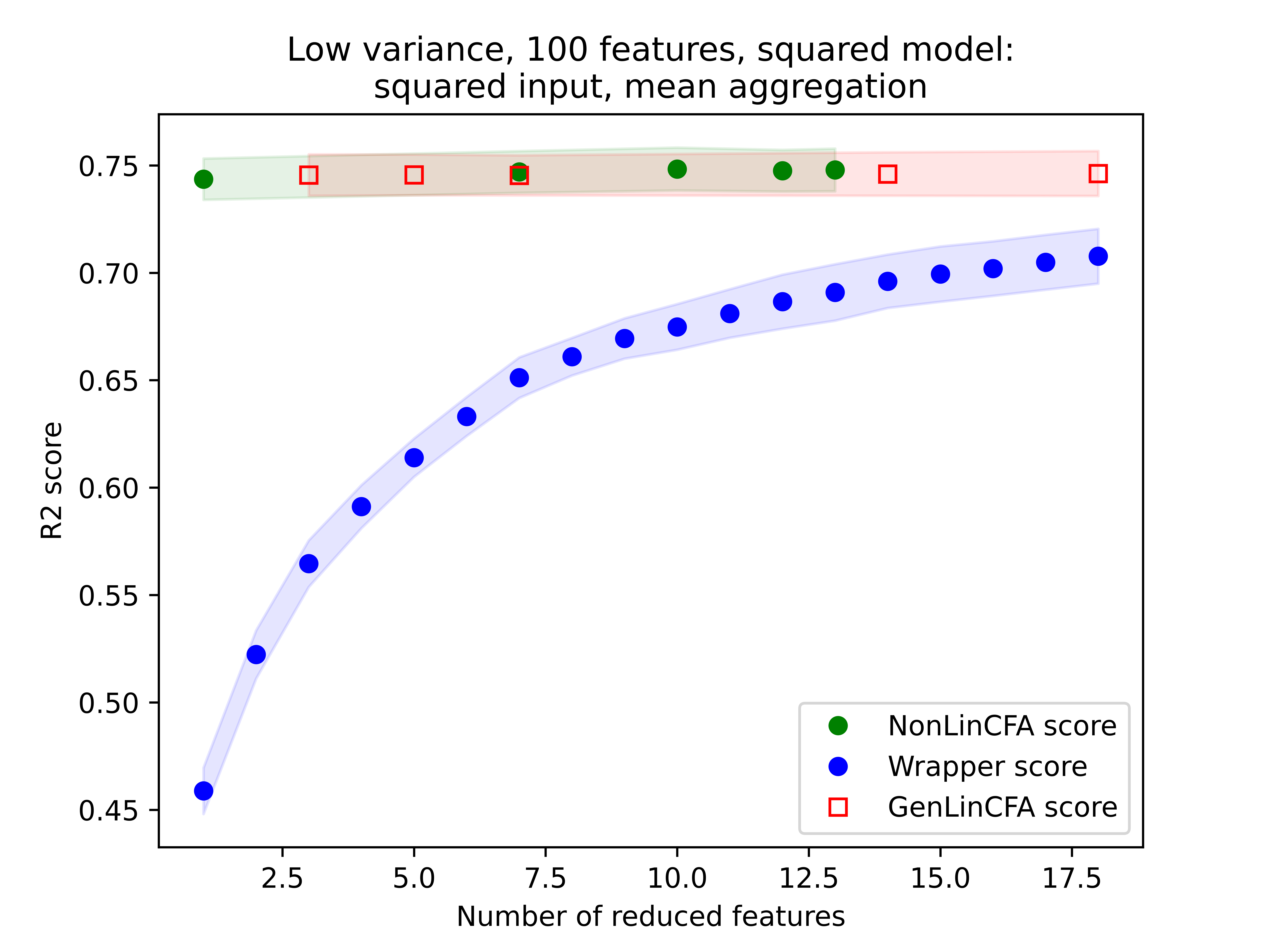}
         \caption{Results considering standard deviation $\sigma=10$, $D=100$ features, squared inputs and mean aggregations.}
         \label{fig:LowVar_100Feat_quadr}
     \end{subfigure}
     \hfill
     \begin{subfigure}{0.48\textwidth}
         \centering
         \includegraphics[width=\textwidth]{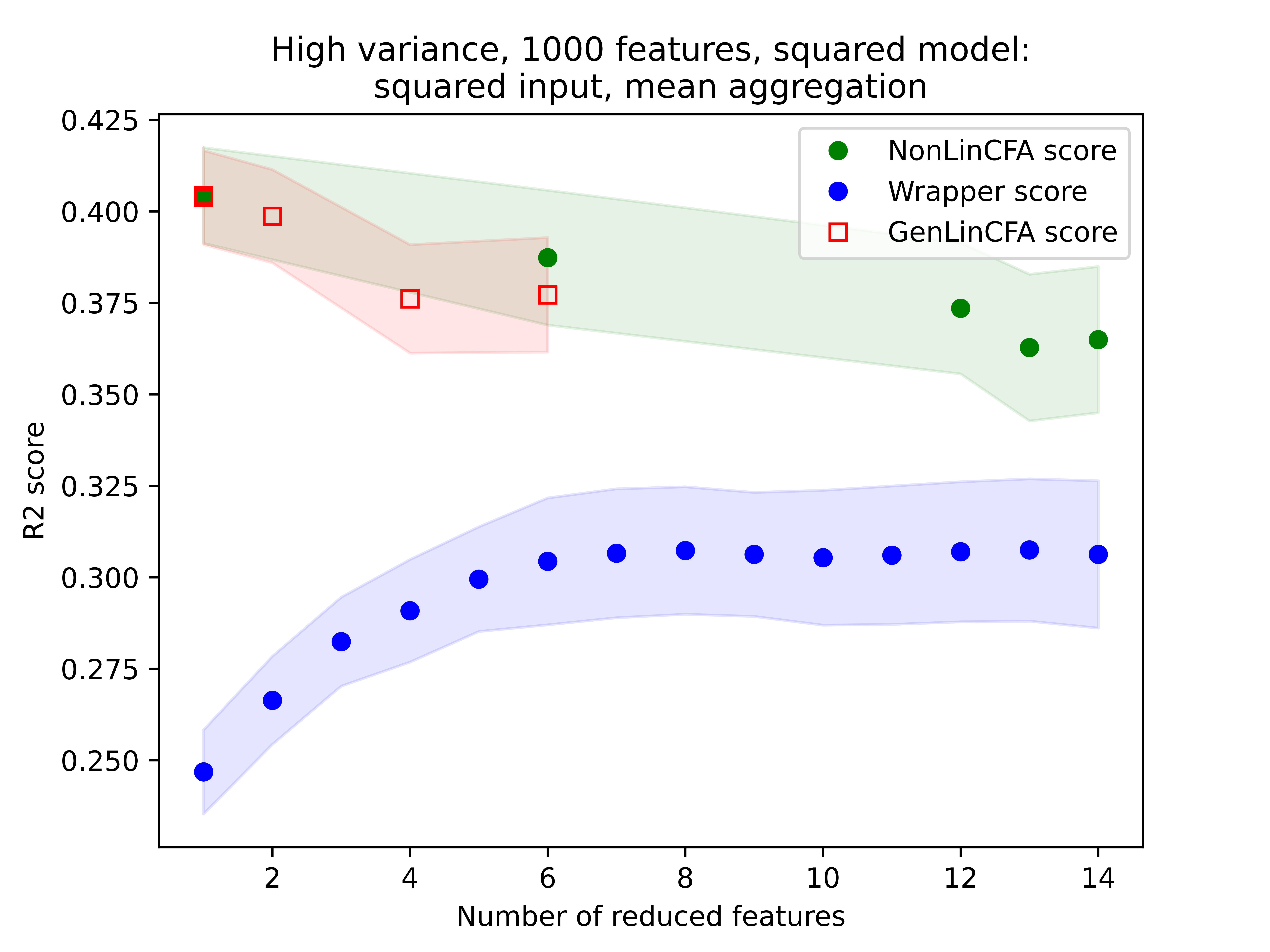}
         \caption{Results considering standard deviation $\sigma=100$, $D=1000$ features, squared inputs and mean aggregations.}
         \label{fig:HIghVar_1000Feat_quadr}
     \end{subfigure}
     \begin{subfigure}{0.48\textwidth}
         \centering
        \includegraphics[width=\textwidth]{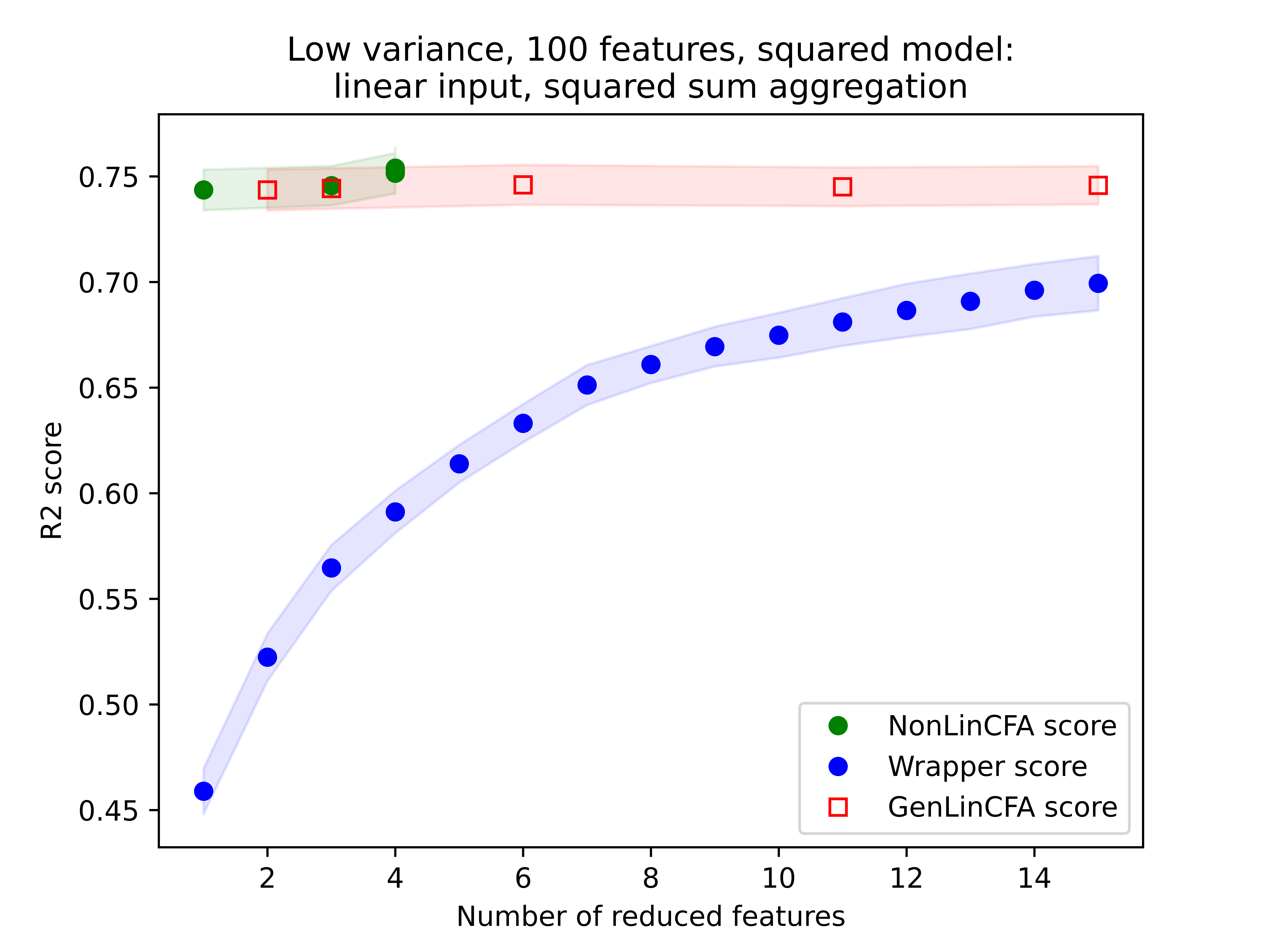}
         \caption{Results considering standard deviation $\sigma=10$, $D=100$ features, linear inputs and sum of square aggregations.}
         \label{fig:LowVar_100Feat_squaredAggr}
     \end{subfigure}
     \hfill
     \begin{subfigure}{0.48\textwidth}
         \centering
         \includegraphics[width=\textwidth]{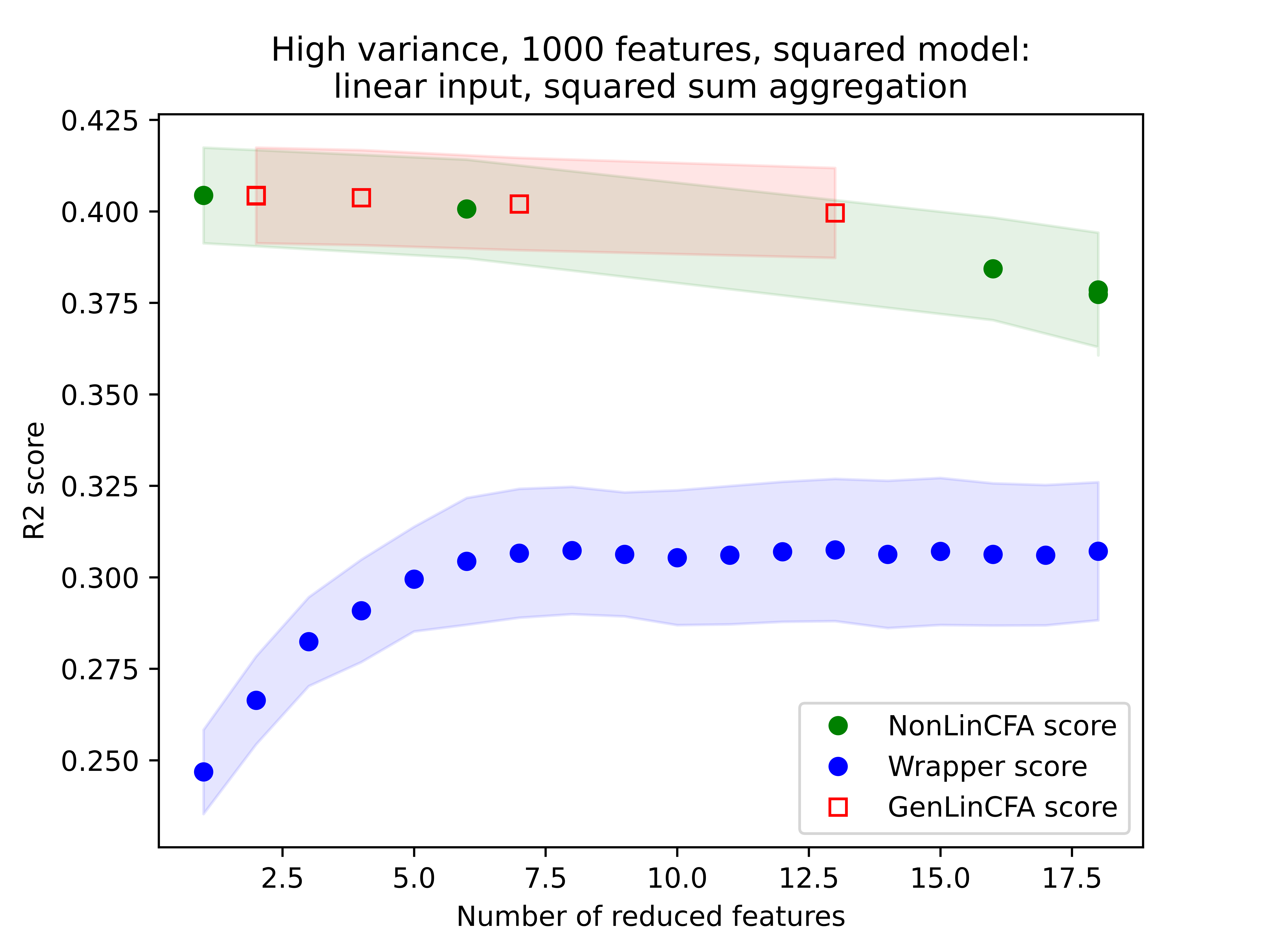}
         \caption{Results considering standard deviation $\sigma=100$, $D=1000$ features, linear inputs and sum of square aggregations.}
         \label{fig:HighVar_1000Feat_squaredAggr}
     \end{subfigure}
        \caption{Application of the two proposed algorithm. Test performances and number of reduced features considering different hyperparameters, transformations of features and aggregation functions.}
        \label{fig:squaredSynth}
\end{figure}

Figure \ref{fig:squaredSynth} shows the results in terms of coefficient of determination for the two regression problems with low and high number of features and variance. In all the four cases the hyperparaneter $\epsilon$ of \algnameshortNonlin has been considered in the range $\{0.01, 0.001, 0.0001, 1e-05, 1e-06\}$. Moreover, for \algnameshortGen it has been considered in the range $\{0.68, 0.7, 0.72, 0.74, 0.76\}$. All experiments have been repeated ten times to produce confidence intervals. Figure \ref{fig:LowVar_100Feat_quadr} and Figure \ref{fig:LowVar_100Feat_squaredAggr} show the test regression performance considering respectively squared input features and the mean and aggregation function and linear input features and the squared sum as aggregation function, in the low dimensional and low variance scenario. The same results obtained with the same hyperparameters are reported in Figure \ref{fig:HIghVar_1000Feat_quadr} and Figure \ref{fig:HighVar_1000Feat_squaredAggr} for the high dimensional with high variance regression setting. As already discussed in the main paper for linear settings, the linear regression applied on the features aggregated by the two algorithms perform better than the linear regression performed on the features selected by the wrapper approach, that needs more features to obtain similar performances. Moreover, it is possible to observe that \algnameshortGen is more prone to aggregate features in high dimensional and noisy problems. On the contrary, \algnameshortNonlin, is less prone to aggregate the features in high dimensional and noisy problems. Finally, considering linear inputs and squared sum aggregation or squared inputs and linear aggregations the two algorithms have similar performances, showing that they can be applied in both contexts. As a further comparison, LinCFA algorithm has also been applied, considering the square of the features as inputs. The linear regression on the reduced features have similar performances in the two settings, with a test score respectively of $0.7503\pm 0.0080$ and $0.3619\pm 0.0186$ for the low and high dimensional problems, but a larger number of reduced features ($d = 34.3 \pm 0.91$ and $d = 120.5\pm 1.69$).
\\ \ \\ 
To further analyse \algnameshortGen algorithm in classification contexts, the two experiments that consider a quadratic relationship between the features and the target have been repeated in classification, applying the sign function to the target. Again, mean aggregations of the squared features and squared sums of the features have been considered. The accuracy scores and number of reduced features are reported in Figure \ref{fig:squaredSynthClass}.
The figure compares again the performance of the algorithm with a wrapper feature selection, logistic regression is the supervised model applied to produce the scores. Again, the performances of the proposed algorithm are similar or better \wrt the wrapper baseline and, considering the same values of hiperparameters in the two settings, the algorithm is more prone to aggregate in the more complex and noisy setting. 
The detailed results and confidence intervals for different hyperparameters that have been discussed in this setting can be found in Table \ref{tab:quadrResultsSynth} and Table \ref{tab:quadraticResultsSynthClass}, respectively for regression and classification.

\begin{figure}[H]
     \centering
     \begin{subfigure}{0.48\textwidth}
         \centering
        \includegraphics[width=\textwidth]{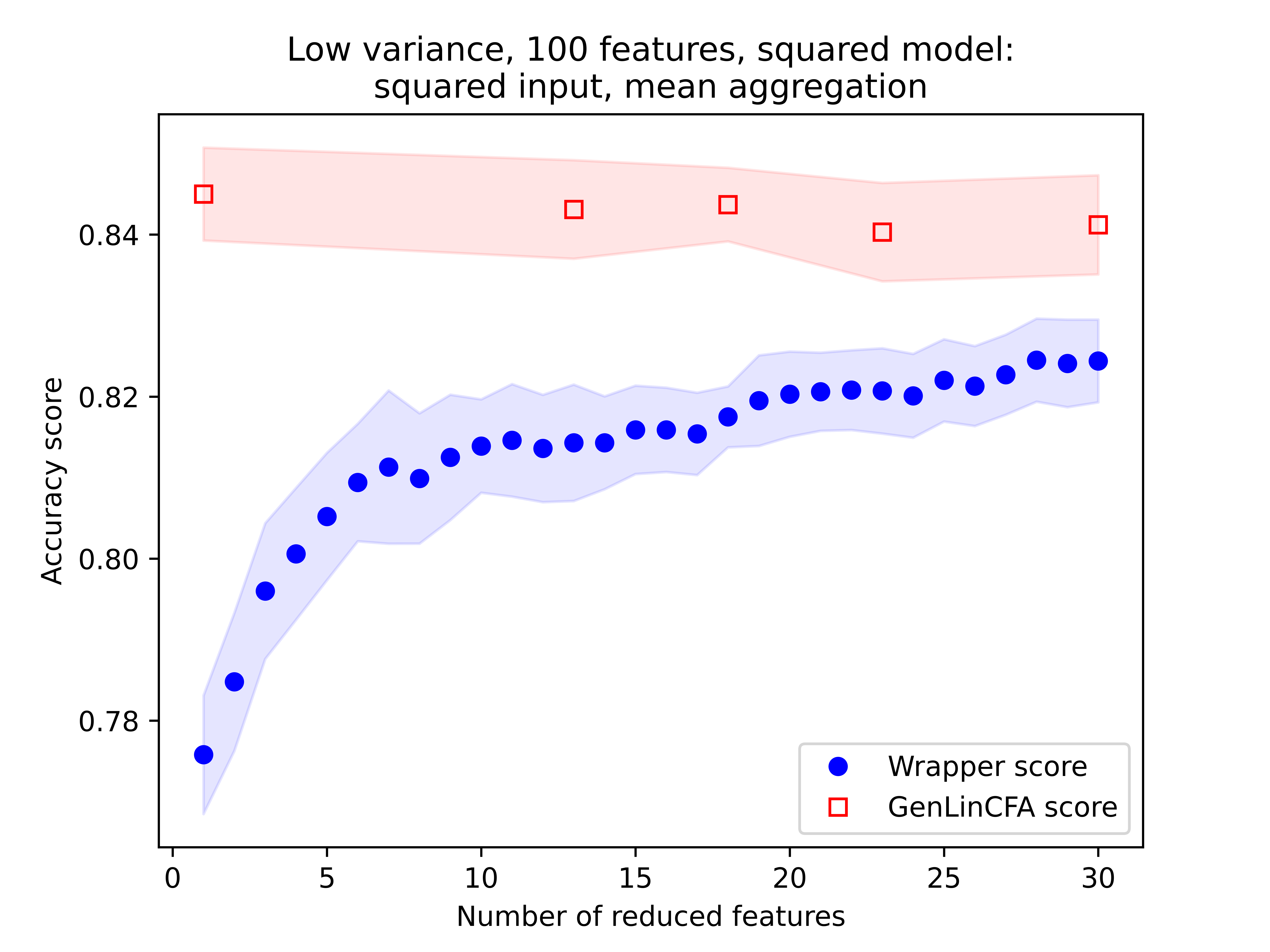}
         \caption{Classification results with standard deviation $\sigma=10$, $D=100$ features, squared inputs and mean aggregations.}
         \label{fig:LowVar_100Feat_quadr_class}
     \end{subfigure}
     \hfill
     \begin{subfigure}{0.48\textwidth}
         \centering
         \includegraphics[width=\textwidth]{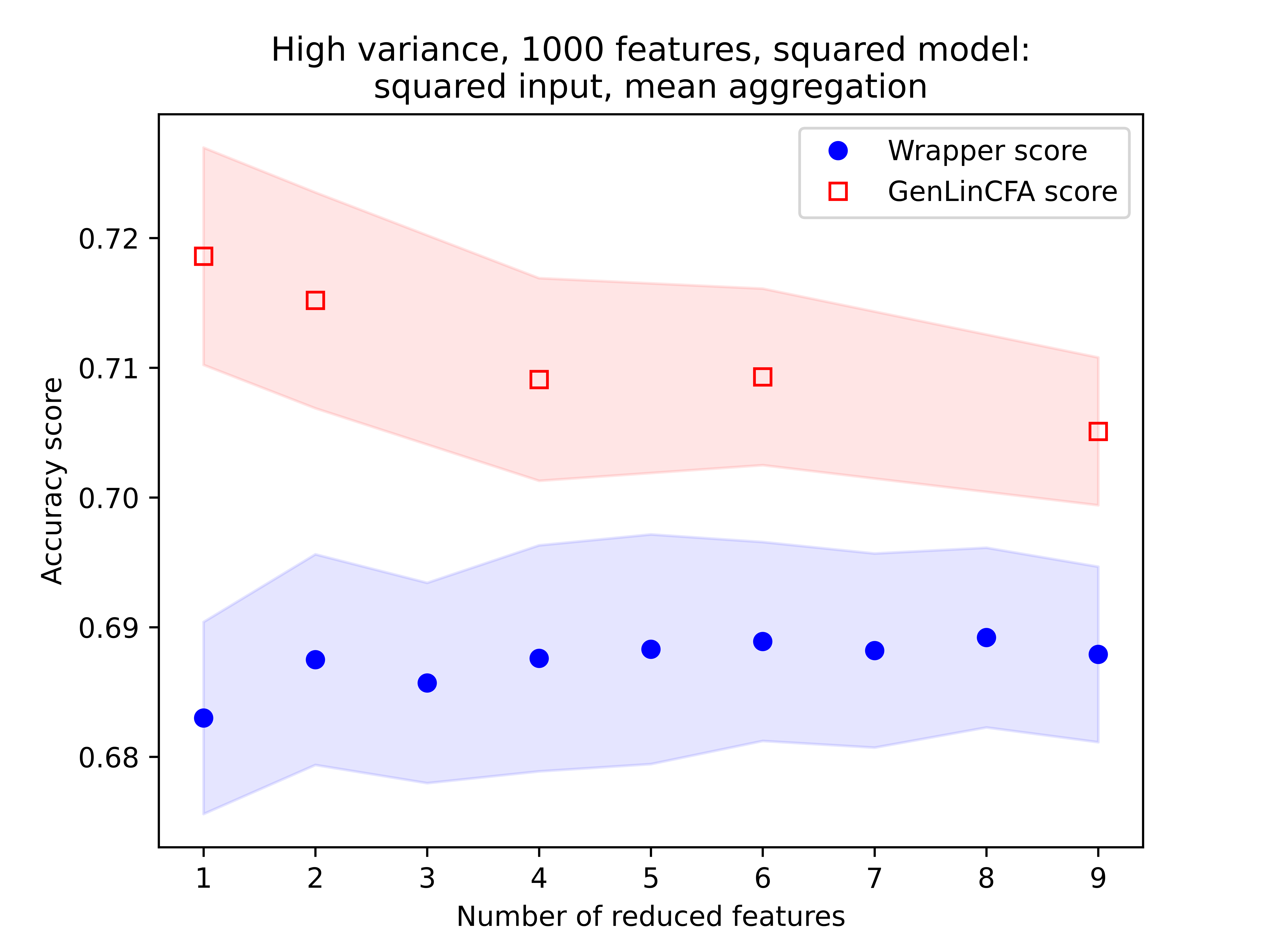}
         \caption{Classification results with standard deviation $\sigma=100$, $D=1000$ features, squared inputs, mean aggregations.}
         \label{fig:HighVar_1000Feat_quadr_class}
     \end{subfigure}
     \begin{subfigure}{0.48\textwidth}
         \centering
        \includegraphics[width=\textwidth]{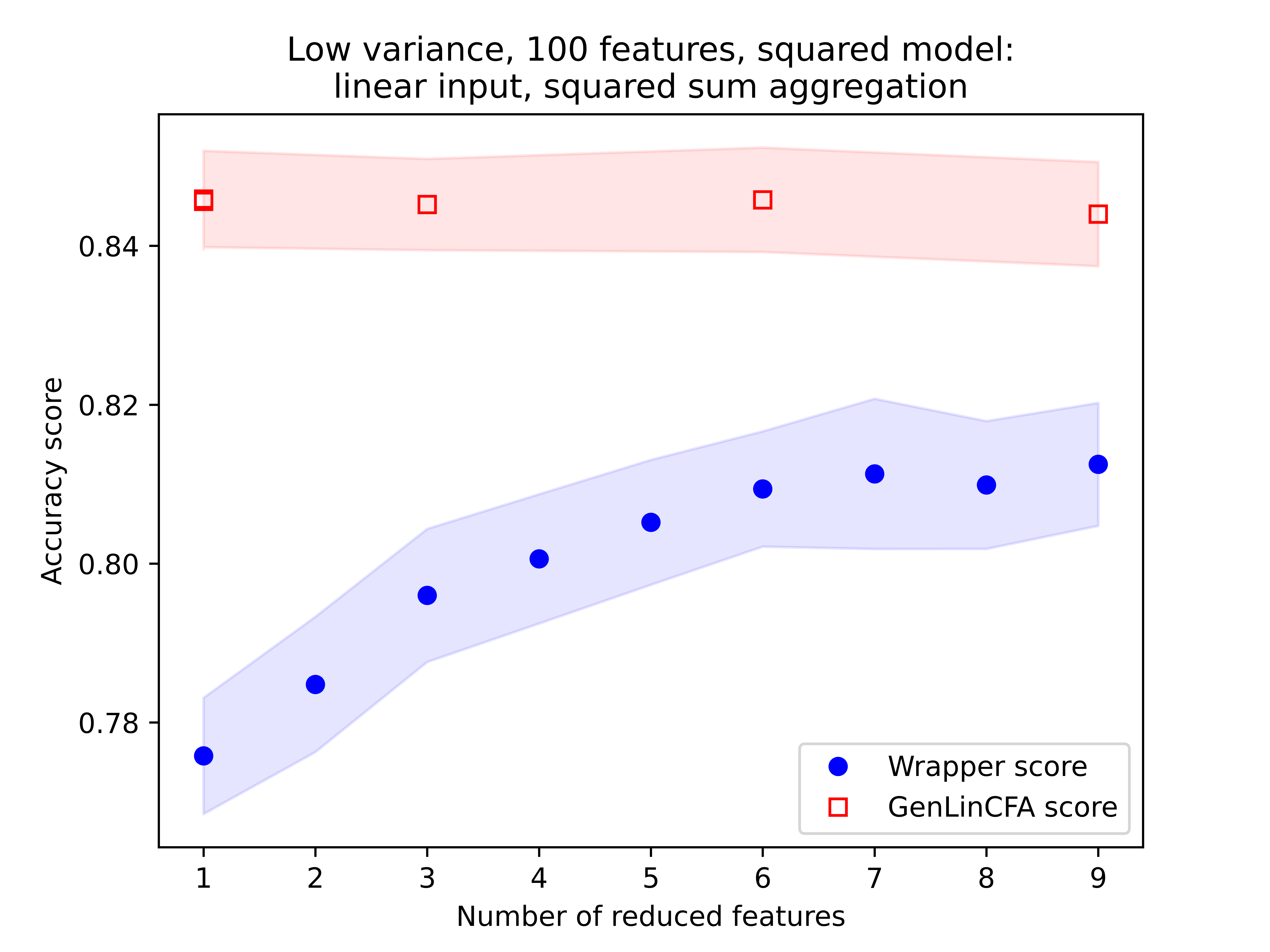}
         \caption{Classification results with standard deviation $\sigma=10$, $D=100$ features, linear inputs, sum of square aggregations.}
         \label{fig:LowVar_100Feat_sqsum_class}
     \end{subfigure}
     \hfill
     \begin{subfigure}{0.48\textwidth}
         \centering
         \includegraphics[width=\textwidth]{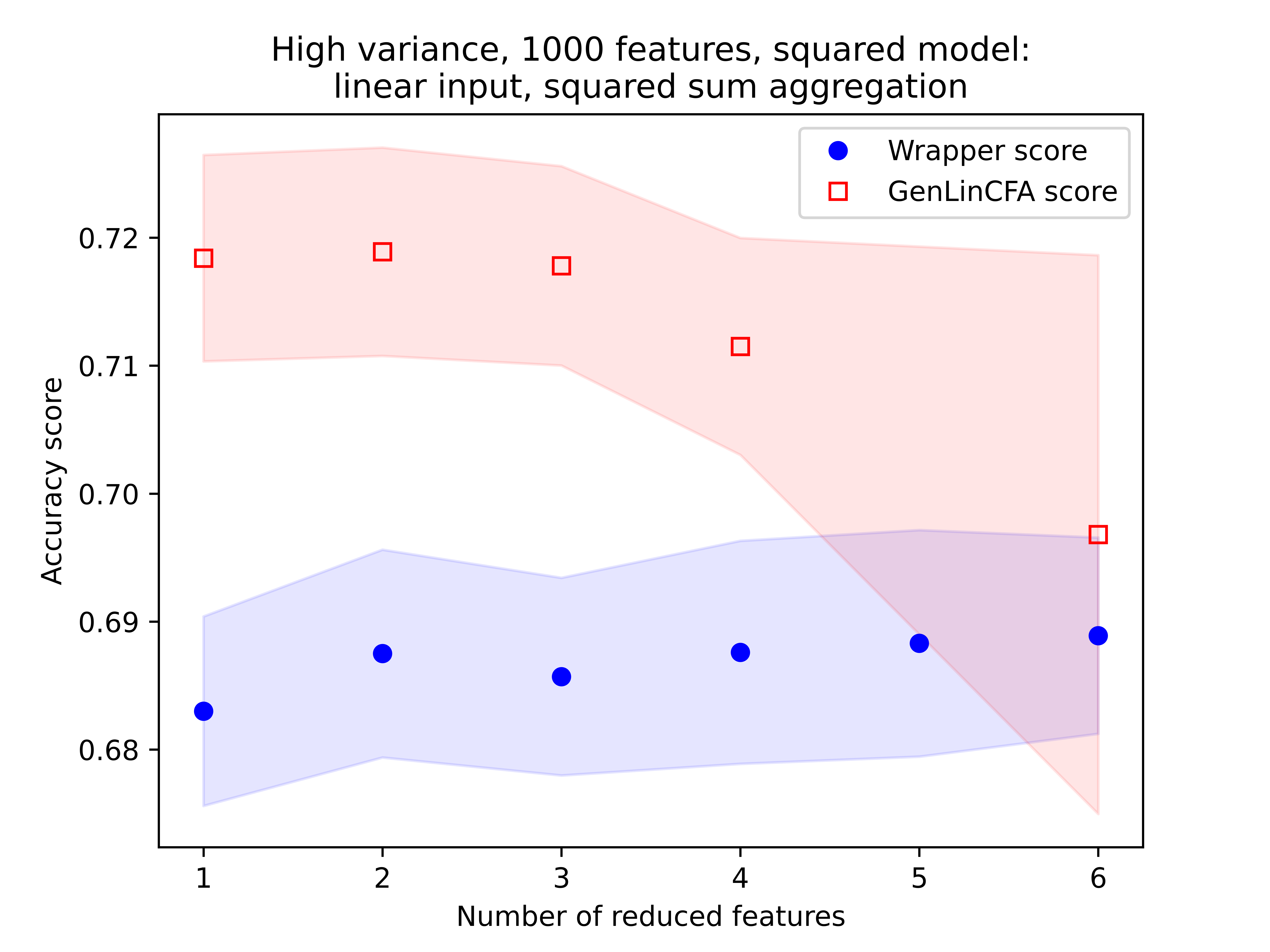}
         \caption{Classification results with standard deviation $\sigma=100$, $D=1000$ features, linear inputs, sum of square aggregations.}
         \label{fig:HighVar_1000Feat_sqsum_class}
     \end{subfigure}
        \caption{Application of \algnameshortGen in classification. Test performances and number of reduced features considering different hyperparameters, transformations of features and aggregation functions.}
        \label{fig:squaredSynthClass}
\end{figure}

\begin{table*}[ht]
\caption{95\% confidence intervals for quadratic synthetic experiments: different hyperparameters, continuous target.\label{tab:quadrResultsSynth}}
\centering
\resizebox{\textwidth}{!}
{\begin{tabular}{@{}cccccc@{}} \hline 
    & \multicolumn{5}{c}{\textbf{Noise variance $\mathbf{\sigma=10}$, number of features $\mathbf{D=100}$}}\\\hline
    & \multicolumn{5}{c}{\algnameshortNonlin, squared input, mean aggregation}\\
Hyperparameter & $\epsilon=0.01$ & $\epsilon=0.001$ & $\epsilon=0.0001$ & $\epsilon=0.00001$ & $\epsilon=0.000001$\\
Number reduced features $d$ & $1.0 \pm 0.0$ & $7.1\pm 0.85$ & $10.3\pm1.64$ & $12.9\pm 1.58$ & $12.4\pm1.39$ \\
$R2$ test score & $0.7436\pm 0.0096$ & $0.7469\pm 0.0097$ & $0.7483\pm 0.0099$ & $0.7479\pm0.0099$ & $0.7475\pm0.0096$\\ 
\hline
& \multicolumn{5}{c}{\algnameshortGen, squared input, mean aggregation}\\
Hyperparameter & $\epsilon=0.68$ & $\epsilon=0.70$ & $\epsilon=0.72$ & $\epsilon=0.74$ & $\epsilon=0.76$\\
Number reduced features $d$ & $17.6 \pm 0.85$ & $14.5\pm 1.64$ & $7.2\pm1.58$ & $4.9\pm 1.39$ & $3.0\pm1.39$ \\
$R2$ test score & $0.7462\pm 0.0104$ & $0.7459\pm 0.0101$ & $0.7453\pm 0.0093$ & $0.7455\pm0.0094$ & $0.7455\pm0.0096$\\ 
\hline
& \multicolumn{5}{c}{\algnameshortNonlin, linear input, square sum aggregation}\\
Hyperparameter & $\epsilon=0.01$ & $\epsilon=0.001$ & $\epsilon=0.0001$ & $\epsilon=0.00001$ & $\epsilon=0.000001$\\
Number reduced features $d$ & $1.0 \pm 0.0$ & $2.8\pm 0.46$ & $3.8\pm1.37$ & $4.1\pm 1.43$ & $4.1\pm1.43$ \\
$R2$ test score & $0.7457\pm 0.0090$ & $0.7450\pm 0.0091$ & $0.7460\pm 0.0094$ & $0.7442\pm0.0096$ & $0.7435\pm0.0096$\\ 
\hline
& \multicolumn{5}{c}{\algnameshortGen, linear input, square sum aggregation}\\
Hyperparameter & $\epsilon=0.68$ & $\epsilon=0.70$ & $\epsilon=0.72$ & $\epsilon=0.74$ & $\epsilon=0.76$\\
Number reduced features $d$ & $15.4 \pm 0.46$ & $10.9\pm 0.37$ & $6.3\pm0.41$ & $3.3\pm 0.43$ & $2.1\pm0.43$ \\
$R2$ test score & $0.7462\pm 0.0104$ & $0.7459\pm 0.0101$ & $0.7453\pm 0.0093$ & $0.7455\pm0.0094$ & $0.7455\pm0.0096$\\ 
\hline
& \multicolumn{5}{c}{\textbf{Noise variance $\mathbf{\sigma=100}$, number of features $\mathbf{D=1000}$}}\\\hline
    & \multicolumn{5}{c}{\algnameshortNonlin, squared input, mean aggregation}\\
Hyperparameter & $\epsilon=0.01$ & $\epsilon=0.001$ & $\epsilon=0.0001$ & $\epsilon=0.00001$ & $\epsilon=0.000001$\\
Number reduced features $d$ & $1.0 \pm 0.0$ & $5.9\pm 0.75$ & $11.6\pm0.69$ & $13.2\pm 0.61$ & $13.7\pm0.62$ \\
$R2$ test score & $0.4044\pm 0.0130$ & $0.3873\pm 0.0184$ & $0.3735\pm 0.0179$ & $0.3627\pm0.0201$ & $0.3649\pm0.0199$\\ 
\hline
& \multicolumn{5}{c}{\algnameshortGen, squared input, mean aggregation}\\
Hyperparameter & $\epsilon=0.68$ & $\epsilon=0.70$ & $\epsilon=0.72$ & $\epsilon=0.74$ & $\epsilon=0.76$\\
Number reduced features $d$ & $5.5 \pm 0.76$ & $3.5\pm 0.69$ & $2.0\pm 0.60$ & $1.2\pm 0.63$ & $1.1\pm 0.69$ \\
$R2$ test score & $0.3772\pm 0.0156$ & $0.3760\pm 0.0148$ & $0.3986\pm 0.0127$ & $0.4037\pm 0.0128$ & $0.4043\pm 0.0130$\\ 
\hline
    & \multicolumn{5}{c}{\algnameshortNonlin, linear input, square sum aggregation}\\
Hyperparameter & $\epsilon=0.01$ & $\epsilon=0.001$ & $\epsilon=0.0001$ & $\epsilon=0.00001$ & $\epsilon=0.000001$\\
Number reduced features $d$ & $1.0 \pm 0.0$ & $6.3\pm 0.78$ & $15.5\pm1.55$ & $17.7\pm 1.92$ & $18.0\pm1.84$ \\ 
$R2$ test score & $0.4043\pm 0.0130$ & $0.4006\pm 0.0134$ & $0.3843\pm 0.0140$ & $0.3785\pm0.0156$ & $0.3772\pm0.0166$\\ 
\hline
& \multicolumn{5}{c}{\algnameshortGen, linear input, square sum aggregation}\\
Hyperparameter & $\epsilon=0.68$ & $\epsilon=0.70$ & $\epsilon=0.72$ & $\epsilon=0.74$ & $\epsilon=0.76$\\
Number reduced features $d$ & $12.7 \pm 0.78$ & $7.2\pm 1.55$ & $4.1\pm 1.92$ & $2.4\pm 1.83$ & $1.7\pm 1.61$ \\
$R2$ test score & $0.3995\pm 0.0122$ & $0.4020\pm 0.0126$ & $0.4037\pm 0.0129$ & $0.4043\pm 0.0130$ & $0.4042\pm 0.0130$\\ 
\hline
\end{tabular}}
\end{table*}

\begin{table*}[ht]
\caption{95\% confidence intervals for quadratic synthetic experiments: different hyperparameters, binary target.\label{tab:quadraticResultsSynthClass}}
\centering
\resizebox{\textwidth}{!}
{\begin{tabular}{@{}cccccc@{}} \hline 
    & \multicolumn{5}{c}{\textbf{Noise variance $\mathbf{\sigma=10}$, number of features $\mathbf{D=100}$}}\\\hline
& \multicolumn{5}{c}{\algnameshortGen, squared input, mean aggregation}\\
Hyperparameter & $\epsilon=0.77$ & $\epsilon=0.79$ & $\epsilon=0.81$ & $\epsilon=0.85$ & $\epsilon=0.95$\\
Number reduced features $d$ & $30.2 \pm 1.97$ & $22.9\pm 2.01$ & $18.5\pm0.84$ & $13.0\pm 1.17$ & $1.0\pm0.0$ \\
$Accuracy$ test score & $0.8412\pm 0.0061$ & $0.8403\pm 0.0060$ & $0.8436\pm 0.0045$ & $0.8431\pm0.0061$ & $0.8450\pm0.0057$\\ 
\hline
& \multicolumn{5}{c}{\algnameshortGen, linear input, square sum aggregation}\\
Hyperparameter & $\epsilon=0.77$ & $\epsilon=0.79$ & $\epsilon=0.81$ & $\epsilon=0.85$ & $\epsilon=0.95$\\
Number reduced features $d$ & $8.8 \pm 1.38$ & $5.5\pm 1.57$ & $3.2\pm1.03$ & $1.4\pm 0.41$ & $1.0\pm0.0$ \\
$Accuracy$ test score & $0.8440\pm 0.0065$ & $0.8457\pm 0.0066$ & $0.8452\pm 0.0057$ & $0.8459\pm0.0061$ & $0.8455\pm0.0060$\\ \hline 
& \multicolumn{5}{c}{\textbf{Noise variance $\mathbf{\sigma=100}$, number of features $\mathbf{D=1000}$}}\\\hline
& \multicolumn{5}{c}{\algnameshortGen, squared input, mean aggregation}\\
Hyperparameter & $\epsilon=0.77$ & $\epsilon=0.79$ & $\epsilon=0.81$ & $\epsilon=0.85$ & $\epsilon=0.95$\\
Number reduced features $d$ & $9.4 \pm 1.15$ & $5.9\pm 0.65$ & $3.5\pm0.50$ & $1.6\pm 0.41$ & $1.0\pm0.0$ \\
$Accuracy$ test score & $0.7051\pm 0.0056$ & $0.7093\pm 0.0068$ & $0.7091\pm 0.0078$ & $0.7152\pm0.0083$ & $0.7186\pm0.0083$\\ 
\hline
& \multicolumn{5}{c}{\algnameshortGen, linear input, square sum aggregation}\\
Hyperparameter & $\epsilon=0.77$ & $\epsilon=0.79$ & $\epsilon=0.81$ & $\epsilon=0.85$ & $\epsilon=0.95$\\
Number reduced features $d$ & $6.1 \pm 2.22$ & $3.7\pm 1.33$ & $2.6\pm0.63$ & $1.6\pm 0.30$ & $1.3\pm0.28$ \\
$Accuracy$ test score & $0.6968\pm 0.0218$ & $0.7114\pm 0.0084$ & $0.7178\pm 0.0077$ & $0.7189\pm0.0081$ & $0.7184\pm0.0080$\\ 
\hline
\end{tabular}}
\end{table*}

\subsubsection{Real-World experiments}
Table \ref{tab:realRes} and Table \ref{tab:realRes2} report the confidence intervals associated to five different repetitions of the experiments, in terms of coefficient of determination for regression and accuracy for classification. The \emph{Finance} dataset has been retrieved from Kaggle (https://www.kaggle.com/datasets/dgawlik/nyse). The \emph{Bankruptcy} and \emph{Parkinson} datasets have been retrieved from the UCI ML repository (https://archive.ics.uci.edu/ml/datasets/Parkinson\%27s+Disease+Classification, https://archive.ics.uci.edu/ml/datasets/Polish+companies+bankruptcy+data). 

The climate datasets is composed of continuous climatological features and a scalar target which represents the state of vegetation of a basin of Po river. The first one (\emph{Climate, Climate(Class.)}) also considers the state of the vegetation of neighbouring basins as inputs, while the second one is a more difficult problem since it tries to predict it only from temperature and precipitation features. This datasets have been composed by the authors merging different sources for the vegetation index, temperature and precipitation over different basins (see~\citep{Didan2015,Cornes2018,Zellner2022}), and they are available in the repository of this work.

The two tables show the performances of the \algnameshortNonlin considering five different hyperparameters ($\epsilon \in \{0.01,0.001,0.0001,0.00001,0.000001\}$). The same holds for the \algnameshortGen algorithm, where the hyperparameters have been selected in order to show some aggregations that are not too small (everything is aggregated) or is too large (more than $50$ reduced features). The state of the art methods applied have all been considered identifying the best validation performance between $1$ and $50$ reduced features, when possible. The results, discussed also in the main paper, show better or competitive results with respect to the baselines.

\begin{table}[]
\caption{Experiments on climate datasets. The total number of samples $n$ has been divided into train (66\% of data) and test (33\% of data) sets.
\label{tab:realRes}}
\centering 
\centering
\resizebox{\textwidth}{!}
{\begin{tabular}{@{}ccccc@{}} \hline 
Quantity & Climate & Climate (Class.) & Climate II & Climate II (class.) \\\hline 
\# samples $n$ & $981$ & $981$ & $867$ & $867$ \\
\# features $D$ & $1991$ & $1991$ & $2408$ & $2408$ \\
\hline
\textbf{Reduced Dimension} & & & & \\
\algnameshortNonlin($\epsilon=0.01$)  & $7.4\pm 1.3$ & NA & $7.0\pm 0.9$ & NA\\
\algnameshortNonlin($\epsilon=0.001$)  & $\mathbf{16.0\pm 0.8}$ & NA & $11.4\pm 0.9$ & NA\\
\algnameshortNonlin($\epsilon=0.0001$)  & $19.4\pm1.4$ & NA & $12.2\pm 0.6$ & NA\\
\algnameshortNonlin($\epsilon=0.00001$) & $19.6\pm 0.9$ & NA & $12.4\pm 0.7$ & NA \\
\algnameshortNonlin($\epsilon=0.000001$) & $19.6\pm 1.3$  & NA & $12.4\pm 0.7$ & NA\\
\algnameshortGen ($\epsilon=\epsilon_1$) & $31.6\pm 10.9$ & $33.75\pm 11.5$ & $13.6\pm4.8$ & $34.0\pm 6.1$\\
\algnameshortGen ($\epsilon=\epsilon_2$) & $26.0\pm 9.1$ & $27.0\pm 8.2$ & $7.2\pm 1.1$ & $26.0\pm 6.1$\\
\algnameshortGen ($\epsilon=\epsilon_3$) & $18.6\pm 5.4$ & $22.3\pm 5.7$ & $4.6\pm 0.9$ & $18.7\pm 5.4$ \\
\algnameshortGen ($\epsilon=\epsilon_4$) & $15.0\pm 4.3$ & $21.3\pm 5.2$ & $2.4\pm 0.4$ & $11.0\pm 4.5 $\\
\algnameshortGen ($\epsilon=\epsilon_5$)  & $13.8\pm 3.0$ & $\mathbf{17.5\pm 3.3}$ & $2\pm 0$ & $2.5\pm 0.4$\\
LinCFA & $38.2\pm 1.6$ & NA & $222.0\pm 2.7$ & NA\\
PCA  & $18.2\pm 0.7$ & $18.2\pm 0.7$ & $29.4\pm 0.1$ & $29.4\pm 0.4$ \\
LDA & NA & $1$ & NA & 1\\
Kernel PCA & $35.6\pm 7.9$ & $27.0\pm 4.8$ & $\mathbf{21.8\pm 9.5}$ & $\mathbf{11.2\pm 2.3}$ \\
Isomap & $2.6\pm 0.7$ & $12.6\pm 10.8$ & $42.4\pm13.3$ & $16.6\pm 9.2$\\
LLE & $15.4\pm 13.8$ & $28.2\pm 12.9$ & $35.0\pm 10.7$ & $25.8\pm 13.1$\\
Supervised PCA  & $41.8\pm 2.5$  & $37.0\pm 6.6$ & $7.4\pm 3.1$ & $13.8\pm 4.9$\\
NCA & NA & $24.2\pm 2.9$ & NA & $7.0\pm 0.6$\\
\hline
\textbf{Test performance} & \textbf{$\mathbf{R^2}$ score} & \textbf{Accuracy score}  & \textbf{$\mathbf{R^2}$ score} & \textbf{Accuracy score}\\ 
\algnameshortNonlin($\epsilon=0.01$) & $0.8524\pm 0.0407$ & NA & $0.2949\pm 0.0156$ & NA\\
\algnameshortNonlin($\epsilon=0.001$) & $\mathbf{0.9395\pm 0.0125}$ & NA & $0.2547\pm 0.0182$ & NA \\
\algnameshortNonlin($\epsilon=0.0001$) & $0.9124\pm 0.0055$ & NA & $0.2541\pm 0.0121$ & NA \\
\algnameshortNonlin($\epsilon=0.00001$) & $0.9113\pm 0.0056$ & NA & $0.2529\pm 0.0148$ & NA\\
\algnameshortNonlin($\epsilon=0.000001$) & $0.9121\pm 0.0047$ & NA & $0.2530\pm 0.0146$ & NA \\
\algnameshortGen ($\epsilon=\epsilon_1$) & $0.9194\pm 0.0039$ & $0.9068\pm 0.0037$ & $0.2439\pm0.0246$ & $0.6820\pm 0.0199$\\
\algnameshortGen ($\epsilon=\epsilon_2$) & $0.9226\pm 0.0026$ & $0.9075\pm 0.0029$ & $0.2841\pm0.0051$ & $0.7127\pm 0.0159$\\
\algnameshortGen ($\epsilon=\epsilon_3$) & $0.9207\pm 0.0052$ & $0.9056\pm 0.0039$ & $0.2764\pm0.0069$ & $0.7094\pm 0.0181$\\
\algnameshortGen ($\epsilon=\epsilon_4$) & $0.9269\pm 0.0012$ & $0.9062\pm 0.0036$ & $0.2493\pm0.0125$ & $0.7061\pm 0.0105$\\
\algnameshortGen ($\epsilon=\epsilon_5$) & $0.9275\pm 0.0004$ & $\mathbf{0.9107\pm 0.0022}$ & $0.2269\pm0.0157$ & $0.6776\pm 0.0159$\\
LinCFA & $0.9007\pm 0.031$ & NA & $-1.2861\pm 0.2322$ & NA \\ 
PCA  & $0.7536\pm 0.019$ & $0.8515\pm 0.0054$ & $0.1917\pm 0.0395$ & $0.6868\pm 0.0147$ \\
LDA & NA & $0.7357\pm 0.0188$ & NA & $0.5526\pm 0.0224$\\
Kernel PCA & $0.7990\pm 0.0061$ & $0.8698\pm 0.0081$ & $\mathbf{0.3889\pm 0.0199}$ & $\mathbf{0.7640\pm 0.0062}$\\
Isomap & $0.1354\pm 0.0118$ & $0.6443\pm 0.0041$ & $0.3216\pm 0.0146$ & $0.7360\pm 0.0145$\\
LLE & $0.1149\pm 0.0139$ & $0.6444\pm 0.0052$ & $0.3102\pm 0.0367$ & $0.7456\pm 0.0140$\\
Supervised PCA & $0.8454\pm 0.0049$  & $0.8827\pm 0.0098$ & $0.3835\pm 0.0230$ & $0.7482\pm 0.0067$\\
NCA & NA & $0.8776\pm 0.0086$ & NA & $0.7638\pm 0.0051$\\
\hline
\end{tabular}}
\end{table}

\begin{table}[]
\caption{Experiments on real world datasets. The total number of samples $n$ has been divided into train (66\% of data) and test (33\% of data) sets.
\label{tab:realRes2}}
\centering 
\centering
\resizebox{\textwidth}{!}
{\begin{tabular}{@{}cccc@{}} \hline 
Quantity & Finance & Bankruptcy (class.) & Parkinson (class.) \\\hline 
\# samples $n$ & $1299$ & $1084$ & $384$ \\
\# features $D$ & $75$ & $65$ & $753$ \\
\hline
\textbf{Reduced Dimension} & & & \\
\algnameshortNonlin($\epsilon=0.01$)  & $5.6\pm 0.4$ & NA & NA\\
\algnameshortNonlin($\epsilon=0.001$)  & $7.2\pm 0.7$ & NA & NA\\
\algnameshortNonlin($\epsilon=0.0001$)  & $7.4\pm 0.4$ & NA & NA\\
\algnameshortNonlin($\epsilon=0.00001$) & $7.4\pm 0.4$ & NA & NA \\
\algnameshortNonlin($\epsilon=0.000001$) & $\mathbf{7.4\pm 0.4}$  & NA & NA\\
\algnameshortGen ($\epsilon=\epsilon_1$) & $15.2\pm 1.2$ & $27.6\pm 5.6$ & $53.4\pm 5.9$ \\
\algnameshortGen ($\epsilon=\epsilon_2$) & $8.0\pm 1.4$ & $16.4\pm 3.4$ & $26.4\pm 1.2$\\
\algnameshortGen ($\epsilon=\epsilon_3$) & $4.8\pm 0.4$ & $9.8\pm 1.5$ & $23.4\pm 1.1$\\
\algnameshortGen ($\epsilon=\epsilon_4$) & $4.2\pm 0.4$ & $8.2\pm 0.4$ & $20.2\pm 1.5$\\
\algnameshortGen ($\epsilon=\epsilon_5$)  & $3.8\pm 0.3$ & $7.0\pm 0.6$ & $14.2\pm 1.3$\\
LinCFA & $11.4\pm 0.7$ & NA & NA\\
PCA  & $26.6\pm 0.4$ & $4.8\pm 0.9$ & $77.6\pm 3.8$\\
LDA & NA & $1$ & $1$ \\
Kernel PCA & $36.0\pm 10.8$ & $13.8\pm 5.1$ & $32.4\pm 7.7$ \\
Isomap & $27.2\pm 6.8$ & $21.2\pm9.8$ & $14.8\pm 9.3$\\
LLE & $44.0\pm 7.3$ & $1$ & $39.6\pm 4.6$\\
Supervised PCA  & $31.0\pm 13.8$ & $16.2\pm 7.5$ & $21.7\pm 3.4$\\
NCA & NA & $\mathbf{12.0\pm 9.5}$ & $28.2\pm 15.3$\\
\hline
\textbf{Test performance} & \textbf{$\mathbf{R^2}$ score} & \textbf{Accuracy score}  &  \textbf{Accuracy score}\\ 
\algnameshortNonlin($\epsilon=0.01$)  & $0.8131\pm 0.0032$ & NA & NA\\
\algnameshortNonlin($\epsilon=0.001$)  & $0.8061\pm 0.0076$ & NA & NA\\
\algnameshortNonlin($\epsilon=0.0001$)  & $0.8133\pm 0.0037$ & NA & NA\\
\algnameshortNonlin($\epsilon=0.00001$) & $0.8133\pm 0.0037$ & NA & NA \\
\algnameshortNonlin($\epsilon=0.000001$) & $\mathbf{0.8136\pm 0.0036}$ & NA & NA\\
\algnameshortGen ($\epsilon=\epsilon_1$) & $0.8104\pm 0.0015$ & $0.7503\pm0.0012$ & $0.7984\pm 0.0112$\\
\algnameshortGen ($\epsilon=\epsilon_2$) & $0.8119\pm 0.0010$ & $0.7480\pm0.0018$ & $0.7647\pm 0.0130$\\
\algnameshortGen ($\epsilon=\epsilon_3$) & $0.8101\pm 0.0003$ & $0.7430\pm0.0044$ & $0.8016\pm 0.0069$\\
\algnameshortGen ($\epsilon=\epsilon_4$) & $0.8114\pm 0.0003$ & $0.7413\pm0.0025$ & $0.7808\pm 0.0071$\\
\algnameshortGen ($\epsilon=\epsilon_5$)  & $0.8107\pm 0.0009$ & $0.7408\pm0.0024$ & $0.7712\pm 0.0095$\\
LinCFA & $0.8010\pm 0.0128$ & NA & NA\\
PCA  & $0.7559\pm 0.0027$ & $0.7413\pm 0.0037$ & $0.7840\pm 0.0117$\\
LDA & NA & $0.7587\pm 0.0065$ & $0.7632\pm 0.0489$\\
Kernel PCA & $0.7764\pm 0.0118$ & $0.7592\pm 0.0061$ & $0.7920\pm 0.0177$ \\
Isomap & $0.2610\pm .0458$ & $0.7463\pm 0.0039$ & $0.7856\pm 0.0112$\\
LLE & $0.7281\pm 0.0267$ & $0.7402\pm 0$ & $0.7696\pm 0.0525$\\
Supervised PCA  & $0.7731\pm 0.0126$ & $0.7508\pm 0.0018$ & $0.7913\pm 0.0069$\\
NCA & NA & $\mathbf{0.7637\pm 0.0079}$ & $0.7952\pm 0.0181$\\
\hline
\end{tabular}}
\end{table}

\end{document}